\PassOptionsToPackage{square,numbers}{natbib}
\PassOptionsToPackage{dvipsnames}{xcolor}
\documentclass{article}
\usepackage[preprint]{neurips_2026}
\usepackage{algorithm,algorithmic}
\usepackage{tikz}
\usetikzlibrary{positioning, calc, arrows.meta, backgrounds}
\usepackage{graphicx}
\usepackage[utf8]{inputenc}
\usepackage[T1]{fontenc}
\usepackage{tcolorbox}
\usepackage{multirow}
\usepackage[square,numbers]{natbib}
\usepackage{url}
\usepackage{booktabs}
\usepackage{amsfonts}
\usepackage{nicefrac}
\usepackage{microtype}
\usepackage{float}
\usepackage{needspace}
\usepackage{amsthm}
\usepackage{amssymb}
\usepackage{mathtools}
\usepackage{bm}
\usepackage{svg}
\newtheorem{definition}{Definition}
\newtheorem{theorem}{Theorem}
\newtheorem{lemma}{Lemma}
\newtheorem{corollary}{Corollary}
\newtheorem{proposition}{Proposition}
\newtheorem{assumption}{Assumption}
\newtheorem{remark}{Remark}

\newcommand{\Vin}{\mathcal{V}_K^\star}
\newcommand{\Vout}{\mathcal{V}_M^\star}
\newcommand{\xin}{\bm{x}_{\textnormal{in}}}
\newcommand{\xout}{\bm{x}_{\textnormal{out}}}

\definecolor{codegreen}{rgb}{0,0.6,0}
\definecolor{codegray}{rgb}{0.5,0.5,0.5}
\definecolor{codepurple}{rgb}{0.58,0,0.82}
\definecolor{backcolour}{rgb}{0.97,0.97,0.94}
\usepackage{listings}
\lstdefinestyle{pythonstyle}{
    backgroundcolor=\color{backcolour},
    commentstyle=\color{codegreen},
    keywordstyle=\color{magenta},
    numberstyle=\tiny\color{codegray},
    stringstyle=\color{codepurple},
    basicstyle=\ttfamily\scriptsize,
    breakatwhitespace=false,
    breaklines=true,
    captionpos=b,
    keepspaces=true,
    numbers=left,
    numbersep=3pt,
    showspaces=false,
    showstringspaces=false,
    showtabs=false,
    tabsize=4,
    aboveskip=6pt,
    belowskip=6pt,
    language=Python
}
\setcitestyle{numbers}
\setcitestyle{square}
\title{Entropy Distribution as a Fingerprint for Hallucinations in Generative Models}

\author{
  Mattia J. Villani\thanks{Equal contribution. Email: \texttt{\{mattia.villani, pranav.deshpande, akshay.seshadri\}@jpmchase.com.}}
  \quad
  Pranav Deshpande$^*$ \quad
  Akshay Seshadri$^*$ \quad
  \\\textbf{Romina Yalovetzky} \quad
  \textbf{Niraj Kumar}\thanks{Principal Investigator. Email: \texttt{niraj.x7.kumar@jpmchase.com}}\\
  \\Global Technology Applied Research, JPMorganChase, New York, NY 10001, USA\\
}

\begin{document}

\maketitle

\begin{abstract}
Large Language Models (LLMs) often generate factually incorrect outputs, commonly termed hallucinations, that undermine trust and limit deployment in high-stakes settings.
Existing hallucination detection methods typically require multiple forward passes, or access to model internals.
In this work, we provide theoretical background and empirical evidence that the \emph{distribution} of token-level entropies, beyond the mean captured by perplexity or length-normalised entropy, serves as a fingerprint of hallucination, with distributional shape and tail behaviour carrying independent signal.
We formalize hallucination detection as a statistical hypothesis test and propose the \textbf{C}alibrated \textbf{E}ntropy \textbf{S}core (\textbf{CES}), a lightweight algorithm requiring only a single forward pass and black-box access to token logits.
CES combines the mean signal with the maximum signal of the generated entropy through a calibrated reference CDF, producing scores that are directly comparable across models and tasks.
We establish finite-sample calibration guarantees via a novel random-length Dvoretzky--Kiefer--Wolfowitz inequality, and also prove that CES detects hallucinations with probability converging to one exponentially fast in the generation length.
Across eight QA benchmarks and ten generator models spanning open-source and API access models, CES achieves the highest detection performance among all single-pass black-box methods while providing formal error guarantees that existing heuristics lack.
Remarkably, CES is statistically indistinguishable from multi-sample methods that require far greater computational cost, closing the gap between lightweight and expensive detection and making it suitable for real-time, large-scale deployment.
\end{abstract}

\begin{figure}[h]
\centering
\resizebox{0.9\textwidth}{!}{%
\begin{tikzpicture}[
    >=stealth,
    font=\small,
    node distance=0.6cm,
    llmbox/.style={draw=black!80, fill=blue!8, rounded corners=3pt,
                   minimum height=0.9cm, minimum width=1.8cm, align=center,
                   line width=0.7pt},
    procbox/.style={draw=black!70, fill=orange!10, rounded corners=3pt,
                    minimum height=0.7cm, minimum width=1.5cm, align=center,
                    line width=0.6pt},
    statbox/.style={draw=black!70, rounded corners=3pt,
                    minimum height=0.85cm, minimum width=2.2cm, align=center,
                    line width=0.6pt},
    combbox/.style={draw=black!80, fill=red!6, rounded corners=5pt,
                    minimum height=0.85cm, minimum width=3.6cm, align=center,
                    line width=0.7pt},
    decbox/.style={draw=black!80, fill=green!8, rounded corners=5pt,
                   minimum height=0.75cm, minimum width=2.4cm, align=center,
                   line width=0.7pt},
    calbox/.style={draw=violet!50, fill=violet!6, rounded corners=3pt,
                   minimum height=0.85cm, minimum width=2cm, align=center,
                   line width=0.6pt, dashed},
    arr/.style={->, thick, color=black!65},
    darr/.style={->, thick, color=violet!55, dashed},
    lbl/.style={font=\tiny, text=black!60},
]

\node[llmbox] (llm) at (-0.5, 0) {\textbf{LLM} $f_\theta$};
\node[left=0.8cm of llm, font=\small] (prompt) {$\bm{x}_{\textnormal{in}}$};
\draw[arr] (prompt) -- (llm);

\node[right=0.65cm of llm, font=\small, align=left] (tokens) {$\bm{p}^{(1)},\ldots,\bm{p}^{(m)}$};
\draw[arr] (llm) -- (tokens);

\node[procbox] (entropy) at (5.5, 0) {\textbf{Entropy} $h^{(t)}\!=\!H(\bm{p}^{(t)})$};
\draw[arr] (tokens) -- (entropy);

\begin{scope}[shift={(8.0, 0)}]
  \draw[black!30, line width=0.3pt] (0,0) -- (1.2,0);
  \foreach \x/\ht/\col in {
    0.00/0.18/green!60!, 0.12/0.14/green!60,
    0.24/0.22/green!65, 0.36/0.16/green!60,
    0.48/0.20/green!60, 0.60/0.40/orange!80!red,
    0.72/0.50/red!65,  0.84/0.55/red!70,
    0.96/0.45/orange!75!red, 1.08/0.32/orange!65
  }{
    \fill[\col, opacity=0.8] (\x, 0) rectangle (\x+0.08, \ht);
  }
  \node[font=\tiny, text=black!50] at (0.6, -0.15) {$t$};
  \node[font=\tiny, above] at (0.6, 0.58) {$\bm{h}$};
\end{scope}


\node[statbox, fill=cyan!10] (meanstat) at (3.5, -1.8) {%
  \textbf{Mean signal}\\[-2pt]
  {\tiny $\widehat{F}_0(\bar{h})$}
};
\node[statbox, fill=yellow!15] (maxstat) at (7.5, -1.8) {%
  \textbf{Max signal}\\[-2pt]
  {\tiny $\widehat{F}_0(h_{\max})$}
};

\draw[arr] (entropy.south) -- ++(0, -0.4) -| (meanstat.north)
    node[lbl, pos=0.75, left] {$\bar{h}$};
\draw[arr] (entropy.south) -- ++(0, -0.4) -| (maxstat.north)
    node[lbl, pos=0.75, right] {$h_{\max}$};

\node[font=\tiny, text=cyan!45!black, below left=0.05cm and -1.2cm of meanstat] {distributional shift};
\node[font=\tiny, text=yellow!40!black, below right=0.05cm and -1cm of maxstat] {tail anomaly};

\node[combbox] (ces) at (5.5, -3.4) {%
  \textbf{CES} \;
  $\sqrt{\widehat{F}_0(\bar{h})\;\cdot\;\widehat{F}_0(h_{\max})}$
};

\draw[arr, color=cyan!55!black] (meanstat.south) -- ++(0, -0.35) -| ([xshift=-0.6cm]ces.north)
    node[lbl, pos=0.3, below left] {$\widehat{F}_0(\bar{h})$};
\draw[arr, color=yellow!50!black] (maxstat.south) -- ++(0, -0.35) -| ([xshift=0.6cm]ces.north)
    node[lbl, pos=0.3, below right] {$\widehat{F}_0(h_{\max})$};

\node[decbox] (decision) at (5.5, -4.7) {%
  Reject $\mathcal{H}_0$ if $\mathrm{CES} > c_\alpha$
};
\draw[arr] (ces) -- (decision) node[lbl, midway, right, xshift=1pt] {CES};

\node[font=\small, right=0.5cm of decision, align=left] (outlbl) {%
  {\color{red!65!black}\boldmath$\times$\unboldmath\; Hallucination}\\[1pt]
  {\color{green!50!black}\checkmark\; Faithful}
};
\draw[arr] (decision.east) -- (outlbl.west);

\node[calbox, minimum height=1cm] (calib) at (-0.2, -1.8) {%
  \textbf{Calibration}\\[-2pt]
  {\tiny Oracle $\mathcal{O} \!\to\! \widehat{F}_0$}
};

\draw[darr] (llm.south) -- ++(0, -0.45) -| (calib.north)
    node[lbl, pos=0.25, left, text=violet!50] {\tiny cal.\ data};
\draw[darr] (calib.east) -- (meanstat.west)
    node[lbl, midway, above, text=violet!55] {$\widehat{F}_0$};
\draw[darr] (calib.south) -- ++(0, -0.06) -| ([xshift=-0.5cm]maxstat.west) -- (maxstat.south west |- maxstat.west)
    node[lbl, midway, above left, text=violet!55] {$\widehat{F}_0$};

\node[draw=black!25, rounded corners=2pt, fill=white,
      font=\tiny, align=left, inner sep=3pt] at (10.2, -3.4) {%
  $\mathcal{H}_0$: $\bm{h} \sim F_0$\\[1pt]
  $\mathcal{H}_1$: $\bm{h} \not\sim F_0$
};

\end{tikzpicture}%
}
\caption{%
\textbf{Calibrated Entropy Score (CES) hallucination detection.}
A single forward pass produces token distributions $\bm{p}^{(t)}$, from which entropies $h^{(t)}$ are computed.
Two summary statistics (the mean entropy $\bar{h}$ and the maximum entropy $h_{\max}$) are mapped through the calibrated CDF $\widehat{F}_0$ and combined via a geometric mean.
The reference CDF, $\widehat{F}_0$, is estimated offline from an oracle-labeled calibration set (dashed path; Algorithm~\ref{alg:calibration}).
}
\label{fig:method_overview}
\end{figure}
\section{Introduction}

Despite Large Language Models (LLMs) fluency in generating text and solving an increasing number of tasks, these models are prone to fabricate information, a phenomenon
known as hallucination \cite{xu2024hallucination}. 
This hinders the trust towards the model, limiting the extent to which tasks can be delegated. Mitigating, eliminating, or detecting hallucinations is an increasingly important research effort \cite{ji2023survey, huang2025survey}. 
A growing body of work seeks to detect hallucinations post-generation, broadly falling into three families: uncertainty-quantification methods that study the output trace \cite{jelinek1977perplexity, malinin2020uncertainty, huang2025look, xia2025survey}, consistency-based methods that compare multiple generations \cite{farquhar2024detecting, nikitin2024kernel, manakul2023selfcheckgpt}, and elicitation-based methods that prompt the model to self-evaluate \cite{pedapati2024large, kadavath2022language}.
Despite empirical progress, these approaches share important limitations: consistency methods require $K \geq 5$ forward passes, internal-state methods require access to hidden representations \cite{duan2024llms, chen2024inside}, and single-pass heuristics such as perplexity or length-normalized entropy reduce the full entropy signal to a single scalar, discarding distributional information.

Model output logits and entropy have been shown to possess predictive power in detecting hallucinations. 
The most successful approaches to date estimate perplexity \cite{ren2022out} and length-normalized entropy \cite{malinin2020uncertainty}, while even generation length has been found to correlate with the presence of hallucinations. 
These methods rest on the intuition that entropy, as a measure of uncertainty, can signal whether a model is confident in its next-token prediction. 
However, they typically reduce the full entropy signal to a single scalar summary, such as a mean or normalized aggregate, potentially discarding richer distributional information that could further improve detection.

In this work, we gather evidence that the distributional information is important.
We demonstrate empirically that the entropy distributions over the tokens are fingerprints of hallucinations: \textit{when a model hallucinates, the underlying distribution from which the token entropies are sampled, diverges from the distribution observed when generations are faithful}.
Crucially, this divergence is not merely a shift in mean entropy:
we demonstrate through a synthetic mean-shift null experiment (Section~\ref{sec:exp_distributions}) that after
removing the mean difference entirely, the \emph{shape} of the
entropy distribution carries substantial independent signal.

This motivates our proposed hallucination detection method:
without requiring queries to the LLMs or model internals, we formalize hallucination detection as a statistical hypothesis test.  
We build a reference cumulative distribution function (CDF) from non-hallucinated calibration data, then test whether a new generation's entropy sequence is statistically consistent with this reference distribution. 
The framework cleanly separates the \emph{definition} of hallucination (delegated to an oracle during
calibration) from its \emph{detection} (performed via the statistical test), allowing our methods to be applied regardless of the choice of model, task, or hallucination definition.
Moreover, our method applies when labeled data are insufficient to construct a reference CDF: we introduce an unsupervised detector that achieves performance comparable to its supervised counterpart.

\textbf{Contributions.} Our contributions are as follows:
\begin{itemize}
    \item \textbf{Entropy distributions as hallucination fingerprints.}
    We provide a theoretical framework and empirical evidence to capture hallucinations in an LLM by detecting the divergence of the distribution of token entropies between hallucinated and faithful generations. The faithful distribution can be estimated with labelled data or in an unsupervised fashion. 

    \item \textbf{A statistically principled detection framework.}
    We formalize hallucination detection as a hypothesis test: when the model is not hallucinating (null hypothesis), the token entropies are consistent with a reference CDF estimated from non-hallucinated calibration data.
    We introduce the \emph{Calibrated Entropy Score} (CES), which combines the mean and maximum entropy through the reference CDF to capture both distributional shift and tail anomalies.
    CES is computed in a single forward pass from black-box token logits, requires no hyperparameter tuning, and is comparable across models and tasks without threshold recalibration, making it well suited for high-throughput production use.

    \item \textbf{Theoretical guarantees.} We show that CDF constructed by pooling varying-length entropy sequences produces a good estimate of the non-hallucinated reference CDF by generalizing the Dvoretzky–Kiefer–Wolfowitz (DKW) inequality to random-length random vectors. We further prove that the hypothesis test obtained from CES has strong performance, with false positive and false negative rates decaying exponentially with generation length.

    \item \textbf{Extensive empirical evidence.} We evaluate CES on 8 QA benchmarks and 10 models (Llama-2 7B/13B/chat, Llama-3 8B-Instruct, Llama-3.2-1B, GPT-4.1/mini/nano, GPT-4o-mini) and find that it consistently outperforms query-access baselines while matching more complex approaches (e.g., multi-trajectory or multi-pass methods). CES requires no architectural changes and uses only black-box token logits from a single forward pass, unlike consistency-based methods that require more than $5$ generations.
\end{itemize}
Separation of the hallucination \emph{definition} (delegated to an oracle during calibration) from \emph{detection} (via the statistical test) makes the framework agnostic to
the model, task, or hallucination taxonomy.

\section{Background}
\textbf{Notation.}
Let $[m]=\{1,\ldots m\}$ for $m \in \mathbb{Z}_+$.
$\Delta(A)$ is the probability simplex over a finite set of symbols $A$. Any element $\bm{p} \in \Delta(A)$ can be considered as a vector in $\mathbb{R}^{|A|}$. Let $\mathbf{1}_S$ be the indicator function of a set $S$. Given a set $\mathcal{V}$, $\mathcal{V}^\star_K = \sqcup_{i = 1}^K \mathcal{V}^K$ denotes the set of tuples/strings of size at most $K$ constructed from elements in $\mathcal{V}$.
Given two cumulative distribution functions $F_0$ and $F_1$, we define the Kolmogorov-Smirnov (KS) distance between $F_0$ and $F_1$ as $d_{KS}(F_0, F_1) = \sup_{x \in \mathbb{R}} |F_0(x) - F_1(x)|$.

\textbf{Autoregressive language models.}\label{par:autoregllm}
Let $\mathcal{V}$ be a finite vocabulary, with size $|\mathcal{V}| = d$; $\mathcal{V}^\star_K$ is the set of strings of length at most $K$. 
LLMs generate text via autoregresive sampling from a learned next-token distributions.
Let $f_\theta: \mathcal{V}^\star_K \rightarrow \Delta(\mathcal{V})$ be the LLM function, which maps a context to the next-token distribution. 
Specifically, at a step $t \in \mathbb{N}$ of the generation, the model produces $\bm{p}^{(t)} = f_\theta(x_t)$ distribution, where $x_t \in \mathcal{V}_K^\star$. 
\begin{definition}[Token Entropy]
\label{def:token_entropy}
At a generation step $t \in [0,m]$ of a length $m\in \mathbb{N}$ generation, the \textbf{token entropy} is defined as
\[h^{(t)} = H(\bm{p}^{(t)}) = - \sum_{v \in \mathcal{V}} p^{(t)}_v \log p_v ^{(t)},\]
where $H$ is the Shannon entropy with natural logarithm and $\mathcal{V}$ is a finite set of $d$ tokens (the vocabulary).
The \textbf{entropy sequence} is given by $\bm{h} = (h^{(1)}, ..., h^{(m)}).$
\end{definition}

\section{Methodology}
\label{sec:methodology}
Our approach consists of two stages: (1) constructing a reference entropy distribution from non-
hallucinated calibration data, and (2) testing whether a new generation’s entropy sequence is consistent
with this reference. 
In order for our methods to be applicable no matter the definition of hallucination required by the application, we abstract away the details by using an \textit{oracle} that detects hallucination according to the chosen definition. 

Formally, given a maximum generation length $M$ and a maximum input context length $K$, a hallucination detection oracle is a function $\mathcal{O}: \mathcal{V}_K^\star \times \mathcal{V}^\star_M \rightarrow \{0,1\}$ that satisfies $\mathcal{O}(x_\text{in}, x_\text{out}) = 1$ if the output given the input is a hallucination and $0$ otherwise.
Such an oracle, for example, can be a human expert who determines whether a given input-output pair is a hallucination, or it could be an LLM judge that does the same.
While an oracle could directly label hallucinations, it is typically costly or unavailable during production.
We therefore seek to capture the decisions of the oracle statistically with high probability and low computational overhead.
%
With this in mind, we propose a method for detecting hallucination that consists of two main components:
\begin{enumerate}
    \item \textbf{Non-hallucinated distribution:} We use the distribution of the token entropies when the model does not hallucinate as a proxy for capturing the statistical behaviour of the oracle. In practice, such a distribution can be obtained (approximately) using calibration data along with the oracle.
    \item \textbf{Hypothesis test:} We test whether the token entropies obtained during generation are consistent with the non-hallucinated distribution of the token entropies and flag deviations from this reference distribution as hallucinations.
\end{enumerate}
We expand on each of these components below.

\textbf{Non-hallucinated reference distribution.}
When a model is well-supported by its training data, strong contextual constraints concentrate probability on a small number of plausible tokens, yielding low entropy; when these constraints weaken, as is expected during hallucination, the predictive distribution becomes less structured and entropy rises.
This motivates us to focus on token entropies in this study, which are obtained from the logit probabilities at each time step during generation.
In general, entropy-based signals have consistently demonstrated empirical predictive power for hallucination detection across a range of prior studies
\cite{huang2025look, janiak2025illusion, varshney2023stitch,
malinin2020uncertainty}, typically summarized as scalar aggregates
such as perplexity or length-normalized entropy.
Our work extends this line by showing that the shape of the distribution of token entropies carries additional information.
Moreover, since entropies are scalars (as opposed to high-dimensional logit probability vectors), we can perform statistical tests with entropies efficiently.

We denote $\mathcal{E} = ([0, \log(d)])_M^\star$ to be the set of entropy sequences of length at most $M$, noting that $H(\bm{p}) \in [0, \log(d)]$ for all distributions $\bm{p} \in \Delta(\mathcal{V})$. If instead we work with the top-$k$ logit probabilities, we can choose $\mathcal{E} = ([0, \log(k)])_M^\star$. The discussion below applies to either case.

We now seek to construct the distribution of the token entropy over non-hallucinated instances. For this purpose, we first define a function $\mathfrak{f}_{\mathcal{O}}$ that filters out the hallucinated instances using the oracle $\mathcal{O}$. Formally, $\mathfrak{f}_{\mathcal{O}}\colon \mathcal{V}_K^\star \times \mathcal{V}_M^\star \times \mathcal{E} \to \mathcal{E} \sqcup \{\bot\}$ is defined as
\begin{equation}
    \mathfrak{f}_{\mathcal{O}}(x_\text{in}, x_\text{out}, \bm{h}_\text{out}) 
    = \begin{cases}
          \bm{h}_{\textnormal{out}} & 
          \textnormal{ if } \mathcal{O}(x_\text{in}, x_\text{out}) = 0 \\
          \bot                                        
          & \textnormal{ if } \mathcal{O}(x_\text{in}, x_\text{out}) = 1,
      \end{cases}  \label{eqn:oracle_filter}
\end{equation}
where $\bm{h}_{\textnormal{out}} \in \mathcal{E}$ is the sequence of token entropies obtained during the output generation. Suppose that $\mathbb{P}$ is the joint distribution over the input context, the output, as well as the token entropies. Then, the distribution $\mathbb{P}_{\mathcal{O}}$ induced by $\mathfrak{f}_{\mathcal{O}}$ on $\mathcal{E} \sqcup \{\bot\}$ is the pushforward of $\mathbb{P}$ under $\mathfrak{f}_{\mathcal{O}}$.
Since we only want the distribution of the token entropy for non-hallucinated instances, we condition on the event that the model is not hallucinating. This gives us the distribution
\begin{equation}
    \overline{\mathbb{P}}_0(A) = \frac{\mathbb{P}(A \cap \mathcal{E})}{\mathbb{P}(\mathcal{E})}\qquad (A \subseteq \mathcal{E} \textnormal{ measurable}). \label{eqn:proxy_nonhal_dist}
\end{equation}
While the distribution $\overline{\mathbb{P}}_0$ can look unwieldy, we make a simplifying assumption noted below that helps with both the theoretical analysis and also ensures that the hypothesis tests perform well.
\begin{assumption}[Conditional i.i.d.]
\label{asmp:iid_m}
Conditioned on the generation length $T = m$, the token entropies $h^{(1)}, \ldots, h^{(m)}$ of a non-hallucinated generation are i.i.d.\ draws from a distribution $\mathbb{P}_0$ with CDF $F_0$.
\end{assumption}
While such an independence assumption does not strictly hold in practice, since we are only working with the token entropy and not the tokens directly, we expect it to hold approximately for the token entropy, as seen from experimental results in Appendix~\ref{app:exp_independence}.

\textbf{Calibration.}
In order to obtain an approximation of the distribution $\mathbb{P}_0$, we use calibration data where we assume access to labeled data from an oracle. 
Algorithm~\ref{alg:calibration} shows how to approximate the cumulative distribution function (CDF) $F_0$ of the distribution $\mathbb{P}_0$ using calibration data.
\begin{algorithm}[H]
\caption{CDF Calibration}\label{alg:calibration}
\begin{algorithmic}[1]
\REQUIRE Calibration data $(\bm{x}_{\textnormal{in}, i}, \bm{x}_{\textnormal{out}, i}, \bm{h}_i)_{i = 1}^n$; hallucination detection oracle $\mathcal{O}$
\STATE Set $\mathcal{D} = \varnothing$.
\STATE For each $i \in [n]$, do $\mathcal{D} \gets \mathcal{D} \cup \{\bm{h}_i\}$ if $\mathcal{O}(\bm{x}_{\textnormal{in}, i}, \bm{x}_{\textnormal{out}, i}) = 0$.
\STATE Compute the function:
        \begin{equation}
            \widehat{F}_0(z) = \frac{1}{N} \sum_{\bm{h} \in \mathcal{D}} \sum_{t = 1}^{\textnormal{dim}(\bm{h})} \bm{1}\{h^{(t)} \leq z\}
        \end{equation}
        where $N = \sum_{\bm{h} \in \mathcal{D}} \textnormal{dim}(\bm{h})$ and $z \in [0, \log(d)]$.
\STATE \textbf{return} $\widehat{F}_0$.
\end{algorithmic}
\end{algorithm}
We show in Corollary~\ref{cor:cdfcalalg_converge} that the calibrated distribution $\widehat{F}_0$ constructed using Algorithm~\ref{alg:calibration} converges to the true distribution $F_0$ exponentially quickly with the number of non-hallucinated instances $|\mathcal{D}|$. This result is based on the following generalization of DKW inequality to random length vectors.

\begin{theorem}[Random-length DKW inequality, informal]
    \label{thm:random_length_DKW_informal}
    Consider a random variable $\bm{X}$ taking values in $\mathbb{R}_M^\star$ with length $T = \textnormal{dim}(\bm{X})$. Suppose that there is some distribution $\mathbb{P}_0$ on $\mathbb{R}$ such that for all $m \in [M]$,
    \begin{equation}
        \mathbb{P}(\bm{X} \in A| T = m) = \mathbb{P}_0^m(A \cap \mathbb{R}^m), \label{eqn:gen_iid_m_asmp}
    \end{equation}
    for all measurable subsets $A \subseteq \mathbb{R}_M^\star$.
    Then, if $\bm{X}_1, \dotsc, \bm{X}_L$ are iid copies of $\bm{X}$, $T_i = \textnormal{dim}(\bm{X}_i)$ for $i \in [L]$, $F_0$ is the CDF of $\mathbb{P}_0$, and
    \begin{equation}
        \widehat{F}_0(x) = \frac{1}{\sum_{i = 1}^L \textnormal{dim}(\bm{X}_i)} \sum_{i = 1}^L \sum_{k = 1}^{\textnormal{dim}(\bm{X}_i)} \bm{1}_{(-\infty, x]}(X_{i, k})
    \end{equation}
    is the empirical CDF, we have
    \begin{equation}
        \mathbb{P}^L\left(\sup_{x \in \mathbb{R}} |\widehat{F}_0(x) - F_0(x)| \geq \epsilon\right) \leq 2 \left(\mathbb{E}\left[\exp(-2 \textnormal{dim}(\bm{X}) \epsilon^2)\right]\right)^L \leq 2 \exp(-2 L \epsilon^2). \label{eqn:random_length_DKW}
    \end{equation}
\end{theorem}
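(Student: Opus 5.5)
Conditioning on the vector of lengths $(T_1,\dots,T_L)$ turns the problem into the classical one, and the bound is then averaged over the lengths. Fix a realisation $T_1=m_1,\dots,T_L=m_L$ with $m_i\in[M]$. By the assumption \eqref{eqn:gen_iid_m_asmp} and the independence of the copies $\bm{X}_i$, the pooled entries $\{X_{i,k}\}_{i\le L,\,k\le m_i}$ are, conditionally, $N=\sum_i m_i$ i.i.d.\ draws from $\mathbb{P}_0$. Here we use that the joint conditional law factorises, because the $\bm{X}_i$ are independent and each is conditioned only on its own length. Given these lengths, $\widehat{F}_0$ is exactly the ordinary empirical CDF of $N$ i.i.d.\ samples. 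The classical DKW inequality with Massart's tight constant therefore gives
\begin{equation*}
\mathbb{P}\Bigl(\sup_x|\widehat{F}_0(x)-F_0(x)|\ge\epsilon \,\Big|\, T_1=m_1,\dots,T_L=m_L\Bigr)\le 2\exp\Bigl(-2\epsilon^2\sum_{i=1}^L m_i\Bigr).
\end{equation*}
One technicality needs a remark: the supremum over $\mathbb{R}$ is measurable, since it can be reduced to a countable supremum over rationals together with left limits.

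Next, integrate over the lengths using the tower property. The right-hand side factorises as $2\prod_i \exp(-2\epsilon^2 m_i)$. Since the $T_i$ are i.i.d.\ copies of $T=\dim(\bm{X})$, taking expectations gives
\begin{equation*}
2\,\mathbb{E}\Bigl[\prod_i e^{-2\epsilon^2T_i}\Bigr]=2\bigl(\mathbb{E}[e^{-2\epsilon^2 T}]\bigr)^L,
\end{equation*}
which is the first inequality in \eqref{eqn:random_length_DKW}. For the second inequality, note that $T\ge1$ almost surely, because the lengths lie in $[M]$. Hence $e^{-2\epsilon^2T}\le e^{-2\epsilon^2}$, and so $\bigl(\mathbb{E}[e^{-2\epsilon^2T}]\bigr)^L\le e^{-2L\epsilon^2}$.

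The main obstacle is the first step: rigorously justifying that the conditional joint law of the pooled samples is the product measure $\mathbb{P}_0^{N}$. I would do this by computing, for product rectangles, the probability $\mathbb{P}(\bm{X}_1\in A_1,\dots,\bm{X}_L\in A_L\mid T_i=m_i\ \forall i)$. Independence factorises it as $\prod_i\mathbb{P}(\bm{X}_i\in A_i\mid T_i=m_i)=\prod_i\mathbb{P}_0^{m_i}(A_i\cap\mathbb{R}^{m_i})$, and a $\pi$–$\lambda$ argument extends this from rectangles to all measurable sets. Only lengths with positive probability need to be considered, so the conditioning is elementary. One further check is needed when invoking DKW: the Massart constant $2$ holds for every $N\ge1$ without any restriction on $\epsilon$. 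A minor point is that $N$ depends on the lengths, but conditionally it is deterministic, so no uniformity issue arises.
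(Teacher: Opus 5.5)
Your proposal is correct and follows the paper's proof. You condition on lengths $(T_1,\dots,T_L)=(m_1,\dots,m_L)$ of positive probability, use a $\pi$--$\lambda$ argument (the paper's Lemma~\ref{lem:conditional_prod_dist}) to show the pooled entries are conditionally distributed as $\mathbb{P}_0^{N}$ with $N=\sum_i m_i$, apply the classical DKW bound $2\exp(-2N\epsilon^2)$, and average over the i.i.d.\ lengths using $\exp(-2N\epsilon^2)=\prod_i\exp(-2m_i\epsilon^2)$; your use of $T\ge 1$ for the final bound $2\exp(-2L\epsilon^2)$ and your remark on measurability of the supremum supply details the paper leaves implicit.
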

A formal and extended version of this theorem along with the proof is given in Appendix~\ref{app:cal_theory}.
When oracle labels are unavailable but hallucinations are rare, our method can estimate $\mathbb{P}_0$ in an unsupervised manner, as described below.

\textbf{Calibration without access to the oracle.}\label{par:unsupervised}
When judge labels are unavailable or impractical to compute, one use an \textit{unsupervised} version of our method. In the unsupervised method, we compute the empirical CDF of token entropy directly from all the calibration samples. When the fraction of hallucinated samples $\gamma \in [0,1)$ is small, we can still estimate the non-hallucinated distribution well.

\begin{proposition}[Contamination Robustness]
    Suppose that the true CDF $F_\gamma$ of token entropy is $\gamma \in [0, 1)$ far from $F_0$ in KS distance, i.e., $d_{KS}(F_\gamma, F_0) \leq \gamma$.
    Let $\widehat{F}_\gamma$ denote the empirical CDF obtained using calibration samples within $\epsilon > 0$ error in KS distance of $F_\gamma$ with probability $1 - \delta \in (0, 1)$. Then, with probability $1 - \delta$, we have
    \begin{equation}
        d_{KS}(\widehat{F}_\gamma, F_0) \leq \epsilon + \gamma.
    \end{equation}
\end{proposition}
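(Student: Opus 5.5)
The plan is to apply the triangle inequality for the Kolmogorov--Smirnov distance to the event on which the empirical CDF is accurate. First I will check that $d_{KS}$ is a (pseudo)metric on CDFs. It is the sup-norm of the difference of two bounded functions on $\mathbb{R}$. So for any three CDFs $F, G, H$ and every $x \in \mathbb{R}$,
\[
|F(x) - H(x)| \le |F(x) - G(x)| + |G(x) - H(x)| \le d_{KS}(F,G) + d_{KS}(G,H),
\]
and taking the supremum over $x$ gives $d_{KS}(F,H) \le d_{KS}(F,G) + d_{KS}(G,H)$.

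Second, I will define the event $\mathcal{A} = \{ d_{KS}(\widehat{F}_\gamma, F_\gamma) \le \epsilon \}$. By hypothesis, $\mathcal{A}$ has probability at least $1-\delta$. In the unsupervised pooled setting, this guarantee can be supplied by the random-length DKW inequality of Theorem~\ref{thm:random_length_DKW_informal}, with $\mathbb{P}_0$ replaced by the contaminated marginal $F_\gamma$, which gives $\delta = 2\exp(-2L\epsilon^2)$. For the proposition itself, however, the bound is simply assumed, so no probabilistic work is needed.

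Third, on $\mathcal{A}$ I will take $F = \widehat{F}_\gamma$, $G = F_\gamma$, $H = F_0$ in the triangle inequality:
\[
d_{KS}(\widehat{F}_\gamma, F_0) \le d_{KS}(\widehat{F}_\gamma, F_\gamma) + d_{KS}(F_\gamma, F_0) \le \epsilon + \gamma.
\]
The second term is bounded deterministically by the contamination hypothesis. The inclusion of events then transfers the probability bound: the event in the claim contains $\mathcal{A}$, so it also has probability at least $1-\delta$.

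There is no real obstacle; the only subtlety is interpretive. Although the hypothesis is stated abstractly, if $F_\gamma$ is a mixture $(1-\gamma)F_0 + \gamma F_1$, then $|F_\gamma - F_0| = \gamma |F_1 - F_0| \le \gamma$. I may add this as a remark to justify the assumption $d_{KS}(F_\gamma, F_0) \le \gamma$.
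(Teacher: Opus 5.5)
Your proof is correct and matches the paper's argument exactly: on the event $d_{KS}(\widehat{F}_\gamma, F_\gamma) \le \epsilon$, which has probability at least $1-\delta$, the triangle inequality for $d_{KS}$ together with $d_{KS}(F_\gamma, F_0) \le \gamma$ gives the bound. One caution on your aside about where $\delta$ comes from, which the proposition does not need: when contamination is at the level of whole sequences, the tokens of a pooled sequence given its length follow a mixture of product laws rather than i.i.d.\ draws from $F_\gamma$, so Theorem~\ref{thm:random_length_DKW_informal} does not directly supply $\delta = 2\exp(-2L\epsilon^2)$.
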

\begin{proof}
With probability $1 - \delta$, we have $d_{KS}(\widehat{F}_\gamma, F_\gamma) \leq \epsilon$. Since the KS distance is a metric, we have $d_{KS}(\widehat{F}_\gamma, F_0) \leq d_{KS}(\widehat{F}_\gamma, F_\gamma) + d_{KS}(F_\gamma, F_0) \leq \epsilon + \gamma$.
\end{proof}


\textbf{Hypothesis test.} We can now write down hallucination detection as a hypothesis test, which tests whether the generated entropies are consistent with the non-hallucinated distribution $F_0$. We phrase this formally below.
\begin{definition}[Hallucination Detection Test]
\label{def:HD_test}
A hallucination detection test corresponds to testing the null hypothesis:
\begin{equation}
    \mathcal{H}_0: \text{generated entropy sequence } \bm{h} \text{ is consistent with } F_0.
\end{equation}
We say that the model is hallucinating if $\mathcal{H}_0$ does not hold.
%
\end{definition}

In order to detect a hallucination, we need to reject the null hypothesis $\mathcal{H}_0$. Since this corresponds to checking whether the elements $h^{(1)}, \dotsc, h^{(m)}$ are generated iid from $F_0$, one can use any test for checking if samples are drawn from a given distribution. A popular and powerful method of doing this is the Kolmogorov-Smirnov (KS) test. In the context of hallucination detection, however, the KS test can underperform because of insufficiently many samples available when the generation length is small. Therefore, we leverage a measure that we have found to be extremely performant.

\textbf{Calibrated Entropy Score (CES).} 
The Calibrated Entropy Score is a measure given by the geometric mean of the CDF valued at the mean entropy $\bar{h} = \sum_{t = 1}^m h^{(t)} / m$ with the mean of the CDF valued at the maximum entropy $h_\text{max} = \max_{t \in [m]} h^{(t)}$ of a generation, i.e.,
\[ \text{CES}(\bm{h})= \sqrt{\hat{F}_0(\bar{h})\cdot\hat{F}_0(h_\text{max}}).\]
The CDF transformation $\widehat{F}_0(\cdot)$ serves as a normalization to convert the mean and max entropies of $\bm{h}$ to a score $\textnormal{CES}(\bm{h}) \in [0, 1]$.
Importantly, we can derive a hypothesis test using CES by setting a threshold and reject the null hypothesis (or equivalently, declare hallucination) when the observed CES value exceeds the threshold. Since one often plots a receiver operating curve (ROC) and chooses a threshold empirically based on that, we give results directly for the true and false positive rates in terms of tunable parameters that determine a threshold for CES. In Theorem~\ref{thm:ces_power_informal} below, we show that for an appropriate choice of tunable parameters, the true positive rate goes to $1$ and the false positive rate decays to $0$ exponentially quickly with the number of samples.

\begin{theorem}[Power of CES, informal]
\label{thm:ces_power_informal}
Let $F_0$ be the non-hallucinated distribution with mean $\mu_0$ and maximum $\zeta_0$, and define $\mu_1$ and $\zeta_1$ similarly for the hallucinated distribution $F_1$.
Let $\mu$ and $\zeta$ be tunable parameters that can be used to set a threshold.
Then, the following hold.

\begin{enumerate}
    \item[(a)] \textbf{True positive rate.}
    If the tunable parameters satisfy $\mu \in (-\infty, \mu_1)$ and $\zeta \in (-\infty, \zeta_1)$, we have
    \begin{equation}
      \mathbb{P}_{F_1}^m(\mathrm{CES} \geq \sqrt{F_0(\mu) F_0(\zeta)})
      \geq 1 - F_1(\zeta)^m - \exp\left(\frac{-2m (\mu_1 - \mu)^2}{(\log d)^2}\right). \label{eqn:ces_tpr_bd}
    \end{equation}

    \item[(b)] \textbf{False positive rate.} If the tunable parameters satisfy $\mu \in (\mu_0, \infty)$ and $\zeta \in (\zeta_0, \infty)$, we have
    \begin{equation}
      \mathbb{P}_{F_0}^m(\mathrm{CES} > \sqrt{F_0(\mu) F_0(\zeta)})
      \leq \exp\left(\frac{-2m (\mu - \mu_0)^2}{(\log d)^2}\right). \label{eqn:ces_fpr_bd}
    \end{equation}

    \item[(c)] \textbf{Consistency.} For tunable parameters in the range $\mu \in (\mu_0, \mu_1)$ and $\zeta \in (\zeta_0, \zeta_1)$,
    \begin{align}
        & \mathbb{P}_{F_1}^m(\textnormal{CES} \geq \sqrt{F_0(\mu) F_0(\zeta)}) \xrightarrow{\ m \to \infty\ } 1 && \textnormal{(True positive rate goes to $1$)} \\
        & \mathbb{P}_{F_0}^m(\textnormal{CES} > \sqrt{F_0(\mu) F_0(\zeta)})  \xrightarrow{\ m \to \infty\ } 0 && \textnormal{(False positive rate goes to $0$)}.
    \end{align}
\end{enumerate}
\end{theorem}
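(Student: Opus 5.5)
The plan is to analyse CES with the exact reference CDF $F_0$ (the informal statement takes $\widehat{F}_0 = F_0$). Under either hypothesis, $h^{(1)},\dots,h^{(m)}$ are i.i.d.\ and supported in $[0,\log d]$. Here $\zeta_0,\zeta_1$ denote the essential suprema of $F_0$ and $F_1$.

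The key reduction is monotonicity. $F_0$ is nondecreasing, so $(a,b)\mapsto\sqrt{F_0(a)F_0(b)}$ is nondecreasing in each argument. This gives two event inclusions.

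For the true positive rate, $\{\bar h\ge\mu\}\cap\{h_{\max}\ge\zeta\}\subseteq\{\mathrm{CES}\ge\sqrt{F_0(\mu)F_0(\zeta)}\}$. A union bound on the complement then gives
\[
\mathbb{P}_{F_1}^m(\mathrm{CES}<\sqrt{F_0(\mu)F_0(\zeta)})\le \mathbb{P}_{F_1}^m(h_{\max}<\zeta)+\mathbb{P}_{F_1}^m(\bar h<\mu).
\]
For the first term, $\mathbb{P}(h_{\max}<\zeta)\le\mathbb{P}(h_{\max}\le\zeta)=F_1(\zeta)^m$ by independence. For the second, Hoeffding's inequality for variables in $[0,\log d]$ with mean $\mu_1>\mu$ yields $\exp(-2m(\mu_1-\mu)^2/(\log d)^2)$. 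Together these prove (a).

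For the false positive rate, I take $\zeta>\zeta_0$. Then $h_{\max}\le\zeta_0<\zeta$ almost surely under $F_0$, so $F_0(h_{\max})=1=F_0(\zeta)$. A false positive therefore requires $F_0(\bar h)>F_0(\mu)$, which by monotonicity forces $\bar h>\mu$. Hoeffding again bounds $\mathbb{P}_{F_0}^m(\bar h>\mu)\le\exp(-2m(\mu-\mu_0)^2/(\log d)^2)$, proving (b).

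The main subtlety is this step. The implication $F_0(\bar h)>F_0(\mu)\Rightarrow\bar h>\mu$ uses only that $F_0$ is nondecreasing: if $\bar h\le\mu$ then $F_0(\bar h)\le F_0(\mu)$. It also relies on the observation that the max term saturates at $1$ under the null. If that term were not saturated, a strict inequality on the product would not pass cleanly to the factors, so I will state the argument carefully with the non-strict and strict inequalities in the right places.

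For (c), I take $\mu\in(\mu_0,\mu_1)$ and $\zeta\in(\zeta_0,\zeta_1)$, which satisfy the hypotheses of both (a) and (b). The exponential terms vanish as $m\to\infty$. The remaining term $F_1(\zeta)^m\to0$ because $\zeta<\zeta_1$, the essential supremum of $F_1$, which forces $F_1(\zeta)<1$. Both limits follow.

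If one wants the statement with $\widehat{F}_0$ in place of $F_0$, monotonicity of $\widehat F_0$ gives the same event inclusions verbatim. The null step instead needs $\widehat F_0(\zeta)=1$ beyond the calibration sample maximum, which can be controlled with Theorem~\ref{thm:random_length_DKW_informal}. I would keep this as a remark rather than part of the main proof.
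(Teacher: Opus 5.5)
Your argument follows the paper's proof essentially step for step. For (a) you use a union bound over the mean event and the max event, bounded by Hoeffding and by $F_1(\zeta)^m$. For (b) you use saturation of the threshold together with one-sided Hoeffding. For (c) you let $m\to\infty$ in the bounds from (a) and (b). Writing the events as $\{\bar h\ge\mu\}$ and $\{h_{\max}\ge\zeta\}$ instead of $\{F_0(\bar h)\ge F_0(\mu)\}$ and $\{F_0(h_{\max})\ge F_0(\zeta)\}$ is only cosmetic.

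One step in (b) is false as written. From $h_{\max}\le\zeta_0<\zeta$ almost surely you conclude $F_0(h_{\max})=1$, but this does not follow. Since $F_0(z)=1$ exactly when $z\ge\zeta_0$, you get $F_0(h_{\max})=1$ only on the event $\{h_{\max}=\zeta_0\}$. That event is null whenever $F_0$ has no atom at $\zeta_0$. For example, if $F_0$ is uniform on $[0,\log d]$, then $F_0(h_{\max})=h_{\max}/\log d<1$ almost surely.

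What saturates is the threshold factor $F_0(\zeta)=1$, not the observed factor. The repair is the one the paper uses. Trivially $F_0(h_{\max})\le 1$, so $\mathrm{CES}\le\sqrt{F_0(\bar h)}$. Then $\mathrm{CES}>\sqrt{F_0(\mu)F_0(\zeta)}=\sqrt{F_0(\mu)}$ forces $F_0(\bar h)>F_0(\mu)$, hence $\bar h>\mu$, and Hoeffding finishes the bound.

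This also resolves the worry in your ``main subtlety'' paragraph. The strict inequality reaches the mean factor through an upper bound on the observed max factor, so that factor never needs to equal $1$. What would actually break the argument is $F_0(\zeta)<1$ in the threshold, and $\zeta>\zeta_0$ rules that out. With this fix the proof is complete.

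A small point on your closing remark: no DKW argument is needed to get $\widehat F_0(\zeta)=1$. In the supervised calibration, the retained entropies also lie in $[0,\zeta_0]$ almost surely under the conditional i.i.d.\ assumption. So $\widehat F_0(\zeta)=1$ holds automatically for $\zeta>\zeta_0$, whereas DKW alone would only give $\widehat F_0(\zeta)\ge 1-\epsilon$.
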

A more formal and extended version of the theorem along with the proof can be found in Appendix~\ref{app:ces_power}.

\section{Experiments}\label{sec:experiments}
Our experimental pipeline targets two questions (Q1) is there statistical evidence that hallucinated outputs tend to have markedly separate distributional fingerprints, beyond mean shift and (Q2) how does our proposed method (CES) perform against existing methods at detecting hallucinations in real world datasets? 
We describe the common experimental pipeline below.
Further experimental details are deferred to Appendix \ref{app:exp_details}. 
All experiments were run on dual Intel Xeon Platinum 8275CL CPUs with 96 cores, 192 threads, and 1.1 TB of RAM and 8 A100 GPUs with 40GB memory each. 

\textbf{Models and datasets.} We evaluate across ten instruction-tuned language models spanning parameter counts from 1B to 13B (open-weight) plus four API-only models:
Falcon-7B-Instruct \cite{almazrouei2023falcon}, Llama-3.2-1B-Instruct, Llama-2-7B-Chat, Llama-2-13B-Chat, Meta-Llama-3-8B-Instruct \cite{touvron2023llama}, Mistral-7B-Instruct-v0.3 \cite{jiang2023mistral7b}, GPT-4.1, GPT-4.1-mini, GPT-4.1-nano, and GPT-4o-mini.
Each model is evaluated on eight question-answering datasets: BioASQ \cite{krithara2023bioasq}, CoQA, DROP \cite{dua2019drop}, GSM8K \cite{cobbe2021training}, NQ-Open \cite{kwiatkowski2019natural}, SQuAD \cite{rajpurkar2018know}, SVAMP \cite{patel2021nlp}, and TriviaQA \cite{joshi2017triviaqa}
, yielding $10 \times 8 = 80$ model--dataset experiments (48 open-weight, 32 API).
For each experiment, we generate 500 responses with \texttt{max\_new\_tokens}$=128, 256$ and extract per-token logits (or log-probabilities for API models), from which Shannon entropy $H_t = -\sum_v p_v \log p_v$ is computed at each position $t$.
Binary labels (hallucinated vs.\ faithful) are assigned using a GPT-4.1-nano judge.
The median hallucination rate across experiments is 0.27, ranging from 0.04 (TriviaQA, CoQA) to 0.78 (SVAMP, NQ-Open) (see Appendix \ref{app:exp_grid} for details on hallucination rates).

\textbf{Evaluation protocol.}
All $N = 500$ samples per experiment are used both to construct the reference
ECDF (empirical CDF) $\widehat{F}_0$ and to compute detection AUROCs (no held-out partition).
This full-sample protocol maximises statistical power for the reference distribution; the lack of a separate evaluation split means reported AUROCs reflect in-sample ranking performance.

\textbf{Benchmark methods.}
We compare CES against 15 benchmark methods. 
These are: Semantic Entropy \cite{farquhar2024detecting}, Discrete Semantic Entropy, Kernel Language Entropy (KLE, KLE-heat, KLE-full) \cite{nikitin2024kernel}, Eigenscore \cite{chen2024inside}, Effective Rank, SelfCheckGPT \cite{manakul2023selfcheckgpt}, Embedding Regression, P(True) \cite{kadavath2022language}, FAVA \cite{mishra2024fine}, and Length-Normalised Entropy. 
Details on the methods are included in Appendix \ref{app:baselines}, detailed experimental results are reported in \ref{app:exp_grid}.

\subsection{Distributional Separation of Hallucinated and Faithful Generations}\label{sec:exp_distributions}

\textbf{Setup.}
For each of the 80 model--dataset pairs, we pool all token-level entropies from faithful generations and from hallucinated generations separately.
We apply the two-sample Kolmogorov--Smirnov test and compute nonparametric effect sizes ($d_{KS}$ distance).
To characterize the \emph{shape} signal beyond location shifts, we additionally apply the KS test to \emph{mean-centred} entropy sequences, removing any first-moment difference.

\textbf{Results.}
Across all 80 experiments, 72 (90\%) yield a significant two-sample KS test at $\alpha{=}0.05$ when using the full pooled token sets.
The median $d_{KS}$ distance is 0.100 (IQR: [0.059, 0.130]) and the median Cohen's $d$ is 0.192 (IQR: [0.106, 0.284]), confirming a small but statistically robust distributional shift.
Crucially, the \emph{mean-centred shape signal} remains significant in 80/80 experiments at $\alpha{=}0.05$, indicating that the distributional difference is not reducible to a simple location shift.
Figure \ref{fig:distributional_difference} illustrates the separation.
Panel~(a) shows representative ECDFs for a median-effect-size experiment, where faithful and hallucinated entropy distributions are visibly offset.
Panel~(b) displays the distribution of $d_{KS}$ distances across all 80 experiments, with the rejection threshold annotated.
Panel~(c) shows that the mean-centred shape signal is universally significant, confirming distributional shape differences beyond the mean.

\begin{figure}[h]
    \centering
    \includegraphics[width=\textwidth]{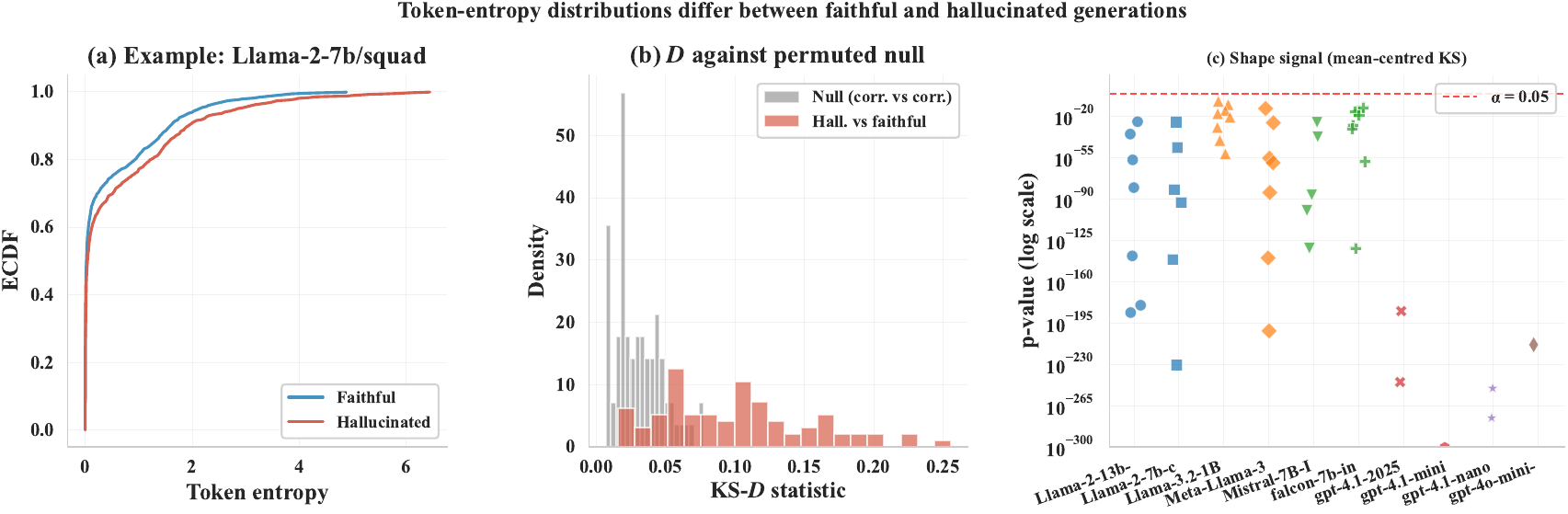}
    \caption{\textbf{Entropy distributions differ between faithful and hallucinated generations.}
    (a) Empirical CDFs for the median-effect-size experiment, with $d_{KS}$ annotated.
    (b) Distribution of $d_{KS}$ distances across 80 experiments; 72/80 are significant at $\alpha{=}0.05$, with the distributional difference being more marked for API models.
    (c) Mean-centred shape signal: 80/80 experiments exhibit significant residual distributional differences after removing the location shift.}
    \label{fig:distributional_difference}
\end{figure}

\subsection{Comparison to Benchmark Methods}\label{app:critical_difference_analysiss}

\textbf{Setup.}
We compare CES against 16 published hallucination detection methods spanning multi-sample semantic methods.
All methods are evaluated on the same 80 experiments with identical labels.
We use three complementary evaluation criteria: (i)~pairwise win rates, (ii)~Friedman/Nemenyi rank analysis, and (iii)~median AUROC.
Among methods operating under the minimal-access single-pass constraint, CES achieves the highest detection performance with formal guarantees. 
Among all methods regardless of cost, CES belongs to the top statistical clique while requiring significantly less computation than the other members of that clique.

\textbf{Pairwise win rates.}
CES (unsupervised) wins 854/1279 pairwise comparisons (66.8\%) against all 16 benchmark methods and achieves a \textgreater50\% win rate against 12/16 methods.
The strongest victories are against Generation Length (85.0\%, $\Delta = +0.085$), SelfCheckGPT (80.0\%, $\Delta = +0.034$), CES supervised (78.8\%), FAVA (78.8\%, $\Delta = +0.047$), and Semantic Entropy (72.2\%).
CES loses to KLE / KLE-heat (50.0\% win rate), KLE-full (48.8\%), and Embedding Regression (43.8\%, $\Delta = -0.017$).
We also note that our unsupervised variant outperforms all single pass methods without LLM querying in 43/80 of the experiments. 

\textbf{Friedman/Nemenyi analysis.}
With 17 methods across 80 experiments, the Friedman test strongly rejects exchangeability ($\chi^2_F = 272.80$, $p \approx 0$; Iman--Davenport $F = 21.47$).
The Nemenyi critical difference is $\text{CD} = 2.779$ ($\alpha = 0.05$, $k = 17$).
CES (unsupervised) achieves an average rank of 6.29 and belongs to the \emph{top clique} (methods within CD of the best rank 6.16). 
Holm-corrected Wilcoxon signed-rank tests confirm that CES is significantly better than 8/16 methods.
CES is \emph{not} significantly different from KLE variants, Embedding Regression, Discrete SE, Eigenscore, Effective Rank, or Semantic Entropy.

\textbf{Further Experimental Results.} The extended appendix presents a comprehensive set of ablations, robustness checks, and supplementary analyses.
Appendix \ref{app:exp_grid} reports the full $10 \times 8$ AUROC grid with per-model tables. Appendix \ref{app:exp_combinatorial} exhaustively evaluates 44 summary-statistic variants, confirming that the chosen $\text{geom}(\text{mean}, \text{max})$ formula achieves the best average rank.
Appendix \ref{app:exp_independence} verifies the independence assumption by measuring lag-1 autocorrelation in token entropy traces and deriving effective sample sizes.
Finally, Appendix \ref{app:exp_contamination} demonstrates that CES is robust to calibration contamination, maintaining performance even at 50\% hallucinated-token injection, and Appendix \ref{app:exp_noisy_judge} tests robustness to noisy hallucination labels, showing graceful degradation.

\begin{figure}[t]
    \centering
    \includegraphics[width=\textwidth]{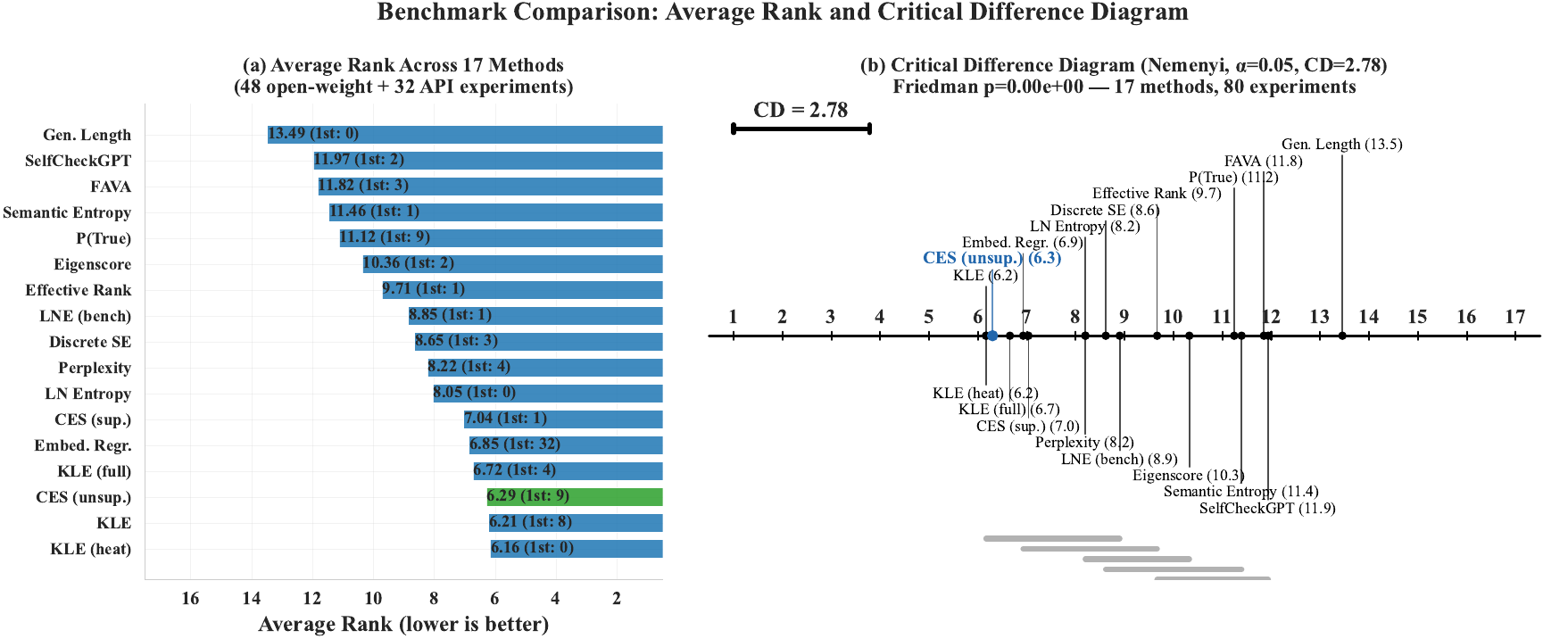}
    \caption{\textbf{Benchmark comparison of confabulation detection methods across all model--dataset experiments.}
    \textbf{(a)}~Average rank (lower is better) for each method, aggregated over all 80 open-weight and API model experiments. Bar annotations indicate the average rank and number of first-place finishes.
    \textbf{(b)}~Nemenyi Critical Difference (CD) diagram following \cite{demvsar2006statistical}.
    CES (unsupervised) ranks among the top clique without requiring access to model internals, supervised labels at test time, or external NLI models.}
    \label{fig:benchmark_cd}
\end{figure}
\begin{center}
\begin{table}[h]
\centering
\caption{AUROC $\pm$ std (rank) for representative open-weight and API models. \textbf{Bold} = best method per dataset overall. \textit{Underlined} = best among single-run methods. Best method marked with $^*$.}
\label{tab:auroc_main}
\resizebox{\textwidth}{!}{%
\begin{tabular}{llcccccccc}
\toprule
Model & Method & bioasq & coqa & drop & gsm8k & nq\_open & squad & svamp & triviaqa \\
\midrule
\multirow{12}{*}{\texttt{Meta-Llama-3-8B-Instruct}} 
& \multicolumn{9}{l}{\textit{Single-run methods}} \\
\cmidrule(lr){2-10}
 & CES (unsup) & \underline{0.584 $\pm$ 0.028} & \underline{0.694 $\pm$ 0.043} & \underline{0.627 $\pm$ 0.026} & \underline{\textbf{0.685 $\pm$ 0.033}} & 0.568 $\pm$ 0.026 & 0.650 $\pm$ 0.029 & \underline{0.792 $\pm$ 0.028} & 0.719 $\pm$ 0.049 \\
 & CES (sup) & 0.584 $\pm$ 0.030 & 0.693 $\pm$ 0.046 & 0.627 $\pm$ 0.025 & 0.684 $\pm$ 0.034 & 0.567 $\pm$ 0.026 & 0.650 $\pm$ 0.030 & 0.790 $\pm$ 0.029 & 0.719 $\pm$ 0.047 \\
 & LN-Entropy & 0.582 $\pm$ 0.029 & 0.686 $\pm$ 0.042 & 0.606 $\pm$ 0.027 & 0.668 $\pm$ 0.034 & \underline{\textbf{0.574 $\pm$ 0.025}} & \underline{0.652 $\pm$ 0.028} & 0.781 $\pm$ 0.029 & \underline{0.727 $\pm$ 0.045} \\
 & Perplexity & 0.582 $\pm$ 0.029 & 0.686 $\pm$ 0.042 & 0.606 $\pm$ 0.027 & 0.668 $\pm$ 0.035 & 0.574 $\pm$ 0.025 & 0.652 $\pm$ 0.029 & 0.781 $\pm$ 0.028 & 0.727 $\pm$ 0.044 \\
\cmidrule(lr){2-10}
& \multicolumn{9}{l}{\textit{Multi-trajectory / LLM-based methods}} \\
\cmidrule(lr){2-10}
 & SE & 0.567 $\pm$ 0.030 & 0.702 $\pm$ 0.050 & 0.633 $\pm$ 0.026 & 0.525 $\pm$ 0.039 & 0.544 $\pm$ 0.026 & 0.657 $\pm$ 0.027 & 0.767 $\pm$ 0.030 & 0.703 $\pm$ 0.052 \\
 & DSE & 0.581 $\pm$ 0.029 & 0.680 $\pm$ 0.045 & 0.638 $\pm$ 0.025 & 0.589 $\pm$ 0.039 & 0.561 $\pm$ 0.027 & 0.667 $\pm$ 0.028 & 0.798 $\pm$ 0.027 & 0.723 $\pm$ 0.048 \\
 & KLE & 0.592 $\pm$ 0.029 & \textbf{0.739 $\pm$ 0.043}$^*$ & 0.652 $\pm$ 0.025 & 0.651 $\pm$ 0.038 & 0.557 $\pm$ 0.026 & 0.672 $\pm$ 0.028 & \textbf{0.806 $\pm$ 0.026}$^*$ & 0.734 $\pm$ 0.047 \\
 & KLE-Full & 0.591 $\pm$ 0.030 & 0.739 $\pm$ 0.041 & 0.650 $\pm$ 0.026 & 0.645 $\pm$ 0.037 & 0.562 $\pm$ 0.025 & 0.670 $\pm$ 0.029 & 0.800 $\pm$ 0.027 & 0.734 $\pm$ 0.048 \\
 & Eigenscore & 0.591 $\pm$ 0.027 & 0.675 $\pm$ 0.041 & 0.600 $\pm$ 0.027 & 0.647 $\pm$ 0.034 & 0.567 $\pm$ 0.026 & 0.670 $\pm$ 0.029 & 0.729 $\pm$ 0.030 & 0.744 $\pm$ 0.047 \\
 & SelfCheckGPT & 0.588 $\pm$ 0.027 & 0.640 $\pm$ 0.040 & 0.589 $\pm$ 0.027 & \textbf{0.702 $\pm$ 0.034}$^*$ & 0.563 $\pm$ 0.027 & 0.663 $\pm$ 0.028 & 0.732 $\pm$ 0.031 & 0.709 $\pm$ 0.046 \\
 & FAVA & 0.537 $\pm$ 0.028 & 0.667 $\pm$ 0.043 & 0.641 $\pm$ 0.025 & 0.594 $\pm$ 0.041 & 0.523 $\pm$ 0.026 & 0.504 $\pm$ 0.029 & 0.721 $\pm$ 0.031 & 0.659 $\pm$ 0.043 \\
 & EmbRegress & \textbf{0.601 $\pm$ 0.029}$^*$ & 0.729 $\pm$ 0.039 & \textbf{0.687 $\pm$ 0.024}$^*$ & 0.559 $\pm$ 0.038 & \textbf{0.576 $\pm$ 0.027}$^*$ & \textbf{0.678 $\pm$ 0.071}$^*$ & 0.670 $\pm$ 0.032 & \textbf{0.863 $\pm$ 0.031}$^*$ \\
\midrule
\multirow{12}{*}{\texttt{gpt-4o-mini-2024-07-18}} 
& \multicolumn{9}{l}{\textit{Single-run methods}} \\
\cmidrule(lr){2-10}
 & CES (unsup) & \underline{\textbf{0.698 $\pm$ 0.027}}$^*$ & \underline{0.661 $\pm$ 0.063} & \underline{0.621 $\pm$ 0.031} & 0.669 $\pm$ 0.040 & \underline{0.700 $\pm$ 0.023} & 0.744 $\pm$ 0.023 & 0.692 $\pm$ 0.056 & \underline{0.636 $\pm$ 0.046} \\
 & CES (sup) & 0.697 $\pm$ 0.028 & 0.661 $\pm$ 0.062 & 0.620 $\pm$ 0.030 & 0.669 $\pm$ 0.042 & 0.700 $\pm$ 0.024 & 0.744 $\pm$ 0.024 & 0.692 $\pm$ 0.055 & 0.635 $\pm$ 0.045 \\
 & LN-Entropy & 0.690 $\pm$ 0.029 & 0.659 $\pm$ 0.063 & 0.601 $\pm$ 0.029 & \underline{0.682 $\pm$ 0.040} & 0.696 $\pm$ 0.024 & \underline{\textbf{0.748 $\pm$ 0.022}}$^*$ & \underline{0.694 $\pm$ 0.053} & 0.622 $\pm$ 0.044 \\
 & Perplexity & 0.690 $\pm$ 0.027 & 0.659 $\pm$ 0.061 & 0.601 $\pm$ 0.029 & 0.682 $\pm$ 0.038 & 0.696 $\pm$ 0.024 & 0.748 $\pm$ 0.022 & 0.694 $\pm$ 0.056 & 0.622 $\pm$ 0.045 \\
\cmidrule(lr){2-10}
& \multicolumn{9}{l}{\textit{Multi-trajectory / LLM-based methods}} \\
\cmidrule(lr){2-10}
 & SE & 0.597 $\pm$ 0.033 & 0.530 $\pm$ 0.062 & 0.596 $\pm$ 0.028 & 0.536 $\pm$ 0.037 & 0.645 $\pm$ 0.025 & 0.602 $\pm$ 0.029 & 0.560 $\pm$ 0.067 & 0.555 $\pm$ 0.043 \\
 & DSE & 0.636 $\pm$ 0.031 & 0.597 $\pm$ 0.040 & 0.611 $\pm$ 0.022 & 0.554 $\pm$ 0.021 & 0.672 $\pm$ 0.022 & 0.665 $\pm$ 0.023 & 0.675 $\pm$ 0.045 & 0.580 $\pm$ 0.032 \\
 & KLE & 0.643 $\pm$ 0.034 & 0.651 $\pm$ 0.054 & \textbf{0.660 $\pm$ 0.026}$^*$ & 0.576 $\pm$ 0.029 & 0.671 $\pm$ 0.024 & 0.670 $\pm$ 0.027 & 0.685 $\pm$ 0.053 & 0.566 $\pm$ 0.046 \\
 & KLE-Full & 0.644 $\pm$ 0.034 & 0.646 $\pm$ 0.052 & 0.656 $\pm$ 0.026 & 0.576 $\pm$ 0.030 & 0.672 $\pm$ 0.025 & 0.664 $\pm$ 0.026 & 0.685 $\pm$ 0.056 & 0.564 $\pm$ 0.044 \\
 & Eigenscore & 0.581 $\pm$ 0.035 & 0.624 $\pm$ 0.063 & 0.601 $\pm$ 0.030 & 0.579 $\pm$ 0.036 & 0.638 $\pm$ 0.025 & 0.617 $\pm$ 0.029 & 0.640 $\pm$ 0.050 & 0.565 $\pm$ 0.048 \\
 & SelfCheckGPT & 0.589 $\pm$ 0.031 & 0.693 $\pm$ 0.054 & 0.552 $\pm$ 0.027 & 0.475 $\pm$ 0.030 & 0.584 $\pm$ 0.027 & 0.669 $\pm$ 0.025 & 0.590 $\pm$ 0.051 & 0.554 $\pm$ 0.040 \\
 & FAVA & 0.545 $\pm$ 0.032 & 0.647 $\pm$ 0.048 & 0.618 $\pm$ 0.029 & \textbf{0.766 $\pm$ 0.035}$^*$ & 0.652 $\pm$ 0.024 & 0.521 $\pm$ 0.027 & 0.678 $\pm$ 0.046 & 0.609 $\pm$ 0.041 \\
 & EmbRegress & 0.529 $\pm$ 0.028 & \textbf{0.792 $\pm$ 0.041}$^*$ & 0.606 $\pm$ 0.028 & 0.456 $\pm$ 0.036 & \textbf{0.733 $\pm$ 0.022}$^*$ & 0.553 $\pm$ 0.066 & \textbf{0.784 $\pm$ 0.037}$^*$ & \textbf{0.720 $\pm$ 0.036}$^*$ \\
\bottomrule
\end{tabular}}
\end{table}
\end{center}
\section{Related Work}
\label{sec:related_work}

\textbf{Hallucination Detection in LLMs.}
\cite{kalai2025language} show that hallucination
classification is computationally easier than generation,
motivating post-hoc detection.
Retrieval-based methods verify claims against external
evidence \cite{wei2024long, shah2025validation,
obeso2025real} but incur latency and depend on retrieval
coverage.
Attention-based detectors use self-attention structure as a
grounding proxy \cite{chuang2024lookback,
ogasa2025hallucination} but require access to attention
matrices or supervised span-level labels.
\cite{azaria2023internal} and \cite{niu2025robust} exploit
model internals or learned detection policies;
\cite{moslonka2026learned} target RAG-specific
hallucinations.
CES requires only logit access.

\textbf{Uncertainty quantification.}
Single-pass methods, such as perplexity \cite{jelinek1977perplexity},
length-normalised entropy \cite{malinin2020uncertainty}, and
generation length \cite{janiak2025illusion}, reduce the
entropy sequence to a scalar, discarding distributional
information and requiring task-specific thresholds.
Multi-sample methods achieve stronger performance at higher
cost: Semantic Entropy \cite{farquhar2024detecting} and
KLE \cite{nikitin2024kernel} cluster $K$ generations by
meaning; EigenScore \cite{chen2023going} and
SAR \cite{duan2024shifting} measure embedding spread or
relevance-weighted uncertainty;
SelfCheckGPT \cite{manakul2023selfcheckgpt} and
$\mathbb{P}(\text{True})$ \cite{kadavath2022language} probe
consistency or self-evaluation.
All require $K \geq 2$ forward passes.
INSIDE \cite{chen2024inside} uses hidden-state probes but
requires model internals.
Surveys by \cite{xia2025survey} and
\cite{janiak2025illusion} provide comprehensive overviews.
\cite{pillutla2021mauve} uses distributional divergence
for corpus-level generation evaluation; conformal prediction
methods \cite{quach2023conformal, kumar2023conformal} provide
coverage guarantees on prediction sets.
Our closest work is \cite{du2024haloscope}, which formalizes hallucinations as sampled from a separate distribution; however, they leverage a Huber contamination model, whereas our judge formalism empowers a statistical analysis of our proposed test.  
To our knowledge, no prior work formalises hallucination
detection as a hypothesis test over entropy distributions.
Extended related works are in Appendix \ref{app:extended_related_work}.

\section{Conclusion}
\label{sec:conclusion}

We have introduced a framework that casts hallucination
detection as a statistical hypothesis test over token-level
entropy distributions.
The central finding is that the \emph{distribution} of token
entropies, beydon mean effects, constitutes a fingerprint
of hallucination, carrying independent signal in its shape
and tails beyond what length-normalised entropy or perplexity
capture.
The resulting Calibrated Entropy Score (CES) combines mean
and tail signals through a calibrated reference CDF,
operates efficiently, from a single forward
pass with black-box logit access, and without need of further LLM queries. 

The framework cleanly separates the \emph{definition} of
hallucination, delegated to an oracle during offline
calibration, from its \emph{detection}, making it agnostic
to the choice of model, task, or hallucination taxonomy.
We have provided a comprehensive theoretical foundation:
finite-sample calibration guarantees via a random-length DKW
inequality (Theorem~\ref{thm:random_length_DKW}), exponential power
against mean-shift and tail alternatives
(Theorem~\ref{thm:ces_power}), calibration perturbation bounds
(Proposition~\ref{prop:ces_calibration}), and contamination robustness
for unsupervised deployment.
Empirically, across 8 QA benchmarks and 10 generator models
spanning open-source and proprietary families, CES
consistently matches or exceeds perplexity and
length-normalised entropy in AUROC while providing the formal
error guarantees these heuristics lack.
Broader societal applications include enabling lightweight, real-time detection of hallucinations with formal error guarantees. CES can improve the trustworthiness of LLM-deployed systems in high-stakes domains such as healthcare, finance, and legal reasoning, reducing the risk of users acting on fabricated information; however, over-reliance on any automated detector could engender false confidence, and adversarial actors might exploit knowledge of entropy-based detection to craft hallucinations that evade distributional tests.

\textbf{Limitations.}
When the model hallucinates with high probability, the entropy
trace is indistinguishable from faithful generation; if
$d_{KS}(F_0, F_1) = 0$, no test has power (for example if the model was trained on incorrect data).
CES power also degrades for short generations ($m < 10$ tokens),
a fundamental limitation of any distributional test on few
observations.
Our guarantees rest on the conditional i.i.d.\ assumption;
future work should model sequential structure explicitly.
CES requires access to next-token or top-$k$ logit
probabilities, unavailable from some API providers, and its
quality is bounded by the calibration oracle.
Finally, our experiments use greedy decoding on short-answer
QA tasks; extending to stochastic decoding and long-form
generation remains open.

\section*{Disclaimer}

This paper was prepared for informational purposes by the Global Technology Applied Research center of JPMorgan Chase \& Co. This paper is not a merchandisable/sellable product of the Research Department of JPMorgan Chase \& Co. or its affiliates. Neither JPMorgan Chase \& Co. nor any of its affiliates makes any explicit or implied representation or warranty and none of them accept any liability in connection with this paper, including, without limitation, with respect to the completeness, accuracy, or reliability of the information contained herein and the potential legal, compliance, tax, or accounting effects thereof. This document is not intended as investment research or investment advice, or as a recommendation, offer, or solicitation for the purchase or sale of any security, financial instrument, financial product or service, or to be used in any way for evaluating the merits of participating in any transaction.

\bibliographystyle{plainnat}
\bibliography{references}

\appendix
\section{Theory for Calibration \label{app:cal_theory}}

In this section, we expand on the theory supporting our calibration algorithm, and prove the corresponding theorems stated in the main text.
We begin by setting up the problem. We consider the vocabulary to be a finite non-empty set $\mathcal{V}$, equipped with the discrete $\sigma$-algebra. $\mathcal{V}_K^\star = \sqcup_{i = 1}^K \mathcal{V}^i$ and $\mathcal{V}_M^\star = \sqcup_{i = 1}^M \mathcal{V}^i$ denote the sets of input contexts and the outputs generated by the model, respectively. In order to turn $\mathcal{V}_K^\star$ and $\mathcal{V}_M^\star$ into measurable spaces, we use the standard construction of $\sigma$-algebra on disjoint union of measurable spaces, described as follows. Given measurable spaces $(\Omega_i, \Sigma_i)$ for $i \in [n]$, the disjoint union $\Omega = \sqcup_{i = 1}^n \Omega_i$ can be equipped with the $\sigma$-algebra $\Sigma$ with $A \in \Sigma$ iff $A \cap \Omega_i \in \Sigma_i$ for all $i \in [n]$. We identify $\{i\} \times \Omega_i$ with $\Omega_i$ for brevity of notation when dealing with disjoint unions. This construction is also used to define a $\sigma$-algebra on $\mathcal{E} = ([0, \log(d)])_M^\star$ and $\mathcal{E} \sqcup \{\bot\}$, where $[0, \log(d)]^m$ is equipped with the Borel $\sigma$-algebra for $m \in [M]$.

The oracle $\mathcal{O}: \Vin \times \Vout \to \{0, 1\}$ is taken to be a measurable function. We show below that the filter function $\mathfrak{f}_{\mathcal{O}}$ is also measurable.
\begin{proposition}
    The function $\mathfrak{f}_{\mathcal{O}}\colon \Vin \times \Vout \times \mathcal{E} \to \mathcal{E} \sqcup \{\bot\}$ defined in Equation~\ref{eqn:oracle_filter} is measurable.
\end{proposition}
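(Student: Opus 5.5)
To prove that $\mathfrak{f}_{\mathcal{O}}$ is measurable, I will use the disjoint-union $\sigma$-algebra on $\mathcal{E} \sqcup \{\bot\}$ described above. A set $B \subseteq \mathcal{E} \sqcup \{\bot\}$ is measurable iff $B \cap \mathcal{E}$ is measurable in $\mathcal{E}$ and $B \cap \{\bot\}$ is measurable in $\{\bot\}$; the latter is automatic. It is therefore enough to show that $\mathfrak{f}_{\mathcal{O}}^{-1}(B)$ is measurable in $\Vin \times \Vout \times \mathcal{E}$ (product $\sigma$-algebra) for every such $B$. By writing $B = (B \cap \mathcal{E}) \cup (B \cap \{\bot\})$ and using that preimages commute with unions, I reduce to two types of sets: $B = \{\bot\}$, and $B = A$ with $A \subseteq \mathcal{E}$ measurable.

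\textbf{Preimage of $\{\bot\}$.} Here $\mathfrak{f}_{\mathcal{O}}^{-1}(\{\bot\}) = \mathcal{O}^{-1}(\{1\}) \times \mathcal{E}$. This is a measurable rectangle, because $\mathcal{O}$ is measurable by assumption.

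\textbf{Preimage of a measurable $A \subseteq \mathcal{E}$.} Here
\[
\mathfrak{f}_{\mathcal{O}}^{-1}(A) = \bigl(\mathcal{O}^{-1}(\{0\}) \times \mathcal{E}\bigr) \cap \pi_3^{-1}(A),
\]
where $\pi_3 \colon \Vin \times \Vout \times \mathcal{E} \to \mathcal{E}$ is the coordinate projection. The projection $\pi_3$ is measurable by the definition of the product $\sigma$-algebra, so $\pi_3^{-1}(A) = \Vin \times \Vout \times A$ is measurable. The set $\mathcal{O}^{-1}(\{0\}) \times \mathcal{E}$ is a measurable rectangle, and the intersection of two measurable sets is measurable.

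\textbf{Remaining check.} Since the argument is elementary, the main thing to verify is that the measurable structure is consistent. The oracle $\mathcal{O}$ is defined on $\Vin \times \Vout$. Because these are countable (indeed finite) discrete spaces, the product $\sigma$-algebra on $\Vin \times \Vout$ is again discrete, and so the $\sigma$-algebra of $\Vin \times \Vout \times \mathcal{E}$ is the product of the $\sigma$-algebra of $\Vin \times \Vout$ with that of $\mathcal{E}$. With this identification the rectangles used above are legitimate measurable sets, and measurability of $\mathfrak{f}_{\mathcal{O}}$ follows. Alternatively, one can observe directly that $\mathfrak{f}_{\mathcal{O}}$ is piecewise defined on the two measurable pieces $\mathcal{O}^{-1}(0) \times \mathcal{E}$ and $\mathcal{O}^{-1}(1) \times \mathcal{E}$. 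On the first piece it equals $\pi_3$ composed with the measurable inclusion $\mathcal{E} \hookrightarrow \mathcal{E} \sqcup \{\bot\}$, and on the second it is constant. The pasting lemma for measurable functions on a measurable partition then gives the result.
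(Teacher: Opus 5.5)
Your proof is correct and follows the same route as the paper. Both compute $\mathfrak{f}_{\mathcal{O}}^{-1}(B)$ as the measurable rectangle $\mathcal{O}^{-1}(0)\times(B\cap\mathcal{E})$, union $\mathcal{O}^{-1}(1)\times\mathcal{E}$ when $\bot\in B$, and conclude from the measurability of $\mathcal{O}$; your split of $B$ into its $\mathcal{E}$-part and $\{\bot\}$-part is just a repackaging of the paper's two-case formula, and your coordinate ordering of the rectangles is the correct one.
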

\begin{proof}
    Denote $\mathcal{A}_0 = \{(\xin, \xout) \in \Vin \times \Vout \mid \mathcal{O}(\xin, \xout) = 0\}$ and $\mathcal{A}_1 = \{(\xin, \xout) \in \Vin \times \Vout \mid \mathcal{O}(\xin, \xout) = 1\}$. Both $\mathcal{A}_0$ and $\mathcal{A}_1$ are measurable subsets of $\Vin \times \Vout$ since $\mathcal{O}$ is measurable.
    Let $A \subseteq \mathcal{E} \sqcup \{\bot\}$ be a measurable set. Observe that
    \begin{equation}
        \mathfrak{f}_{\mathcal{O}}^{-1}(A) = \begin{cases} (A \cap \mathcal{E}) \times A_0,                             & \bot \notin A \\
                                                           (A \cap \mathcal{E}) \times A_0 \cup \mathcal{E} \times A_1, & \bot \in A.
                                            \end{cases}
    \end{equation}
    Since $A_0$, $A_1$ are measurable subsets of $\Vin \times \Vout$, $A \cap \mathcal{E}$ is a measurable subset of $\mathcal{E}$, it follows that $\mathfrak{f}^{-1}(A)$ is a measurable subset of $\Vin \times \Vout \times \mathcal{E}$.
\end{proof}
Since $\mathfrak{f}_{\mathcal{O}}$ is measurable, given the joint distribution $\mathbb{P}$ on $\Vin \times \Vout \times \mathcal{E}$, the pushforward $\mathbb{P}_{\mathcal{O}}(\cdot) = \mathbb{P}(\mathfrak{f}_{\mathcal{O}}^{-1}(\cdot))$ is a well-defined distribution on $\mathcal{E} \sqcup \{\bot\}$. From this, we obtain the distribution $\overline{\mathbb{P}}_0$ by restricting to $\mathcal{E}$, as defined in Equation~\ref{eqn:proxy_nonhal_dist}. As noted in Section~\ref{sec:methodology}, we make the following simplifying assumption.
\begin{assumption}
    \label{asmp:iid_m}
    Conditioned on the length of the generated output, the distribution of token entropy for non-hallucinated instances is iid according to some distribution $\mathbb{P}_0$.
    Mathematically, this means that the distribution $\overline{\mathbb{P}}_0$ defined in Equation~\ref{eqn:proxy_nonhal_dist} satisfies
    \begin{equation}
        \overline{\mathbb{P}}_0(A | T = m) = \mathbb{P}_0^m(A \cap \mathbb{R}^m) \qquad \forall m \in [M],\ \forall A \subseteq \mathbb{R}_M^\star \textnormal{ measurable}, \label{eqn:iid_m_asmp}
    \end{equation}
    where $T$ is the random variable that outputs the generation length.
\end{assumption}

Our first goal is to show that under this assumption, Algorithm~\ref{alg:calibration} gives an approximation of the distribution $\mathbb{P}_0$ with sufficiently many calibration samples.
The first ingredient in showing this is a technical result that the product distribution conditioned on a Cartesian product of measurable sets $E_1, \dotsc, E_N$ is the same as the product of the respective conditional distributions.
\begin{lemma}
    \label{lem:conditional_prod_dist}
    Let $(\Omega, \Sigma, \mathbb{Q})$ be a probability space and let $E_1, \dotsc, E_N \in \Sigma$ be measurable sets with non-zero probability.
    For $i \in [N]$, let $\mathbb{Q}_i(\cdot) = \mathbb{Q}(\cdot | E_i)$ denote the distribution $\mathbb{Q}$ conditioned on the event $E_i$.
    Then, for all measurable sets $A \subseteq \Omega^N$, we have
    \begin{equation}
        \mathbb{Q}^N(A | E_1 \times \dotsm \times E_N) = \left(\prod_{i = 1}^N \mathbb{Q}_i\right)(A).
    \end{equation}
\end{lemma}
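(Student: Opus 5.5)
The plan is to verify the identity on measurable rectangles $A = A_1 \times \dotsm \times A_N$ with $A_i \in \Sigma$, and then extend it to the whole product $\sigma$-algebra $\Sigma^{\otimes N}$ with a uniqueness argument for measures.

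\textbf{Step 1: the conditional measure is well defined.} Write $E = E_1 \times \dotsm \times E_N$. By the defining property of the product measure,
\[
\mathbb{Q}^N(E) = \prod_{i=1}^N \mathbb{Q}(E_i) > 0 .
\]
Hence $\mathbb{Q}^N(\cdot \mid E) = \mathbb{Q}^N(\cdot \cap E)/\mathbb{Q}^N(E)$ is a probability measure on $(\Omega^N, \Sigma^{\otimes N})$. The right-hand side $\prod_i \mathbb{Q}_i$ is also a probability measure on the same space, since each $\mathbb{Q}_i$ is a probability measure because $\mathbb{Q}(E_i) > 0$.

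\textbf{Step 2: agreement on rectangles.} For $A = A_1 \times \dotsm \times A_N$ we have
\[
A \cap E = (A_1 \cap E_1) \times \dotsm \times (A_N \cap E_N).
\]
Applying the product formula to both numerator and denominator gives
\[
\mathbb{Q}^N(A \mid E) = \prod_{i=1}^N \frac{\mathbb{Q}(A_i \cap E_i)}{\mathbb{Q}(E_i)} = \prod_{i=1}^N \mathbb{Q}_i(A_i) = \Bigl(\prod_{i=1}^N \mathbb{Q}_i\Bigr)(A).
\]

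\textbf{Step 3: extension to all measurable sets.} Measurable rectangles form a $\pi$-system: they are closed under finite intersection, and they include $\Omega^N$. They also generate $\Sigma^{\otimes N}$. Two probability measures that agree on a generating $\pi$-system agree on the $\sigma$-algebra it generates, by Dynkin's $\pi$-$\lambda$ theorem. So the identity holds for every measurable $A \subseteq \Omega^N$.

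The only point needing care is Step 3. The conditional measure must be a genuine probability measure, which Step 1 ensures. Both sides must also be understood on the product $\sigma$-algebra, the same one used for $\mathbb{Q}^N$. Once these are in place, the argument is routine.
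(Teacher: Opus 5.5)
Your proof is correct and follows essentially the same route as the paper: verify the identity on measurable rectangles via the product formula, then extend it to all of $\Sigma^{\otimes N}$ with Dynkin's $\pi$-$\lambda$ theorem. The only difference is that you invoke the uniqueness-of-measures corollary directly, after checking that both sides are probability measures, whereas the paper verifies by hand that the class of sets on which the two measures agree is a $\lambda$-system.
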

\begin{proof}
    We follow a strategy used for showing the uniqueness of extensions of measures.
    To that end, recall that a $\Pi$-system on $\Omega^N$ is a non-empty collection of subsets of $\Omega^N$ that is closed under intersections.
    A $\lambda$-system on $\Omega^N$ is a non-empty collection of subsets of $\Omega^N$ that is closed under taking complements and union of disjoint subsets.
    For brevity, define $E = E_1 \times \dotsm \times E_N$ and $\overline{\mathbb{Q}} = \prod_{i = 1}^N \mathbb{Q}_i$.
    First, note that $\mathcal{I} = \{A_1 \times \dotsm \times A_N \mid A_1, \dotsc, A_N \in \Sigma\}$ is a $\Pi$-system on $\Omega^N$ that generates the product $\sigma$-algebra $\Sigma^{\otimes N}$ (by definition of product $\sigma$-algebra). Next, define the set $\mathcal{J} = \{A \in \Sigma^{\otimes N} \mid \mathbb{Q}^N(A | E) = \overline{\mathbb{Q}}(A)\}$ to be the measurable subsets $\Omega^N$ on which $\mathbb{Q}^N(\cdot | E)$ and $\overline{\mathbb{Q}}$ agree. Observe that if $A \in \mathcal{J}$, then $A^c \in \mathcal{J}$ because $\mathbb{Q}^N(A^c | E) = 1 - \mathbb{Q}^N(A | E) = 1 - \overline{\mathbb{Q}}(A) = \overline{\mathbb{Q}}(A)$. Similarly, if $(B_n)$ is a sequence of disjoint sets in $\mathcal{J}$, then $\mathbb{Q}^N(\cup_{i = 1}^\infty B_i | E) = \sum_{i = 1}^\infty \mathbb{Q}^N(B_i | E) = \sum_{i = 1}^\infty \overline{\mathbb{Q}}(B_i) = \overline{\mathbb{Q}}(\cup_{i = 1}^\infty B_i)$. Therefore, $\mathcal{J}$ is a $\lambda$-system.

    Next, for all $A = A_1 \times \dotsm \times A_N \in \mathcal{I}$, we have
    \begin{equation}
        \mathbb{Q}^N(A | E) = \frac{\mathbb{Q}^N(A \cap E)}{\mathbb{Q}^N(E)} = \prod_{i = 1}^N \frac{\mathbb{Q}(A_i \cap E_i)}{\mathbb{Q}(E_i)} = \overline{\mathbb{Q}}(A).
    \end{equation}
    By Dynkin $\pi$-$\lambda$ theorem, we have $\Sigma^{\otimes N} = \sigma(\mathcal{I}) \subseteq \mathcal{J}$. However, $\mathcal{J} \subseteq \Sigma^{\otimes N}$ by definition, and therefore, $\Sigma^{\otimes N} = \mathcal{J}$. It follows that on all measurable subsets of $\Omega^N$, $\mathbb{Q}^N(\cdot | E)$ and $\overline{\mathbb{Q}}$ agree, completing the proof.
\end{proof}

The next technical result we need is the Dvoretzky-Kiefer-Wolfowitz (DKW) inequality, which gives a confidence bound for the deviation of the empirical CDF from the true CDF.
\begin{theorem}[DKW Inequality] \label{thm:DKW}
    Let $X_1,...,X_R$ be iid, real-valued random variables with CDF $F_0$. Let $\widehat{F}_0(x) = \sum_{i = 1}^R \bm{1}_{(-\infty, z]}(X_i) / R$ denote the empirical CDF.
    Then for all $\epsilon > 0$,
    \begin{equation}
        \mathbb{P}^R\left(\sup_{x \in \mathbb{R}} |\hat{F}_0(x) - F_0(x)| > \epsilon \right) \leq 2 \exp(-2 R \epsilon^2).
    \end{equation}
\end{theorem}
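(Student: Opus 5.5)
The statement above is the classical Dvoretzky--Kiefer--Wolfowitz inequality with Massart's tight constant. I would not reprove it from scratch; I would reduce it to the uniform case and then invoke Massart's sharp one-sided bounds. The only sensible plan is to check that the reduction is clean and to cite the hard step.

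\textbf{Step 1 (reduction to uniform variables).} Let $F_0^{-1}(u)=\inf\{x: F_0(x)\ge u\}$ be the generalized inverse. Take $U_1,\dots,U_R$ iid uniform on $[0,1]$; then $F_0^{-1}(U_i)$ has CDF $F_0$, so I may set $X_i=F_0^{-1}(U_i)$. Since $F_0^{-1}(u)\le x \iff u\le F_0(x)$, this gives $\widehat{F}_0(x)=G_R(F_0(x))$, where $G_R$ is the empirical CDF of the $U_i$. Hence
\[
\sup_{x \in \mathbb{R}}|\widehat{F}_0(x)-F_0(x)| = \sup_{u\in F_0(\mathbb{R})}|G_R(u)-u| \le \sup_{u\in[0,1]}|G_R(u)-u|.
\]
So it suffices to prove the bound for uniform samples. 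This step also takes care of atoms of $F_0$, which matters because token entropies can have point masses, for instance at $0$.

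\textbf{Step 2 (split into one-sided deviations).} Write $D_R^+=\sup_u (G_R(u)-u)$ and $D_R^-=\sup_u (u-G_R(u))$. A union bound gives
\[
\mathbb{P}(D_R>\epsilon)\le \mathbb{P}(D_R^+>\epsilon)+\mathbb{P}(D_R^->\epsilon).
\]
By the symmetry $U\mapsto 1-U$, the two one-sided terms have the same distribution. It therefore suffices to show $\mathbb{P}(D_R^+>\epsilon)\le \exp(-2R\epsilon^2)$.

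\textbf{Step 3 (the one-sided bound, the main obstacle).} This is the crux, and it is exactly Massart's theorem (Ann.\ Probab.\ 1990). His result gives $\mathbb{P}(\sqrt{R}D_R^+>\lambda)\le e^{-2\lambda^2}$ whenever $e^{-2\lambda^2}\le 1/2$. In the complementary regime $e^{-2R\epsilon^2}>1/2$, the two-sided right-hand side already exceeds $1$, so the claimed inequality holds trivially.

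A self-contained route to a weaker constant is also available. One can use Smirnov's exact Birnbaum--Tingey formula for $\mathbb{P}(D_R^+>\epsilon)$, or a Hoeffding-plus-chaining/symmetrization argument. However, these recover only the original DKW form $Ce^{-2R\epsilon^2}$ with an unspecified $C$. Getting the leading constant down to $1$ per side requires Massart's delicate analysis of the Birnbaum--Tingey sum. I would therefore cite that result rather than reprove it.

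Combining Steps 1--3 gives $\mathbb{P}^R(\sup_x|\widehat{F}_0(x)-F_0(x)|>\epsilon)\le 2e^{-2R\epsilon^2}$ for all $\epsilon>0$. This is the form used later for the random-length extension, Theorem~\ref{thm:random_length_DKW_informal}. There one conditions on the lengths $T_i$ via Lemma~\ref{lem:conditional_prod_dist} and applies the present bound with $R=\sum_i T_i$.
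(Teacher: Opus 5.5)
Your proposal is correct and matches the paper's treatment. The paper gives no proof of this statement and simply invokes it as the classical DKW inequality in Massart's tight-constant form. Your quantile reduction to uniform samples (which also handles atoms of $F_0$), followed by the one-sided split and a citation of Massart (1990), supplies exactly the justification and attribution that the paper leaves implicit.
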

The reason we cannot directly use the DKW inequality in our study is because the number $R$ or random variable in Theorem~\ref{thm:DKW}, which corresponds in our case to the length of the generated output, is itself random. We, therefore, derive a random-length DKW inequality.

\begin{theorem}[Random-length DKW inequality]
    \label{thm:random_length_DKW}
    Let $(\Omega, \Sigma, \mathbb{P})$ be a probability space, and consider a random variable $\bm{X}\colon \Omega \to \mathbb{R}_M^\star$ with length $T = \textnormal{dim}(\bm{X})$. Suppose that there is some distribution $\mathbb{P}_0$ on $\mathbb{R}$ such that for all $m \in [M]$, the distribution $\mathbb{P}$ satisfies
    \begin{equation}
        \mathbb{P}(\bm{X} \in A| T = m) = \mathbb{P}_0^m(A \cap \mathbb{R}^m) \label{eqn:gen_iid_m_asmp}
    \end{equation}
    for all measurable subsets $A \subseteq \mathbb{R}_M^\star$.
    Then, if $\bm{X}_1, \dotsc, \bm{X}_L$ are iid copies of $\bm{X}$, $T_i = \textnormal{dim}(\bm{X}_i)$ for $i \in [L]$, $F_0$ is the CDF of $\mathbb{P}_0$, and
    \begin{equation}
        \widehat{F}_0(x) = \frac{1}{\sum_{i = 1}^L \textnormal{dim}(\bm{X}_i)} \sum_{i = 1}^L \sum_{k = 1}^{\textnormal{dim}(\bm{X}_i)} \bm{1}_{(-\infty, x]}(X_{i, k})
    \end{equation}
    is the empirical CDF, we have for all $m_1, \dotsc, m_L \in [M]$ and all $\epsilon > 0$,
    \begin{equation}
        \mathbb{P}^L\left(\sup_{x \in \mathbb{R}} |\widehat{F}_0(x) - F_0(x)| \geq \epsilon | T_1 = m_1, \dotsc, T_L = m_l\right) \leq 2 \exp\left(-2 \sum_{i = 1}^L m_i \epsilon^2\right), \label{eqn:condtional_random_length_DKW}
    \end{equation}
    and consequently, 
    \begin{equation}
        \mathbb{P}^L\left(\sup_{x \in \mathbb{R}} |\widehat{F}_0(x) - F_0(x)| \geq \epsilon\right) \leq 2 \left(\mathbb{E}\left[\exp(-2 \textnormal{dim}(\bm{X}) \epsilon^2)\right]\right)^L \leq 2 \exp(-2 L \epsilon^2). \label{eqn:random_length_DKW}
    \end{equation}
\end{theorem}
\begin{proof}
    Since $\bm{X}_1, \dotsc, \bm{X}_L$ are iid copies of $\bm{X}$, $T_1, \dotsc, T_L$ are iid copies of $T$, where $T_i = \textnormal{dim}(\bm{X}_i)$ for $i \in [L]$.
    Fix $m_1, \dotsc, m_L \in [M]$, and consider $E_i = \{\omega \in \Omega \mid T(\omega) = m_i\}$.
    By independence, $\{\omega \in \Omega^L \mid T_1(\omega) = m_1, \dotsc, T_L(\omega) = m_L\} = E_1 \times \dotsm \times E_L$.
    For now, consider only those $m_1, \dotsc, m_L$ for which $\mathbb{P}(E_1), \dotsc, \mathbb{P}(E_L) > 0$. In the proof below, we will see that it suffices to consider such $m_1, \dotsc, m_L$.

    Fix a measurable set $B \subseteq (\mathbb{R}_M^\star)^K$. For $i \in [L]$, denote $\mathbb{P}_i(C) = \mathbb{P}(C | E_i) = \mathbb{P}(C | T = m_i)$ for measurable $C \subseteq \Sigma$. Then, by Lemma~\ref{lem:conditional_prod_dist}, we have $\mathbb{P}^L((\bm{X}_1, \dotsc, \bm{X}_L) \in B | T_1 = m_1, \dotsc, T_L = m_L) = (\prod_{i = 1}^L \mathbb{P}_i)((\bm{X}_1, \dotsc, \bm{X}_L) \in B)$. However, $(\prod_{i = 1}^L \mathbb{P}_i)((\bm{X}_1, \dotsc, \bm{X}_L) \in B) = (\prod_{i = 1}^L \overline{\mathbb{P}}_i)(B)$, where $\overline{\mathbb{P}}_i$ is the pushforward of $\mathbb{P}_i$ under $\bm{X}$. By assumption (see Eq.~\eqref{eqn:gen_iid_m_asmp}), we have $\overline{\mathbb{P}}_i = \mathbb{P}_0^{m_i}$ for all $i \in [L]$. Therefore, $(\prod_{i = 1}^L \mathbb{P}_i)((\bm{X}_1, \dotsc, \bm{X}_L) \in B) = \mathbb{P}_0^N(B)$, where $N = \sum_{i = 1}^L m_i$. In particular, for
    $B = \{(\bm{x}_1, \dotsc, \bm{x}_L) \in \mathbb{R}^{m_1} \times \dotsm \mathbb{R}^{m_L} \mid \sup_{x \in \mathbb{R}} |\frac{1}{N} \sum_{i = 1}^L \sum_{k = 1}^{m_i} \bm{1}_{(-\infty, x]}(x_{i, k}) - F_0(x)| \geq \epsilon\}$, we have by DKW inequality (Theorem~\ref{thm:DKW}), $\mathbb{P}_0^N(B) \leq 2 \exp(-2 N \epsilon^2)$.

    It follows that $\mathbb{P}^L(\sup_{x \in \mathbb{R}} |\widehat{F}_0(x) - F_0(x)| \geq \epsilon | T_1 = m_1, \dotsc, T_L = m_L) \leq 2 \exp(-2 N \epsilon^2)$. Since this holds for all $m_1, \dotsc, m_L$ for which $\mathbb{P}(T = m_1), \dotsc, \mathbb{P}(T = m_L) > 0$, and
    \begin{align*}
        \mathbb{P}^L(\sup_{x \in \mathbb{R}} |\widehat{F}_0(x) - F_0(x)| \geq \epsilon) &= \sum_{i = 1}^L \sum_{m_i = 1}^L \mathbb{P}(T = m_1) \dotsc \mathbb{P}(T = m_L)\\
            &\qquad \mathbb{P}^L(\sup_{x \in \mathbb{R}} |\widehat{F}_0(x) - F_0(x)| \geq \epsilon | T_1 = m_1, \dotsc, T_L = m_L) \\
            &\leq 2 \left(\mathbb{E}\left[\exp\left(-2 T \epsilon^2\right)\right]\right)^L,
    \end{align*}
    where we used $\exp(-2 N \epsilon^2) = \prod_{i = 1}^L \exp(-2 m_i \epsilon^2)$.
\end{proof}

Observe that the bound $2 (\mathbb{E}[\exp(-2 \textnormal{dim}(\bm{X}) \epsilon^2)])^L$ in Theorem~\ref{thm:random_length_DKW} depends on the \textit{average} of $\exp(-2 \textnormal{dim}(\bm{X}) \epsilon^2)$. Crucially, this average is only over the lengths of the generation, and the distribution of the generated length for a given model can usually be empirically estimated. Intuitively, one would expect such an average to appear in the bound in Eq.~\eqref{eqn:random_length_DKW}, because how many data points one can use to construct the empirical CDF $\widehat{F}_0$ depends on the dimensions/lengths of the random vectors $\bm{X}_1, \dotsc, \bm{X}_L$. If the length of the random vector $\bm{X}$ is always bounded below by $\ell \geq 1$, then we must have $2 (\mathbb{E}[\exp(-2 \textnormal{dim}(\bm{X}) \epsilon^2)])^L \leq 2 \exp(-2 L \ell \epsilon^2)$. Thus, regardless of the distribution of the length of $\bm{X}$, as long as $L$ is large enough, we can always get a good concentration. Theorem~\ref{thm:random_length_DKW} further strengthens this by noting that if the length of $\bm{X}$ is large with high probability, then we get an even better concentration.

Using Theorem~\ref{thm:random_length_DKW}, we show below that the calibration in Algorithm~\ref{alg:calibration} well approximates the distribution $\mathbb{P}_0$.
\begin{corollary}
    \label{cor:cdfcalalg_converge}
    Let $\widehat{F}_0$ is the empirical CDF constructed in Algorithm~\ref{alg:calibration}, and denote $\mathcal{D}$ to be the set of token entropy sequences corresponding to the non-hallucinated instances (according to the oracle). If for a fixed $L$, we sample $L$ non-hallucinated instances, then for all $\epsilon > 0$, we have
    \begin{equation}
        \overline{\mathbb{P}}_0^L\left(\sup_{x \in \mathbb{R}} |\widehat{F}_0(x) - F_0(x)| \geq \epsilon\right) \leq 2 (\mathbb{E}[\exp(-2 \textnormal{dim}(\bm{h}) \epsilon^2)])^L \leq 2 \exp(-2 L \epsilon^2). \label{eqn:tail_bound_cal}
    \end{equation}
    In particular, given an error $\epsilon > 0$ and a confidence confidence level $1 - \delta \in (0, 1)$, one needs at most
    \begin{equation}
        \left\lceil \frac{\log(2/\delta)}{2 \epsilon^2} \right\rceil \label{eqn:sample_complexity_cal}
    \end{equation}
    non-hallucinated instances to construct an empirical CDF $\widehat{F}_0$ with at most $\epsilon$ KS distance from $F_0$.
\end{corollary}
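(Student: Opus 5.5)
This corollary follows by specializing Theorem~\ref{thm:random_length_DKW} to the output of Algorithm~\ref{alg:calibration}.

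\textbf{Identifying the random variable.} Take the probability space to be $(\mathcal{E}, \overline{\mathbb{P}}_0)$, the non-hallucinated distribution of Equation~\eqref{eqn:proxy_nonhal_dist}. Let $\bm{X} = \bm{h}$ be the identity map, viewed as a random variable in $\mathbb{R}_M^\star$; this works because $\mathcal{E} \subseteq \mathbb{R}_M^\star$, and measurable sets of $\mathcal{E}$ are traces of measurable sets of $\mathbb{R}_M^\star$ under the disjoint-union $\sigma$-algebra. Assumption~\ref{asmp:iid_m}, in the form of Equation~\eqref{eqn:iid_m_asmp}, is exactly hypothesis~\eqref{eqn:gen_iid_m_asmp} of Theorem~\ref{thm:random_length_DKW}, with $T = \textnormal{dim}(\bm{h})$.

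\textbf{Matching the estimator.} Sampling $L$ non-hallucinated instances means drawing $L$ iid copies $\bm{h}_1, \dotsc, \bm{h}_L \sim \overline{\mathbb{P}}_0$. The set $\mathcal{D}$ in Algorithm~\ref{alg:calibration} consists of precisely these sequences, and $N = \sum_i \textnormal{dim}(\bm{h}_i)$. Hence the $\widehat{F}_0$ returned by the algorithm coincides with the pooled empirical CDF of Theorem~\ref{thm:random_length_DKW}. The algorithm defines $\widehat{F}_0$ only on $[0, \log d]$, but extending it by $0$ to the left and $1$ to the right agrees with both the pooled ECDF and $F_0$, since all samples lie in that interval. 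The supremum over $\mathbb{R}$ is therefore unaffected.

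\textbf{Deriving the tail bound and sample complexity.} Equation~\eqref{eqn:random_length_DKW} now gives the first inequality of Equation~\eqref{eqn:tail_bound_cal}. The second inequality uses $\textnormal{dim}(\bm{h}) \geq 1$, so that $\mathbb{E}[\exp(-2\,\textnormal{dim}(\bm{h})\epsilon^2)] \leq e^{-2\epsilon^2}$. For the sample complexity, require $2e^{-2L\epsilon^2} \leq \delta$, which is equivalent to $L \geq \log(2/\delta)/(2\epsilon^2)$. Taking $L$ equal to the ceiling in Equation~\eqref{eqn:sample_complexity_cal} then gives KS distance below $\epsilon$ with probability at least $1 - \delta$.

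\textbf{Main obstacle.} The delicate point is the phrase ``for a fixed $L$'': the theorem needs $L$ iid draws from $\overline{\mathbb{P}}_0$. Algorithm~\ref{alg:calibration} instead filters $n$ samples from $\mathbb{P}$, so $|\mathcal{D}|$ is random. I would handle this by conditioning on $|\mathcal{D}| = L$ and on which indices survive the filter. Given those events, the surviving sequences are iid from $\overline{\mathbb{P}}_0$, by Lemma~\ref{lem:conditional_prod_dist} applied with $E_i$ equal to the non-hallucination event or its complement. The bound then holds conditionally and needs no further argument.
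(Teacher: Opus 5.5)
Your proposal is correct and follows the paper's proof. Both take $\bm{X} = \bm{h}$ and $\mathbb{P} = \overline{\mathbb{P}}_0$ in Theorem~\ref{thm:random_length_DKW}, with Assumption~\ref{asmp:iid_m} supplying its hypothesis, and both get the sample complexity by solving $2\exp(-2L\epsilon^2) \le \delta$ for $L$. Your additional step of conditioning on the surviving index set via Lemma~\ref{lem:conditional_prod_dist} is valid, and it justifies treating the $|\mathcal{D}| = L$ retained sequences as iid draws from $\overline{\mathbb{P}}_0$, which the paper simply asserts.
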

\begin{proof}
    In Algorithm~\ref{alg:calibration}, by construction, every token entropy sequence $\bm{h} \in \mathcal{D}$ is a non-hallucinated instance, sampled independently from $\overline{\mathbb{P}}_0$. By Assumption~\ref{asmp:iid_m}, $\overline{\mathbb{P}}_0$ satisfies Eq.~\eqref{eqn:gen_iid_m_asmp}. Suppose that we have $L = |\mathcal{D}|$ such entropy sequences.
    Then, taking $\bm{X} = \bm{h}$ and $\mathbb{P} = \overline{\mathbb{P}}_0$ in Theorem~\ref{thm:random_length_DKW}, we get Eq.~\eqref{eqn:tail_bound_cal}.
    The sample complexity in Eq.~\eqref{eqn:sample_complexity_cal} can be obtained by setting $2 \exp(-2 L \epsilon^2) = \delta$ and solving for $L$.
\end{proof}

Note that the number of non-hallucinated instances required for approximating the CDF given in Eq.~\eqref{eqn:sample_complexity_cal} corresponds to the worst-case bound of $2 \exp(-2 L \epsilon^2)$ in Eq.~\eqref{eqn:tail_bound_cal}. In practice, the number of non-hallucinated instances required can be significantly smaller than Eq.~\eqref{eqn:sample_complexity_cal}, based on the tighter bound $2 (\mathbb{E}[\exp(-2 \textnormal{dim}(\bm{h}) \epsilon^2)])^L$ in Eq.~\eqref{eqn:tail_bound_cal}.

\section{Theory for Calibrated Entropy Score}
\label{app:ces_theory}

In this section, we study two main types of results for the Calibrated Entropy Score (CES).
First, in Section~\ref{app:ces_cal}, we study how well one can approximate the ``ideal" CES (computed using the true CDF) with the empirical CES (computed using the empirical CDF).
We show that when the empirical CDF is close to the true CDF in the KS distance, the empirical CES is also close to the ideal CES.
Subsequently, in Section~\ref{app:ces_power}, we analyze the statistical power of CES for hypothesis testing.
Herein, we show that for appropriately chosen threshold, the false positive rate as well as the false negative rate decays to zero exponentially quickly,
demonstrating that CES is a good measure for detecting hallucination.


\subsection{Calibration guarantees for CES}
\label{app:ces_cal}

\begin{proposition}[Calibration guarantee for CES]
\label{prop:ces_calibration}
Let $F_0$ be the true non-hallucinated entropy CDF and let $\widehat{F}_0$ be the empirical CDF from Algorithm~\ref{alg:calibration}
satisfying $d_{KS}(\widehat{F}_0, F_0) \leq \epsilon$ with probability at least $1 - \delta$.
Define the oracle and calibrated scores as
$$
  \mathrm{CES}^*
  = \sqrt{F_0(\bar{h}) \cdot F_0(h_{\max})},
  \qquad
  \widehat{\mathrm{CES}}
  = \sqrt{\widehat{F}_0(\bar{h}) \cdot \widehat{F}_0(h_{\max})}.
$$
given the observed average $\bar{h}$ and maximum $h_{\max}$ entropies.
Then, with probability at least $1 - \delta$ over the calibrated data, we have
\begin{equation}
    \bigl|\widehat{\mathrm{CES}} - \mathrm{CES}^*\bigr| \;\leq\; \sqrt{2 \epsilon}.
\end{equation}
\end{proposition}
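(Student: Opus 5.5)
On the event of probability at least $1-\delta$ where $d_{KS}(\widehat{F}_0, F_0) \leq \epsilon$, the argument is deterministic. Write $a = F_0(\bar h)$, $b = F_0(h_{\max})$, $\hat a = \widehat F_0(\bar h)$, $\hat b = \widehat F_0(h_{\max})$. All four numbers lie in $[0,1]$, and by the definition of the KS distance, evaluated at the two points $\bar h$ and $h_{\max}$, we have $|\hat a - a| \leq \epsilon$ and $|\hat b - b| \leq \epsilon$. The claim then reduces to showing $|\sqrt{\hat a \hat b} - \sqrt{ab}| \leq \sqrt{2\epsilon}$ for any such numbers.

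First we control the products. We have $|\hat a\hat b - ab| \leq |\hat a - a|\,\hat b + a\,|\hat b - b| \leq 2\epsilon$, using $\hat b, a \leq 1$. Next we pass to square roots using the elementary inequality $|\sqrt{x} - \sqrt{y}| \leq \sqrt{|x-y|}$ for $x,y \geq 0$. To verify it, assume without loss of generality $x \geq y$. Then $(\sqrt x - \sqrt y)^2 = x - 2\sqrt{xy} + y \leq x - y$, because $\sqrt{xy} \geq y$. Applying this with $x = \hat a\hat b$ and $y = ab$ gives $|\widehat{\mathrm{CES}} - \mathrm{CES}^*| \leq \sqrt{2\epsilon}$.

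There is no real obstacle, but two points need care. One is recognising that the square-root map is only $\tfrac12$-Hölder near $0$, not Lipschitz; this is why the bound has the form $\sqrt{2\epsilon}$ rather than $O(\epsilon)$, and why the Hölder inequality above is the right tool. The other is confirming that the probability statement transfers directly. The event $d_{KS} \leq \epsilon$ depends only on the calibration data, while $\bar h$ and $h_{\max}$ are arbitrary fixed inputs. The deterministic bound therefore holds simultaneously for all observed $(\bar h, h_{\max})$ on that event, and in particular with probability at least $1-\delta$.
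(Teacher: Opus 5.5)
Your proposal is correct and follows essentially the same route as the paper's proof: on the event $d_{KS}(\widehat{F}_0,F_0)\le\epsilon$ you bound the product difference by $2\epsilon$ via a telescoping decomposition, apply $|\sqrt{x}-\sqrt{y}|\le\sqrt{|x-y|}$, and use uniformity in $(\bar h, h_{\max})$. You additionally verify the square-root inequality, which the paper uses without proof.
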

\begin{proof}
Fix $x, y \in \mathbb{R}$ and write $\hat{a} = \widehat{F}_0(x)$, $a = F_0(x)$, $\hat{b} = \widehat{F}_0(y)$, $b = F_0(y)$.

Consider the event $d_{KS}(\widehat{F}_0, F_0) \leq \epsilon$, which holds with probability $1 - \delta$ by assumption.
On this event, we have both  $|\hat{a} - a| \leq \epsilon$ and $|\hat{b} - b| \leq \epsilon$.
Then,
\begin{align*}
  |\hat{a}\hat{b} - ab|
  &= |\hat{a}(\hat{b} - b) + b(\hat{a} - a)| \\
  &\leq |\hat{a}|\,|\hat{b} - b| + |b|\,|\hat{a} - a| \\
  &\leq 1 \cdot \epsilon + 1 \cdot \epsilon
  = 2\epsilon,
\end{align*}
where we used $\hat{a}, b \in [0,1]$.
Applying $|\sqrt{u} - \sqrt{v}| \leq \sqrt{|u - v|}$ for
$u, v \geq 0$ with $u = \hat{a}\hat{b}$ and $v = ab$:
$$
  \bigl|\sqrt{\hat{a}\hat{b}} - \sqrt{ab}\bigr|
  \leq \sqrt{|\hat{a}\hat{b} - ab|}
  \leq \sqrt{2 \epsilon}.
$$
Since this holds for all $x, y$, we have
$$
  \sup_{x, y \in \mathbb{R}} \bigl|\sqrt{\widehat{F}_0(x) \widehat{F}_0(y)} - \sqrt{F_0(x) F_0(y)}\bigr|
  \leq \sqrt{2 \epsilon}.
$$
Taking $x = \bar{h}$ and $y = h_{\max}$ completes the proof.
\end{proof}

When the CDF values are bounded away from zero, we obtain a
tighter bound.

\begin{proposition}[Refined calibration guarantee]
\label{prop:ces_calibration_refined}
Under the conditions of Proposition~\ref{prop:ces_calibration}, if
additionally
$\min\{F_0(\bar{h}),\, F_0(h_{\max})\} \geq \eta > 0$
and $\epsilon \leq \eta$, then with probability at least
$1 - \alpha$,
$$
  \bigl|\widehat{\mathrm{CES}} - \mathrm{CES}^*\bigr|
  \;\leq\; \frac{2 \epsilon}{\eta} 
$$
\end{proposition}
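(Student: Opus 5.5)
The plan is to work on the same high-probability event as in Proposition~\ref{prop:ces_calibration}, namely $d_{KS}(\widehat{F}_0, F_0) \leq \epsilon$. By assumption this event has probability at least $1-\delta$; I read the ``$1-\alpha$'' in the statement as this same confidence level. On this event I would reuse the product estimate already established there. With $\hat{a} = \widehat{F}_0(\bar{h})$, $a = F_0(\bar{h})$, $\hat{b} = \widehat{F}_0(h_{\max})$ and $b = F_0(h_{\max})$, we have $|\hat{a}\hat{b} - ab| \leq 2\epsilon$, using only that $\hat{a}, b \in [0,1]$. The refinement comes from replacing the crude inequality $|\sqrt{u}-\sqrt{v}| \leq \sqrt{|u-v|}$ by the exact rationalisation
\begin{equation*}
  \bigl|\sqrt{u} - \sqrt{v}\bigr| = \frac{|u - v|}{\sqrt{u} + \sqrt{v}}, \qquad u = \hat{a}\hat{b},\ v = ab,
\end{equation*}
which is valid whenever $\sqrt{u} + \sqrt{v} > 0$.

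The key step is to lower bound the denominator using the new hypothesis. Since $a, b \geq \eta > 0$, we get $\sqrt{v} = \sqrt{ab} \geq \sqrt{\eta \cdot \eta} = \eta$. Hence $\sqrt{u} + \sqrt{v} \geq \eta > 0$, and the rationalisation is legitimate. Combining this with the product bound gives
\begin{equation*}
  \bigl|\widehat{\mathrm{CES}} - \mathrm{CES}^*\bigr| \leq \frac{2\epsilon}{\sqrt{\hat{a}\hat{b}} + \sqrt{ab}} \leq \frac{2\epsilon}{\eta},
\end{equation*}
and this is the claimed bound.

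The only point needing care is the role of the condition $\epsilon \leq \eta$. In the argument above, dropping $\sqrt{u} \geq 0$ makes it unnecessary. Its natural use is to guarantee $\hat{a} \geq a - \epsilon \geq 0$ and $\hat{b} \geq 0$ nontrivially, so that the empirical values stay in the meaningful range and $\sqrt{u}$ is well defined. One could also exploit it for a slightly sharper constant, writing $\sqrt{u} \geq \eta - \epsilon \geq 0$. I expect no genuine obstacle. The main thing to get right is bounding the denominator by $\sqrt{ab}$ alone rather than relying on $\hat{a}, \hat{b}$, so that the bound is deterministic on the good event. I would finish by noting that the bound holds simultaneously for all $(\bar{h}, h_{\max})$ with $\min\{F_0(\bar{h}), F_0(h_{\max})\} \geq \eta$, exactly as in the proof of Proposition~\ref{prop:ces_calibration}.
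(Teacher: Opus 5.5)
Your proof is correct and follows the paper's argument: the same good event, the same product bound $|\hat a\hat b - ab|\le 2\epsilon$, and the same rationalisation with the denominator bounded below via $\sqrt{ab}\ge\eta$. The only difference is that the paper also keeps $\sqrt{\hat a\hat b}\ge \eta-\epsilon$, which gives the sharper $2\epsilon/(2\eta-\epsilon)$, and then uses $\epsilon\le\eta$ only to relax this to $2\epsilon/\eta$. You are right that this hypothesis is not needed for the stated bound. One small correction to your remark: $\hat a,\hat b\ge 0$ holds automatically because $\widehat{F}_0$ is a CDF, not because of $\epsilon\le\eta$.
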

\begin{proof}
We use the notations in the proof of Proposition~\ref{prop:ces_calibration}.
On the event $\sup_z |\widehat{F}_0(z) - F_0(z)| \leq \epsilon$,
we have $\hat{a} \geq a - \epsilon \geq \eta - \epsilon > 0$
and similarly $\hat{b} \geq \eta - \epsilon > 0$.
Using the identity
$|\sqrt{u} - \sqrt{v}| = |u - v| / (\sqrt{u} + \sqrt{v})$
with $u = \hat{a}\hat{b}$ and $v = ab$, we have
$$
  \bigl|\widehat{\mathrm{CES}} - \mathrm{CES}^*\bigr|
  = \frac{|\hat{a}\hat{b} - ab|}
         {\sqrt{\hat{a}\hat{b}} + \sqrt{ab}}.
$$
The numerator is at most $2\epsilon$ by the same argument as
in Proposition~\ref{prop:ces_calibration}.
For the denominator, $\sqrt{ab} \geq \eta$ since $a, b \geq \eta$,
and $\sqrt{\hat{a}\hat{b}} \geq \sqrt{(\eta - \epsilon)^2}
= \eta - \epsilon \geq 0$.
Therefore
$$
  \bigl|\widehat{\mathrm{CES}} - \mathrm{CES}^*\bigr|
  \leq \frac{2\epsilon}{\eta - \epsilon + \eta}
  \leq \frac{2\epsilon}{\eta},
$$
where the last step uses $\epsilon \leq \eta$.
\end{proof}


\begin{remark}
The general bound
$\sqrt{2\epsilon}$ is larger than the error  $\epsilon$ for the KS statistic itself.
However, under $\mathcal{H}_0$ the CDF values are typically
bounded away from zero.  
For instance, if $\bar{h}$ falls near
the median of $F_0$, then $\eta \approx 0.5$ and the refined
bound gives $\mathcal{O}(\epsilon)$.
The worst case $\eta \to 0$ corresponds to generations with
unusually low entropy, which is precisely the regime where CES
would correctly identify the generation as atypical, so the
calibration error is irrelevant for the detection decision.
\end{remark}

\subsection{Power Analysis for CES}
\label{app:ces_power}


We establish that CES has non-trivial power against any alternative $F_1$ that shifts the mean entropy or the max entropy upward.
The analysis proceeds by studying the the mean and the max components of CES separately and then combining them to derive results on the true and false positive rates.

\begin{lemma}[Concentration of the mean component]
\label{lem:mean_power}
Let $\mu_0$ denote the mean of $F_0$ and $\mu_1$ denote the mean of $F_1$. Then, the following statements hold.
\begin{enumerate}
    \item[(a)] Suppose that $h^{(1)}, \dotsc, h^{(m)} \overset{\mathrm{iid}}{\sim} F_1$. Then, for any $\mu \in (-\infty, \mu_1]$, we have
    \begin{equation}
        \mathbb{P}_{F_1}^m\left(F_0(\bar{h}) \geq F_0(\mu)\right) \geq 1 - \exp\left(\frac{-2 m (\mu_1 - \mu)^2}{(\log d)^2}\right).
    \end{equation}

    \item[(b)] Suppose that $h^{(1)}, \dotsc, h^{(m)} \overset{\mathrm{iid}}{\sim} F_0$. Then, for any $\mu \in [\mu_0, \infty)$, we have
    \begin{equation}
        \mathbb{P}_{F_0}^m\left(F_0(\bar{h}) > F_0(\mu)\right) \leq \exp\left(\frac{-2 m (\mu - \mu_0)^2}{(\log d)^2}\right).
    \end{equation}

    \item[(c)] If $F_0$ is continuous at $\mu_1$, then $F_0(\bar{h}) \xrightarrow{\ m \to \infty\ } F_0(\mu_1)$ $F_1$-almost surely.
    Similarly, if $F_0$ is continuous at $\mu_0$, then $F_0(\bar{h}) \xrightarrow{\ m \to \infty\ } F_0(\mu_0)$ $F_0$-almost surely.
\end{enumerate}
\end{lemma}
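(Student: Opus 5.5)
The plan is to reduce each statement about $F_0(\bar h)$ to a statement about $\bar h$ itself, using only that $F_0$ is non-decreasing, and then apply Hoeffding's inequality to the bounded i.i.d.\ entropies, which lie in $[0,\log d]$.

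For part (a), monotonicity of $F_0$ gives the event inclusion $\{\bar h \geq \mu\} \subseteq \{F_0(\bar h) \geq F_0(\mu)\}$. Hence
\[
\mathbb{P}_{F_1}^m\bigl(F_0(\bar h) \geq F_0(\mu)\bigr) \geq 1 - \mathbb{P}_{F_1}^m(\bar h < \mu).
\]
Under $F_1$ the variables $h^{(t)}$ are i.i.d.\ with range width $\log d$ and mean $\mu_1$. Since $\mu \leq \mu_1$, the event $\bar h < \mu$ is the lower deviation $\bar h - \mu_1 < -(\mu_1-\mu)$. The one-sided Hoeffding inequality bounds its probability by $\exp(-2m(\mu_1-\mu)^2/(\log d)^2)$, which is the claimed bound. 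When $\mu = \mu_1$ the bound is trivially $0$ and nothing needs checking.

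For part (b), I use the contrapositive of monotonicity: if $F_0(\bar h) > F_0(\mu)$ then $\bar h > \mu$, because $\bar h \leq \mu$ would force $F_0(\bar h) \leq F_0(\mu)$. This gives $\{F_0(\bar h) > F_0(\mu)\} \subseteq \{\bar h > \mu\}$. Under $F_0$, with $\mu \geq \mu_0$, the upper-tail Hoeffding bound yields
\[
\mathbb{P}_{F_0}^m(\bar h - \mu_0 > \mu - \mu_0) \leq \exp\bigl(-2m(\mu-\mu_0)^2/(\log d)^2\bigr).
\]
The strict versus non-strict inequalities in (a) and (b) must be matched to the correct direction of inclusion. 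This is where sloppiness is most likely, since $F_0$ need not be strictly increasing.

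For part (c), the strong law of large numbers gives $\bar h \to \mu_1$ almost surely under $F_1$; the entropies are bounded, so they are integrable. Continuity of $F_0$ at $\mu_1$ then gives $F_0(\bar h) \to F_0(\mu_1)$ almost surely by the continuous mapping argument applied pathwise. The $F_0$ case is identical with $\mu_0$ in place of $\mu_1$.

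The main obstacle is only the measure-theoretic bookkeeping: the entropies are bounded, so no moment issues arise, and continuity is assumed exactly where it is needed.
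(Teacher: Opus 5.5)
Your proposal is correct and follows the paper's proof essentially step for step. Both use monotonicity of $F_0$ to pass from events about $F_0(\bar h)$ to events about $\bar h$, then one-sided Hoeffding with range width $\log d$ for (a) and (b), and the strong law plus continuity of $F_0$ for (c); your (b) also carries the correct exponent $(\mu-\mu_0)^2$, whereas the paper's written proof has the typo $(\mu_1-\mu)^2$.
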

\begin{proof}
(a) The random variables $h^{(1)}, \dotsc, h^{(m)}$ are bounded between $0$ and $\log(d)$. Since $F_0$ is a monotonically increasing function (as it is a CDF), $F_0(\bar{h}) < F_0(\mu)$ implies $\bar{h} < \mu$. Thus, $\mathbb{P}_{F_1}^m(F_0(\bar{h}) < F_0(\mu)) \leq \mathbb{P}_{F_1}^m(\bar{h} < \mu) = \mathbb{P}_{F_1}^m(\bar{h} - \mu_1 \leq -(\mu_1 - \mu)) \leq \exp(-2 m (\mu_1 - \mu)^2 / \log^2(d))$ by (one-sided) Hoeffding's inequality. Thus, we obtain $\mathbb{P}_{F_1}^m(F_0(\bar{h}) \geq F_0(\mu)) \geq 1 - \exp(-2 m (\mu_1 - \mu)^2 / \log^2(d))$.

(b) Since $F_0$ is a monotonically increasing function, $F_0(\bar{h}) > F_0(\mu)$ implies $\bar{h} > \mu$. Then, by (one-sided) Hoeffding's inequality, we have $\mathbb{P}_{F_0}^m(F_0(\bar{h}) > F_0(\mu)) \leq \mathbb{P}_{F_0}^m(\bar{h} > \mu) = \mathbb{P}_{F_0}^m(\bar{h} - \mu_0 > (\mu - \mu_0)) \leq \exp(-2 m (\mu_1 - \mu)^2 / \log^2(d))$.

(c) By strong law of large numbers, under $F_1$, we have $\bar{h} \to \mu_1$ $F_1$-almost surely. Since $F_0$ is continuous at $\mu$, $F_0(\bar{h}) \to F_0(\mu_1)$ $F_1$-almost surely. Similar arguments hold for samples from $F_0$.
\end{proof}

As a consequence of Lemma~\ref{lem:mean_power}, we can infer that if the mean $\mu_0$ of $F_0$ and the mean $\mu_1$ of $F_1$ are different, then $F_0(\bar{h})$ will asymptotically converge to $F_0(\mu_0)$ under $F_0$ and to $F_0(\mu_1)$ under $F_1$.
This contributes to the ability of CES in distinguishing between $F_0$ and $F_1$.



Now, we study the behaviour of the max component of CES.

\begin{lemma}[Concentration of the max component]
\label{lem:max_power}
Suppose $h^{(1)}, \dotsc, h^{(m)} \overset{\mathrm{iid}}{\sim} F_1$ and let $\zeta_1 = \sup\{z : F_1(z) < 1\}$ be the right endpoint of the support of $F_1$.
Similarly, let $\zeta_0 = \sup\{z : F_0(z) < 1\}$ be the right endpoint of $F_0$. Then, the following hold.
\begin{enumerate}
    \item[(a)] For all $z \in \mathbb{R}$, $\mathbb{P}_{F_1}^m(h_{\max} \leq z) = F_1(z)^m$.
    \item[(b)] For $z \in \mathbb{R}$, we have
    \begin{equation}
      \mathbb{P}_{F_1}^m\bigl(F_0(h_{\max}) \geq F_0(z)\bigr) \geq 1 - F_1(z)^m.
    \end{equation}
    \item[(c)] If $\zeta_1 > \zeta_0$ and $F_0$ is continuous at $\zeta_1$, then $F_0(h_{\max}) \xrightarrow{m \to \infty} 1$ $F_1$-almost surely.
\end{enumerate}
\end{lemma}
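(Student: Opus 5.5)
The plan is to prove the three parts in order, with (a) an exact product computation, (b) a monotonicity consequence of (a), and (c) an almost-sure limit argument.

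For (a), note that $\{h_{\max} \leq z\} = \bigcap_{t=1}^m \{h^{(t)} \leq z\}$. The draws are iid from $F_1$, so the probability factorizes:
\[
\mathbb{P}_{F_1}^m(h_{\max}\le z)=\prod_{t=1}^m F_1(z)=F_1(z)^m .
\]

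For (b), I use the monotonicity of $F_0$ exactly as in Lemma~\ref{lem:mean_power}. If $F_0(h_{\max}) < F_0(z)$, then $h_{\max} < z$, since $h_{\max} \geq z$ would force $F_0(h_{\max}) \geq F_0(z)$. This gives
\[
\mathbb{P}_{F_1}^m\bigl(F_0(h_{\max}) < F_0(z)\bigr) \leq \mathbb{P}_{F_1}^m(h_{\max} < z) \leq \mathbb{P}_{F_1}^m(h_{\max} \leq z) = F_1(z)^m
\]
by (a), and taking complements yields the claim.

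For (c), I first show that $h_{\max} \to \zeta_1$ holds $F_1$-almost surely. Work on the infinite product space with an iid sequence $h^{(1)}, h^{(2)}, \dotsc$, so that $h_{\max} = h_{\max}^{(m)}$ is nondecreasing in $m$. Every draw satisfies $h^{(t)} \leq \zeta_1$ almost surely. This holds because $\zeta_1 \leq \log d < \infty$ and $F_1(\zeta_1) = \lim_{z \downarrow \zeta_1} F_1(z) = 1$ by right-continuity. Hence $h_{\max} \leq \zeta_1$ a.s. for every $m$.

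Now fix $\delta > 0$. By the definition of $\zeta_1$ as a supremum, $F_1(\zeta_1 - \delta) < 1$. By (a),
\[
\mathbb{P}(h_{\max}^{(m)} \leq \zeta_1 - \delta) = F_1(\zeta_1 - \delta)^m,
\]
which is summable in $m$. Borel--Cantelli, or simply monotonicity together with the fact that this probability tends to $0$, gives $h_{\max}^{(m)} > \zeta_1 - \delta$ eventually, almost surely. Intersecting over $\delta = 1/k$ gives $h_{\max}^{(m)} \to \zeta_1$ a.s.

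Next, use $\zeta_1 > \zeta_0$. Since $F_0(z) = 1$ for all $z > \zeta_0$ (again using right-continuity at $\zeta_0$ if needed), we get $F_0(\zeta_1) = 1$. Continuity of $F_0$ at $\zeta_1$ then gives $F_0(h_{\max}) \to F_0(\zeta_1) = 1$ almost surely.

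A more direct alternative avoids continuity altogether. Pick $z \in (\zeta_0, \zeta_1)$. Eventually $h_{\max} > z$ a.s., and on that event $F_0(h_{\max}) \geq F_0(z) = 1$. This shows the continuity assumption is not really needed.

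The main obstacle is the almost-sure convergence $h_{\max} \to \zeta_1$. It needs a little care about the boundary value $\zeta_1$, since the supremum may or may not be attained, and about passing from probabilities at each fixed $m$ to an almost-sure statement. Monotonicity of the running maximum handles the second point cleanly.
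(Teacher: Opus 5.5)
Your proof is correct and follows the paper's argument: the same product factorization for (a), the same monotonicity-plus-complement step for (b), and for (c) the same route via $h_{\max}\to\zeta_1$ almost surely together with $F_0(\zeta_1)=1$. It is in fact more complete than the paper's version of (c). The paper only asserts $h_{\max}\to\zeta_1$, while you prove it from $F_1(\zeta_1-\delta)<1$ and monotonicity of the running maximum. Your remark that the continuity hypothesis is unnecessary is also right; the hypothesis even holds automatically, since $F_0\equiv 1$ on the open set $(\zeta_0,\infty)\ni\zeta_1$.
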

\begin{proof}
(a) Since $h^{(1)}, \dotsc, h^{(m)}$ are iid, we have $\mathbb{P}_{F_1}(\max_{i \in [m]} h^{(i)} \leq z) = \mathbb{P}_{F_1}(h^{(1)} \leq z, \dotsc, h^{(m)} \leq z) = (\mathbb{P}_{F_1}(h \leq z))^m$.

(b) By part (a), we have $\mathbb{P}_{F_1}^m(h_{\max} > z) = 1 - F_1(z)^m$.
Now, observe that $h_{\max} > z$ implies $F_0(h_{\max}) \geq F_0(z)$, since $F_0$ is a monotonically increasing function.
Therefore, $\mathbb{P}_{F_1}^m(F_0(h_{\max}) \geq F_0(z)) \geq \mathbb{P}_{F_1}^m(h_{\max} > z) = 1 - F_1(z)^m$.

(c) Since $F_0(\zeta_0) = 1$ and $\zeta_0 < \zeta_1$, we have $F_0(\zeta_1) = 1$. Since $\zeta_1 < \infty$ (because $h$ is a bounded random variable), we have $h_{\max} \to \zeta_1$ $F_1$-almost surely. By continuity of $F_0$ at $\zeta_1$, we have $F_0(h_{\max}) \to 1$ $F_1$-almost surely.
\end{proof}
Note that in Lemma~\ref{lem:max_power}(b), for $z \geq \zeta_1$, we have have the trivial bound of $1 - F_1(\zeta)^m = 0$ on the probability, and therefore, it suffices to restrict to $z < \zeta_1$. Furthermore, by definition of $\zeta_1$, for $z < \zeta_1$, we have $F_1(z) < 1$, so that the bound $1 - F_1(z)^m$ converges to $1$ exponentially quickly with $m$.


We are now ready to combine results on the mean and the max components to derive useful properties of CES.
In the following, for a cumulative distribution function $F$, we define $F^{-1}(p) = \inf \{x \in \mathbb{R} \mid F(x) \geq p\}$ for $p \in [0, 1]$.

\begin{theorem}[Power of CES]
\label{thm:ces_power}
Suppose that $h^{(1)}, \dotsc, h^{(m)}$ are sampled i.i.d. according to either $F_0$ (non-hallucinated distribution) or $F_1$ (hallucinated distribution).
Define $\textnormal{CES} = \sqrt{F_0(\bar{h}) F_0(h_{\max})}$.
Denote the mean of $F_0$ as $\mu_0$ and $F_1$ as $\mu_1$.
Let $\zeta_0 = \sup \{z \in \mathbb{R} \mid F_0(z) < 1\}$ and $\zeta_1 = \sup \{z \in \mathbb{R} \mid F_1(z) < 1\}$ be the right end points of $F_0$ and $F_1$ respectively.
Let $\mu$ and $\zeta$ be tunable parameters that can be used to set a threshold.
Then, the following hold.

\begin{enumerate}
    \item[(a)] \textbf{True positive rate.}
    If the tunable parameters satisfy $\mu \in (-\infty, \mu_1)$ and $\zeta \in (-\infty, \zeta_1)$, we have
    \begin{equation}
      \mathbb{P}_{F_1}^m(\mathrm{CES} \geq \sqrt{F_0(\mu) F_0(\zeta)})
      \geq 1 - F_1(\zeta)^m - \exp\left(\frac{-2m (\mu_1 - \mu)^2}{(\log d)^2}\right). \label{eqn:ces_tpr_bd}
    \end{equation}
    If, in addition, $\sqrt{F_0(\mu) F_0(\zeta)} > F_0(\mu_1)$ holds, then we have the tighter bound
    \begin{equation}
    \begin{aligned}
      &\mathbb{P}_{F_1}^m(\mathrm{CES} \geq \sqrt{F_0(\mu) F_0(\zeta)}) \\
      &\geq 1 - \min\bigg\{\exp\left(\frac{-2m (F_0^{-1}(\sqrt{F_0(\mu) F_0(\zeta)} - \mu_1))^2}{(\log d)^2}\right), \\
      &\hspace{2cm}         F_1(\zeta)^m + \exp\left(\frac{-2m (\mu_1 - \mu)^2}{(\log d)^2}\right)\bigg\}.
    \end{aligned}
    \end{equation}

    \item[(b)] \textbf{False positive rate.} If the tunable parameters satisfy $\mu \in (\mu_0, \infty)$ and $\zeta \in (\zeta_0, \infty)$, we have
    \begin{equation}
      \mathbb{P}_{F_0}^m(\mathrm{CES} > \sqrt{F_0(\mu) F_0(\zeta)})
      \leq \exp\left(\frac{-2m (\mu - \mu_0)^2}{(\log d)^2}\right). \label{eqn:ces_fpr_bd}
    \end{equation}

    \item[(c)] \textbf{Consistency.} For tunable parameters in the range $\mu \in (\mu_0, \mu_1)$ and $\zeta \in (\zeta_0, \zeta_1)$,
    \begin{align}
        & \mathbb{P}_{F_1}^m(\textnormal{CES} \geq \sqrt{F_0(\mu) F_0(\zeta)}) \xrightarrow{\ m \to \infty\ } 1 && \textnormal{(True positive rate goes to $1$)} \\
        & \mathbb{P}_{F_0}^m(\textnormal{CES} > \sqrt{F_0(\mu) F_0(\zeta)})  \xrightarrow{\ m \to \infty\ } 0 && \textnormal{(False positive rate goes to $0$)}.
    \end{align}
\end{enumerate}
\end{theorem}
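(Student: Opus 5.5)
Throughout, write $\tau = \sqrt{F_0(\mu) F_0(\zeta)}$ for the threshold. The plan is to reduce every statement about $\mathrm{CES}=\sqrt{F_0(\bar h)F_0(h_{\max})}$ to statements about the two components separately, using Lemma~\ref{lem:mean_power} and Lemma~\ref{lem:max_power}. The only structural facts needed are that $u\mapsto\sqrt{u}$ is monotone and that $F_0(\bar h),F_0(h_{\max})\in[0,1]$.

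For (a), I will argue by inclusion. On the event $\{F_0(\bar h)\ge F_0(\mu)\}\cap\{F_0(h_{\max})\ge F_0(\zeta)\}$ we have $\mathrm{CES}\ge \tau$. A union bound on the complements, with Lemma~\ref{lem:mean_power}(a) applied at $\mu<\mu_1$ and Lemma~\ref{lem:max_power}(b) applied at $z=\zeta$, then gives the first bound.

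For the tighter bound, I will use $F_0(h_{\max})\le 1$, so that $\mathrm{CES}\ge F_0(\bar h)$. Hence $\mathrm{CES}\ge \tau$ whenever $F_0(\bar h)\ge \tau$. This holds in particular when $\bar h\ge F_0^{-1}(\tau)$, by the generalized-inverse property $F_0(F_0^{-1}(p))\ge p$ (right-continuity). The hypothesis $\tau>F_0(\mu_1)$ forces $F_0^{-1}(\tau)>\mu_1$, which is the regime where this route is useful. I will then bound $\mathbb{P}_{F_1}^m(\bar h<F_0^{-1}(\tau))$ by Hoeffding in terms of the gap $F_0^{-1}(\tau)-\mu_1$.

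The main obstacle in (a) is the direction of this gap. Since $F_0^{-1}(\tau)>\mu_1$, Hoeffding bounds $\mathbb{P}(\bar h\ge F_0^{-1}(\tau))$ from above, not $\mathbb{P}(\bar h<F_0^{-1}(\tau))$. So the mean-only route actually gives an upper bound on the power, and this part of the statement may need re-examination. I would first check whether the intended claim is instead $\tau<F_0(\mu_1)$, with gap $\mu_1-F_0^{-1}(\tau)$. In that case the argument above goes through verbatim, and taking the minimum of the two failure probabilities yields the stated form.

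For (b), I will use that $\zeta>\zeta_0$ gives $F_0(\zeta)=1$, so $\tau=\sqrt{F_0(\mu)}$. Then $\mathrm{CES}>\tau$ forces $F_0(\bar h)F_0(h_{\max})>F_0(\mu)$. Since $F_0(h_{\max})\le1$, this forces $F_0(\bar h)>F_0(\mu)$, and Lemma~\ref{lem:mean_power}(b) finishes. Along the way I will correct the typo $\mu_1$ to $\mu_0$ in that lemma's proof. For (c), the exponents in (a) and (b) are strictly negative multiples of $m$ when $\mu\in(\mu_0,\mu_1)$. Moreover $F_1(\zeta)<1$ for $\zeta<\zeta_1$, so both error terms vanish as $m\to\infty$, giving the two limits.
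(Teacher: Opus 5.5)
Your treatment of the first bound in (a), of (b), and of (c) is the paper's argument, and you rightly fix the $\mu_1\to\mu_0$ typo in the proof of Lemma~\ref{lem:mean_power}(b). On the tighter bound in (a) your suspicion is correct, and the problem is not specific to your route. The paper's proof takes exactly the step you balk at: it asserts $\mathbb{P}_{F_1}^m(\bar h-\mu_1<F_0^{-1}(\tau)-\mu_1)\le\exp(-2m(F_0^{-1}(\tau)-\mu_1)^2/(\log d)^2)$, even though $F_0^{-1}(\tau)-\mu_1>0$, so that event has probability tending to one.

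The bound as stated is in fact false. Take $F_0$ uniform on $[0,\log d]$, and let $F_1$ put mass $1-\eta$ at $0.3\log d$ and mass $\eta$ at $\log d$. Choose $\mu=0.29\log d$ and $\zeta=0.99\log d$. Then $\mu<\mu_1=(0.3+0.7\eta)\log d$, $\zeta<\zeta_1=\log d$, and $\tau=\sqrt{0.29\cdot 0.99}\approx 0.536>F_0(\mu_1)$ for small $\eta$. If no draw equals $\log d$, then $\mathrm{CES}=0.3<\tau$. Otherwise $F_0(h_{\max})=1$ and $\mathrm{CES}\ge\sqrt{0.3}>\tau$. So the power is exactly $1-(1-\eta)^m$.

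For fixed $m$ and $\eta\to 0$, the power tends to $0$. Meanwhile the second entry of the minimum tends to $1+e^{-0.0002m}>1$, so the stated bound tends to $1-e^{-0.111m}>0$. For instance, $m=1$ and $\eta=0.01$ give power $0.01$ against a claimed $\approx 0.1$.

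Your repair is the right one. Under $\tau\le F_0(\mu_1)$ one has $F_0^{-1}(\tau)\le\mu_1$, so lower-tail Hoeffding applies with gap $\mu_1-F_0^{-1}(\tau)$. Since only the squared gap appears, the displayed formula holds once the hypothesis is reversed.

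One step of yours needs fixing. You justify $\mathrm{CES}\ge F_0(\bar h)$ by $F_0(h_{\max})\le 1$, but that inequality only gives the upper bound $\mathrm{CES}\le\sqrt{F_0(\bar h)}$. That upper bound is what you correctly use in (b). The lower bound comes from $h_{\max}\ge\bar h$ and monotonicity of $F_0$, which give $F_0(h_{\max})\ge F_0(\bar h)$ and hence $\mathrm{CES}\ge\sqrt{F_0(\bar h)^2}=F_0(\bar h)$, as in the paper. Your corrected bound rests on this.

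Also, the mean-only route does not give an upper bound on the power, because the max component can lift $\mathrm{CES}$ above $\tau$, as in the example. It only shows that the lower bound obtained from the sufficient event $\{\bar h\ge F_0^{-1}(\tau)\}$ is exponentially small, and hence vacuous when $\tau>F_0(\mu_1)$.
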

\begin{proof}
(a) Consider the events $A = \{F_0(\bar{h}) \geq F_0(\mu)\}$ and $B = \{F_0(h_{\max}) \geq F_0(\zeta)\}$.
By Lemma~\ref{lem:mean_power}(a), we have $\mathbb{P}_{F_1}(A^c) = \mathbb{P}_{F_1}^m(F_0(\bar{h}) < F_0(\mu)) \leq \exp(-2 m (\mu_1 - \mu)^2)$ for $\mu \leq \mu_1$.
On the other hand, by Lemma~\ref{lem:max_power}(b), we have $\mathbb{P}_{F_1}(B^c) = \mathbb{P}_{F_1}^m(F_0(h_{\max}) < F_0(\zeta)) \leq F_1(\zeta)^m$ for $\zeta < \zeta_1$.
Furthermore, on $A \cap B$, we have $\textnormal{CES} \geq \sqrt{F_0(\mu) F_0(\zeta)}$.
Therefore, $\mathbb{P}_{F_1}^m(\textnormal{CES} \geq \sqrt{F_0(\mu) F_0(\zeta)}) \geq \mathbb{P}_{F_1}^m(A \cap B) \geq 1 - \mathbb{P}_{F_1}^m(A^c) - \mathbb{P}_{F_1}^m(B^c) \geq 1 - F_1(\zeta)^m - \exp(-2 m (\mu_1 - \mu)^2 / \log^2(d))$.

Now, denote $c = \sqrt{F_0(\mu) F_0(\zeta)}$ for convenience and assume that $c > F_0(\mu_1)$.
Since $h_{\max} \geq \bar{h}$ and $F_0$ is monotonically increasing, we have $F_0(h_{\max}) \geq F_0(\bar{h})$, so that $\textnormal{CES} \geq F_0(\bar{h})$ always holds.
Furthermore, $F_0(\bar{h}) < c$ implies $\bar{h} < F_0^{-1}(c)$.
Therefore, $\mathbb{P}_{F_1}^m(\textnormal{CES} < c) \leq \mathbb{P}_{F_1}^m(F_0(\bar{h}) < c) \leq \mathbb{P}_{F_1}^m(\bar{h} < F_0^{-1}(c))$.
Since $1 \geq c > F_0(\mu_1)$, we have $\mu_1 < F_0^{-1}(c)$. Thus, by one-sided Hoeffding's inequality, we have
\begin{equation}
    \mathbb{P}_{F_1}^m(\bar{h} < F_0^{-1}(c)) = \mathbb{P}_{F_1}^m(\bar{h} - \mu_1 < F_0^{-1}(c) - \mu_1) \leq \exp\left(\frac{-2 m (F_0^{-1}(c) - \mu_1)^2}{(\log d)^2}\right).
\end{equation}
Combining the inequalities, we obtain $\mathbb{P}_{F_1}^m(\textnormal{CES} < \sqrt{F_0(\mu) F_0(\zeta)}) \leq \min\{\exp(-2 m (F_0^{-1}(c) - \mu_1)^2 / \log^2(d)), F_1(\zeta)^m + \exp(-2 m (\mu_1 - \mu)^2 / \log^2(d))\}$.

(b) First, note that for $\zeta > \zeta_0$, we have $F_0(\zeta) = 1$, so that $\sqrt{F_0(\mu) F_0(\zeta)} = \sqrt{F_0(\mu)}$. We also have $\textnormal{CES} \leq \sqrt{F_0(\bar{h})}$ since $F_0(h_{\max}) \leq 1$. Therefore, $\mathbb{P}_{F_0}^m(\textnormal{CES} > \sqrt{F_0(\mu) F_0(\zeta)}) \leq \mathbb{P}_{F_0}^m(\sqrt{F_0(\bar{h})} > \sqrt{F_0(\mu)}) = \mathbb{P}_{F_0}^m(\bar{h} > \mu) = \mathbb{P}_{F_0}^m(\bar{h} - \mu_0 > \mu - \mu_0) \leq \exp(-2 m (\mu - \mu_0)^2 / \log^2(d))$, where we used one-sided Hoeffding's inequality in the last step for the bounded random variables $h^{(1)}, \dotsc, h^{(m)}$.

(c) For $\zeta < \zeta_1$, we must have $F_1(\zeta) < 1$ by definition of $\zeta_1$, and therefore, $F_1(\zeta)^m \to 0$ as $m \to \infty$. It follows from Eq.~\eqref{eqn:ces_tpr_bd} that true positive rate goes to $1$ as $m \to \infty$.
On the other hand, Eq.~\eqref{eqn:ces_fpr_bd} implies that the false positive rate goes to $0$ as $m \to \infty$.
\end{proof}

Theorem~\ref{thm:ces_power} shows that in the ideal scenario, as the number of samples becomes sufficiently large, both the Type I error (false positives) and Type II error (false negatives) can be made small enough. This observation can be used to determine a threshold $c$ for a hypothesis test $\bm{1}_{\textnormal{CES} > c}$ based on CES ($\textnormal{CES} > c$ means we declare hallucination). Choosing $c$ is equivalent to choosing an appropriate $\mu$ and $\zeta$. This choise can be made, for example, by maximizing the true positive rate, subject to the constraint that the false positive rate is bounded above by a small number $\alpha$. This is equivalent to minimizing the Type II error while bounding the Type I error by $\alpha$. In practice, this can be done, for example, by computing the ROC (which is the plot of true positive rate vs false positive rate) using some test samples set aside for this purpose, and choosing the threshold that gives the largest value on the $y$-axis (true positive rate) for while restricting to values less than or equal to $\alpha$ on the $x$-axis (false positive rate).

\section{Experimental Details and Design Choices}\label{app:exp_details}

This appendix documents the key design choices made in our experimental pipeline, together with ablations justifying each decision.
We also provide additional plots that complement the main paper.

\subsection{Datasets}\label{app:datasets}

All eight evaluation datasets are open-ended QA or mathematical reasoning tasks where model answers are multi-token (typically 5--50+ words).
Entropy-based hallucination detection metrics require sufficient output tokens to produce a meaningful distributional signal; single- or two-token outputs (as in multiple-choice benchmarks) do not provide enough observations for reliable statistical testing.
Table~\ref{tab:datasets} summarises each dataset together with its source, hosting path, evaluation split, sample size, maximum generation length, and rationale for inclusion.

\begin{table}[h]
\centering
\caption{\textbf{Dataset inventory.} All datasets produce multi-token open-ended answers suitable for entropy-based detection.}
\label{tab:datasets}
\resizebox{\textwidth}{!}{%
\begin{tabular}{@{}llllccl@{}}
\toprule
\textbf{Dataset} & \textbf{Source} & \textbf{HF ID / Path} & \textbf{Split} & $N$ & \textbf{Max Tok.} & \textbf{Description} \\
\midrule
TriviaQA & \citet{joshi2017triviaqa} & \texttt{TimoImhof/TriviaQA-in-SQuAD-format} & unmodified & 500 & 128 & Open-ended factoid QA; multi-sentence answers \\
Natural Questions & \citet{kwiatkowski2019natural} & \texttt{google-research-datasets/nq\_open} & validation & 500 & 128 & Real Google queries; meaningful uncertainty variation \\
SQuAD v2 & \citet{rajpurkar2018know} & \texttt{rajpurkar/squad\_v2} & validation & 500 & 128 & Standard extractive QA; includes unanswerable questions \\
BioASQ & \citet{tsatsaronis2015overview} & Local: \texttt{training11b.json} & train & 500 & 128 & Domain-specialised (biomedical) QA \\
DROP & \citet{dua2019drop} & \texttt{ucinlp/drop} & validation & 500 & 128 & Discrete reasoning (counting, sorting, arithmetic) \\
CoQA & \citet{reddy2019coqa} & \texttt{stanfordnlp/coqa} & validation & 500 & 128 & Conversational QA with coreference \\
GSM8K & \citet{cobbe2021training} & \texttt{openai/gsm8k} & test & 500 & 256 & Math chain-of-thought; high entropy variance \\
SVAMP & \citet{patel2021nlp} & \texttt{ChilleD/SVAMP} & test & 300 & 256 & Adversarial math word problems \\
\bottomrule
\end{tabular}%
}
\end{table}

\textbf{Split selection rationale.}
We use the \texttt{test} split when public gold labels are available (GSM8K, SVAMP).
We use \texttt{validation} when the official test set is hidden behind a leaderboard and no public labels exist (SQuAD~v2, Natural Questions, DROP, CoQA).
For TriviaQA, the \texttt{TimoImhof/TriviaQA-in-SQuAD-format} repository provides only a single split labelled ``unmodified'' containing the full dataset reformatted into SQuAD-style JSON; there is no train/val/test partition in that particular HuggingFace repository.
For BioASQ, competition test sets are released per round and are not redistributed publicly; the \texttt{training11b.json} file is the only freely available labelled data after registration at \url{bioasq.org}.

\subsection{Model Inventory}\label{app:models}

We evaluate 16 models spanning base, instruction-tuned, chat-tuned, and proprietary API variants.
Table~\ref{tab:models} provides the full inventory.

\begin{table}[h]
\centering
\caption{\textbf{Model inventory.} All open-weight models are run in FP16 without quantisation.}
\label{tab:models}
\begin{tabular}{@{}llll@{}}
\toprule
\textbf{Model} & \textbf{Type} & \textbf{Precision} & \textbf{GPU Strategy} \\
\midrule
Llama-2-7B & Base & FP16 & Single GPU \\
Llama-2-13B & Base & FP16 & Single GPU \\
Llama-2-70B & Base & FP16 & \texttt{device\_map="auto"} across 8 GPUs \\
Falcon-7B-Instruct & Instruct & FP16 & Single GPU \\
Falcon-40B-Instruct & Instruct & FP16 & \texttt{device\_map="auto"} across 8 GPUs \\
Mistral-7B-v0.3 & Base & FP16 & Single GPU \\
Llama-3.2-1B & Base & FP16 & Single GPU \\
Llama-3.2-1B-Instruct & Instruct & FP16 & Single GPU \\
Llama-2-7B-Chat & Chat & FP16 & Single GPU \\
Llama-2-13B-Chat & Chat & FP16 & Single GPU \\
Llama-2-70B-Chat & Chat & FP16 & \texttt{device\_map="auto"} across 8 GPUs \\
Meta-Llama-3-8B-Instruct & Instruct & FP16 & Single GPU \\
\midrule
gpt-4o-mini-2024-07-18 & API & N/A & Azure OpenAI \\
gpt-4.1-nano-2025-04-14 & API & N/A & Azure OpenAI \\
gpt-4.1-mini-2025-04-14 & API & N/A & Azure OpenAI \\
gpt-4.1-2025-04-14 & API & N/A & Azure OpenAI \\
\bottomrule
\end{tabular}
\end{table}

Multi-GPU inference uses HuggingFace Accelerate with \texttt{device\_map="auto"}, which automatically shards model layers across all available GPUs via pipeline parallelism (40\,GiB maximum per device).
For single-GPU models, the shell script runs all eight datasets in parallel, assigning one dataset per GPU in round-robin fashion via \texttt{--parallel \$NUM\_GPUS}.

\subsection{Generation Parameters}\label{app:generation}

Table~\ref{tab:gen_params} summarises the generation configuration for both primary (greedy) and multi-trajectory ($K{=}10$) settings.

\begin{table}[h]
\centering
\caption{\textbf{Generation parameters.}}
\label{tab:gen_params}
\resizebox{\textwidth}{!}{%
\begin{tabular}{@{}lllll@{}}
\toprule
\textbf{Setting} & \textbf{Local Primary} & \textbf{Local Trajectories ($K{=}10$)} & \textbf{API Primary} & \textbf{API Trajectories ($K{=}10$)} \\
\midrule
Temperature & 0 (greedy, \texttt{do\_sample=False}) & 1.0 (\texttt{do\_sample=True}) & 0.7 & 1.0 \\
\texttt{top\_k} & 0 (off) & 0 (off) & N/A & N/A \\
\texttt{top\_p} & 1.0 (off) & 1.0 (off) & N/A & N/A \\
Precision & FP16 & FP16 & N/A & N/A \\
Quantisation & None & None & N/A & N/A \\
Batching & Left-padded, variable batch size & Same prompt $\times\, K$ & Sequential & Sequential \\
Early stopping & \texttt{StoppingCriteriaSub} + post-hoc truncation & Post-hoc truncation only & N/A (API handles) & N/A \\
Logprobs & Full logits via \texttt{output\_scores=True} & Same & \texttt{top\_logprobs=20} & \texttt{top\_logprobs=20} \\
\bottomrule
\end{tabular}%
}
\end{table}

\textbf{Per-dataset batch sizes.}
Table~\ref{tab:batch_sizes} reports the base batch sizes before model-dependent scaling.
Scaling factors are: 70B $\to$ $/8$, 40B $\to$ $/4$, 13B $\to$ $/2$, and 7B/8B/3B/1B $\to$ $/1$.
These factors apply to the generation and FAVA batches but not to DeBERTa.

\begin{table}[h]
\centering
\caption{\textbf{Per-dataset batch sizes} (base, before model scaling).}
\label{tab:batch_sizes}
\resizebox{\textwidth}{!}{%
\begin{tabular}{@{}lccccl@{}}
\toprule
\textbf{Datasets} & \textbf{Gen} & \textbf{FAVA} & \textbf{DeBERTa} & \textbf{Max Tok.} & \textbf{Stop Sequences} \\
\midrule
triviaqa, nq\_open, squad, bioasq & 32 & 16 & 128 & 128 &
\texttt{Question:}, \texttt{Context:} \\
drop, coqa & 16 & 8 & 64 & 128 & (same as above) \\
gsm8k, svamp & 8 & 4 & 32 & 256 &
\texttt{Question:}, \texttt{Context:} \\
\bottomrule
\end{tabular}%
}
\end{table}

Table~\ref{tab:seeds} lists all random seeds used in the pipeline.

\begin{table}[h]
\centering
\caption{\textbf{Random seeds.}}
\label{tab:seeds}
\begin{tabular}{@{}lcl@{}}
\toprule
\textbf{Component} & \textbf{Seed} & \textbf{Controls} \\
\midrule
Dataset sampling (local) & 10 & Which 500 samples are drawn from each split \\
Dataset sampling (API) & 42 & Same, for API experiments \\
Few-shot example selection & 42 & Which examples form the 5-shot prefix \\
Bootstrap CI (AUROC/AUARC) & 42 & 1000-iteration bootstrap resampling \\
\bottomrule
\end{tabular}
\end{table}

\subsection{Hallucination Oracle}\label{app:oracle}

We adopt an LLM-as-a-Judge framework for hallucination labelling.
The primary judge for open-weight (local) models is \texttt{gpt-4.1-nano-2025-04-14}; for API-based models we use \texttt{gpt-5.4-2026-03-05}.
The judge is called at temperature $0.0$ (greedy) with a maximum of 256 tokens per item and a batch size of 20 items per API call.
For \texttt{gpt-5.4} (a routing model), no temperature parameter is exposed; for reasoning-class models (\texttt{gpt-5.*}, \texttt{o1}, \texttt{o3}), the \texttt{system} role is replaced by the \texttt{developer} role and \texttt{max\_completion\_tokens} is used in place of \texttt{max\_tokens}.

\textbf{Judge prompts.}
The single-item evaluation uses the following system and user prompts:

\begin{tcolorbox}[colback=gray!5, colframe=gray!50, title=Single-item judge prompt]
\small
\textbf{System:} You are an expert evaluator. Your task is to determine whether an AI-generated answer is factually correct when compared to a known ground-truth answer. Be lenient with formatting differences but strict on factual content. Reply ONLY with a JSON object.

\medskip
\textbf{User:}\\
Question: \{question\}\\
Ground-truth answer: \{ground\_truth\}\\
AI-generated answer: \{generated\_answer\}\\[4pt]
Is the AI-generated answer factually correct (i.e., consistent with the ground-truth)? Reply with a JSON object containing exactly three keys:\\
\texttt{\{"verdict": "correct" or "incorrect", "confidence": float 0-1, "reasoning": "brief explanation"\}}
\end{tcolorbox}

For batched evaluation (20 items per call), the system prompt is:

\begin{tcolorbox}[colback=gray!5, colframe=gray!50, title=Batch judge system prompt]
\small
You are an expert evaluator. For each numbered item below, determine whether the AI-generated answer is factually correct compared to the ground-truth answer. Be lenient with formatting differences but strict on factual content.

Reply ONLY with a JSON array of objects, one per item, in order. Each object must have exactly three keys: \texttt{"verdict"}, \texttt{"confidence"}, \texttt{"reasoning"}.
\end{tcolorbox}

\textbf{Hallucination labelling procedure.}
The judge returns a JSON object with keys \texttt{verdict}, \texttt{confidence}, and \texttt{reasoning}.
We set $\texttt{is\_hallucinated} = 1$ if the verdict is \texttt{"incorrect"}, and $0$ otherwise.
Two additional rejudge passes use deterministic metrics: SQuAD-F1 (token overlap $\geq 0.5$) and ROUGE-L (LCS F-measure $\geq 0.5$).
All three labelling schemes produce separate detection evaluation tables for robustness analysis.

\subsection{Other Experimental Details}\label{app:other_experimental_details}

Several auxiliary models support the multi-trajectory baselines and fine-grained detection.
Table~\ref{tab:supporting} lists each model and its role.

\begin{table}[h]
\centering
\caption{\textbf{Supporting models.}}
\label{tab:supporting}
\begin{tabular}{@{}p{2.8cm}p{7cm}p{4cm}@{}}
\toprule
\textbf{Role} & \textbf{Model} & \textbf{Usage} \\
\midrule
Entailment (NLI) & \texttt{microsoft/deberta-v2-xlarge-mnli} & Semantic equivalence clustering (SE, DSE, KLE) \\
Sentence embeddings & \texttt{sentence-transformers/all-MiniLM-L6-v2} & 384-dim vectors (eigenscore, eff.\ rank, KLE, emb.\ regression) \\
FAVA & \texttt{fava-uw/fava-model} & Fine-grained hallucination detection \\
\bottomrule
\end{tabular}
\end{table}

\textbf{Few-Shot Prompting and Chat Templates.}
All eight datasets use 5-shot prompting.
The five exemplars are selected with seed~42 from the training split of each dataset and formatted as \texttt{Question: \ldots\ Answer: \ldots} pairs preceding the target question.

For base models (Llama-2, Mistral, Llama-3.2-1B), the prompt is raw concatenated text.
For instruction-tuned and chat models (Llama-2-Chat, Llama-3-Instruct, Falcon-Instruct), the few-shot examples are wrapped in the model's native chat template via \texttt{tokenizer.apply\_chat\_template()}.
For API models, a system prompt (\texttt{"You are a helpful assistant. Answer the question directly and concisely."}) is prepended, and the few-shot examples plus target question form the user message.

\textbf{Multi-Trajectory Sampling.} For each question, we draw $K{=}10$ independent samples at temperature $T{=}1.0$.
Log-likelihoods are computed from the full logit vectors for local models and from the API's \texttt{logprobs} field (\texttt{top\_logprobs=20}) for API models.
These trajectories are consumed by the following baselines: semantic entropy (SE), discrete semantic entropy (DSE), kernel language entropy (KLE), eigenscore, effective rank, self-check-GPT, P(True), and embedding regression.

\textbf{Batching.}
Batching is a pure throughput optimisation that does not alter results.
Left-padding combined with attention masks ensures each sequence in a batch sees exactly the same tokens as if processed alone: padding tokens are masked out of all attention computations and contribute zero signal.
The model's forward pass is mathematically identical per-sequence regardless of batch size.
This guarantee holds for every model in the pipeline:
the generator (greedy argmax or temperature sampling over per-sequence logit distributions with padding masked out),
DeBERTa (deterministic classification with padding masked; a batch of 128 pairs yields identical predictions to sequential processing),
MiniLM (mean-pooling over attention-masked tokens only; identical 384-dimensional vectors regardless of batch size),
and FAVA (same transformer masking guarantees).
The per-dataset batch scaling (70B to $/8$, 40B to $/4$, etc.) keeps VRAM usage just below the 40\,GiB limit per GPU.

\textbf{Minimum Sequence Length.} We include all sequences with $\geq 2$ tokens (the minimum for a meaningful KS statistic), without imposing a stricter length filter.
Since our goal is a general-purpose detector, we retain all evaluable sequences.
The length stratification in Section~\ref{app:exp_length} separately characterises performance by generation length.

\textbf{Decoding Strategy.} All primary experiments use greedy decoding ($T = 0$).
Greedy decoding represents the most common deployment setting for factual QA systems and produces deterministic outputs, ensuring reproducibility.
The token-level entropy under greedy decoding reflects the model's posterior uncertainty at each position, not sampling noise.
This is the hardest setting for entropy-based detection: under temperature sampling, hallucinated sequences would exhibit \emph{higher} entropy by construction (more stochastic generation), inflating detection scores.
By evaluating at $T = 0$, we test whether intrinsic uncertainty (reflected in the full logit distribution) is discriminative even when the \emph{selected} token is always the argmax.
Future work can evaluate the impact of different decoding regimes. 

\textbf{Evaluation Metrics.} The main measure we compare our methods against is AUROC.
Bootstrap 95\% confidence intervals (where included) are computed with 1000 iterations (seed~42).
For metrics where a lower score indicates greater hallucination probability (e.g., P(True)), scores are sign-flipped before computing AUROC to ensure consistent directionality.




\textbf{Content Filtering.} 
Azure content filters occasionally refuse to generate an answer (3-4 out of 500 samples for CoQA; 0 for most other datasets).
These samples are skipped in metric computation since no trajectory is available.
Final sample counts range from 496 to 500 per dataset depending on the model-dataset combination.
We believe this does not affect the integrity of our results. 

\section{Baseline Methods}
\label{app:baselines}
In this section, we discuss the details of our benchmark baselines. 
We benchmark CES against three families of hallucination
detection methods, organised by the computational resources
they require at inference time.
Table~\ref{tab:baseline_taxonomy} summarises the key distinctions;
formal definitions follow.
Recent surveys \cite{xia2025survey, janiak2025illusion} provide
broader coverage and argue that, despite a proliferation of
methods, progress on detection accuracy has been limited. 
This finding is corroborated for the single-pass setting.

\begin{table}[h]
\centering
\caption{Taxonomy of baseline methods by inference-time
requirements.
\textbf{Passes}: number of forward passes through the
generator.
\textbf{Access}: what the method reads from the model.
\textbf{External}: whether an auxiliary model or knowledge
source is needed.}
\label{tab:baseline_taxonomy}
\small
\setlength{\tabcolsep}{4pt}
\begin{tabular}{@{}llccc@{}}
\toprule
\textbf{Family} & \textbf{Method} & \textbf{Passes} &
\textbf{Access} & \textbf{External} \\
\midrule
\multirow{3}{*}{\textit{Single-pass}}
  & Perplexity \cite{jelinek1977perplexity}
    & 1 & logits & \texttimes \\
  & LN Entropy \cite{malinin2020uncertainty}
    & 1 & logits & \texttimes \\
  & Generation length \cite{janiak2025illusion}
    & 1 & tokens & \texttimes \\
\midrule
\multirow{5}{*}{\textit{Multi-sample}}
  & Semantic Entropy \cite{farquhar2024detecting}
    & $K$ & logits & NLI model \\
  & KLE \cite{nikitin2024kernel}
    & $K$ & logits & NLI model \\
  & EigenScore \cite{chen2023going}
    & $K$ & embeddings & \texttimes \\
  & SAR \cite{duan2024shifting}
    & $K$ & logits & similarity fn \\
  & SelfCheckGPT \cite{manakul2023selfcheckgpt}
    & $K$ & tokens & \texttimes \\
\midrule
\textit{Elicitation}
  & $\mathbb{P}(\text{True})$ \cite{kadavath2022language}
    & 2 & logits & \texttimes \\
\bottomrule
\end{tabular}
\end{table}

Throughout this section, let $x$ denote the input prompt,
$s = (z_1, \ldots, z_m)$ the generated sequence of length~$m$,
and $\bm{p}^{(t)} = (p_v^{(t)})_{v \in \mathcal{V}}$ the
autoregressive next-token distribution at step~$t$, with
$p_v^{(t)} = p(v \mid s_{<t}, x)$.

\subsection{Single-Pass Methods}
\label{sec:baselines_single}

Single-pass methods require one forward pass and access to
token logits or the generated string.
They are the natural comparators for CES, since they operate
under identical computational constraints.

\textbf{Perplexity.}
Perplexity \cite{jelinek1977perplexity} is the exponentiated
average negative log-likelihood of the realised tokens:
\begin{equation}
\label{eq:ppl}
\mathrm{PPL}(x, s)
= \exp\!\Bigl(
  -\frac{1}{m}\sum_{t=1}^{m}
  \log p(z_t \mid s_{<t}, x)
\Bigr).
\end{equation}
Lower values indicate higher confidence.
Perplexity depends on the \emph{realised} token probabilities
and is therefore sensitive to the sampling path; it does not
use the full next-token distribution.

\textbf{Length-normalised entropy (LNE).}
LNE \cite{malinin2020uncertainty} averages the Shannon entropy
of the full next-token distribution at each step:
\begin{equation}
\label{eq:lne}
H_{\mathrm{len}}(x, s)
= -\frac{1}{m}\sum_{t=1}^{m}\sum_{v \in \mathcal{V}}
  p_v^{(t)} \log p_v^{(t)}.
\end{equation}
Unlike perplexity, LNE uses the entire predictive distribution
rather than only the probability of the chosen token.
It is the closest existing method to CES: CES can be viewed as
augmenting LNE with a tail statistic ($h_{\max}$) and a
calibration layer ($\widehat{F}_0$).

\textbf{Generation length.}
\cite{janiak2025illusion} observe that many uncertainty
metrics correlate with the number of generated tokens~$m$,
and that generation length alone is a surprisingly competitive
baseline.
We include $m$ as a standalone predictor to control for this
confound.

\subsection{Multi-Sample Methods}
\label{sec:baselines_multi}

Multi-sample methods draw $K \geq 2$ independent generations
(typically $K \in \{5, 10\}$ at temperature $T > 0$) and
measure agreement or spread.
They are not directly comparable to CES in computational cost
but represent the current state of the art on several
benchmarks.

\textbf{Semantic Entropy.}
\cite{farquhar2024detecting} cluster the $K$ generations into
semantic equivalence classes $\mathcal{C}$ using a natural
language inference (NLI) model, estimate class probabilities
$p_c$ by aggregating sequence log-likelihoods, and compute
\begin{equation}
\label{eq:se}
H_{\mathrm{sem}}(x)
= -\sum_{c \in \mathcal{C}} p_c \log p_c.
\end{equation}
This captures ambiguity over \emph{meanings} rather than
tokens, making it invariant to surface paraphrases.

\textbf{Kernel Language Entropy (KLE).}
\cite{nikitin2024kernel} replace the hard NLI clustering of
Semantic Entropy with a soft kernel defined over an entailment
graph, computing entropy via the graph Laplacian.
This avoids the discretisation artifacts of hard clustering.

\textbf{EigenScore.}
\cite{chen2023going} embed the $K$ generations as vectors
$\bm{Z} \in \mathbb{R}^{d \times K}$, form the regularised
centred Gram matrix
$\bm{G} = \bm{Z}^\top \bm{J}_d \bm{Z} + \alpha \bm{I}_K$
(where $\bm{J}_d = \bm{I}_d - d^{-1}\bm{1}\bm{1}^\top$),
and report
\begin{equation}
\label{eq:eigenscore}
\mathrm{ES}
= \frac{1}{K}\log\det(\bm{G})
= \frac{1}{K}\sum_i \log \lambda_i,
\end{equation}
measuring the volume spanned by the generation set in
embedding space.

\textbf{SelfCheckGPT.}
\cite{manakul2023selfcheckgpt} generate $K$ alternative
responses and measure consistency with the original generation
via $n$-gram overlap, NLI, or prompting.
Inconsistency across samples is taken as evidence of
hallucination.

\subsection{Elicitation-Based Methods}
\label{sec:baselines_elicitation}

Elicitation methods query the model about its own outputs.
Given prompt~$x$ and answer $a = f_\theta(x)$, a set of
meta-prompts $Q = \{q_1, \ldots, q_{|Q|}\}$ is constructed
(e.g., ``Is the above answer correct?'').
The model is queried with $(x \oplus a \oplus q)$ for each
$q \in Q$, and the responses are mapped to a scalar
confidence score via a downstream function
$\phi \colon \mathbb{R}^{|Q|} \to [0, 1]$.

\textbf{P(True)} The canonical instance is $\mathbb{P}(\text{True})$
\cite{kadavath2022language}, which appends a verification
question and reads off the probability assigned to an
affirmative token.
Elicitation methods are applicable in both full-logit and
top-$k$ settings \cite{pedapati2024large} but require at
least one additional forward pass per meta-prompt.

\subsection{Positioning of CES}
\label{sec:baselines_positioning}

Table \ref{tab:baseline_taxonomy} highlights the niche that CES
occupies.
Among single-pass methods, CES is the only one that provides formal statistical guarantees (calibrated Type~I error, exponential power, contamination robustness) while requiring the same computational budget as perplexity and LNE. 
Multi-sample and model-internal methods achieve higher raw AUROC on some benchmarks, but at $K\times$ the inference cost or with access requirements that preclude use through standard APIs.
CES is therefore best understood not as a replacement for these more expensive methods, but as the strongest detection one can perform under the minimal-access, single-pass constraint. 
This comes with the added benefit of statistical interpretability.

\section{Extended Experimental Results}\label{sec:extended_experiments}

We present a comprehensive evaluation of the Calibrated Entropy Score (CES) across 10 models (6 open-weight, 4 API-only), 8 datasets, and 16 benchmark methods including P(True).
All results are computed on 500 samples per experiment with a fixed random seed (42) for reproducibility.

\subsection{Summary of Key Findings}\label{sec:exp_summary}

\begin{enumerate}
    \item \textbf{Full Experimental Results}: we report the full experimental grids with interquantile ranges (IQRs) for each experiment in Appendix \ref{app:exp_grid}. 

    \item \textbf{Independence Assumption}: In Appendix \ref{app:exp_independence}, we show that the median lag-1 autocorrelation is $\rho_1 = 0.061$ with $15.8\%$ of the sequences exhibiting $|\rho_1|>0.3$, with $80.5\%$ of the sequences falling into white-noise bounds, witnessing a low autocorrelation for most samples.

    \item \textbf{Consistently Signficant KS test for Hallucinated versus non-Hallucinated Generations}: 
    In Appendix \ref{app:exp_power}, we find that power of the KS test increases monotonically with sample size; 
    65\% of all 3,200 resamples achieve significance. Specifically, we show that for all datasets and model pairs, as we increase i.i.d. sampling size, hallucinated v.s. faithful generation entropies are separated by the KS test, while we fail to reject the null for hallucinated v.s. hallucinated and faithful v.s. faithful distributions. 
    Token-level entropies of hallucinated and faithful generations are drawn from statistically different distributions in 72/80 (90\%) experiments at $\alpha{=}0.05$, with the \emph{shape} signal universally significant (80/80) beyond mere location shifts.
    We show examples of how the empirical cumulative distribution functions (ECDF) differ across datasets in Appendix \ref{app:ecdf_gallery}

    \item \textbf{CES Variants}: in Appendix \ref{app:exp_combinatorial}, we exhaustively evaluate 44 variants combining different entropy summaries (mean, median, max, $q_{25}$, $q_{75}$) under both arithmetic and geometric aggregation. 
    We find that CES outperforms the other combinations. 

    \item \textbf{Robustness Results}: In Appendix \ref{app:exp_contamination} and Appendix \ref{app:exp_noisy_judge}, we analyze the impact of adding noisy samples to the calibration distribution, finding robustness in the performance of CES.

    \item \textbf{CES is competitive}: in Appendix \ref{app:critical_difference_analysis}, we conclude that CES (unsupervised) ranks 3rd out of 17 methods by average rank (6.29) across all 80 experiments, belongs to the top statistical clique (Nemenyi $\text{CD} = 2.78$), and is significantly better than 7/16 benchmarks after Holm--Bonferroni correction.
    Unlike the top-performing methods (KLE, Embedding Regression) that require multiple forward passes or external embeddings, CES operates on a single generation's entropy trace, providing a significant computational advantage.

    \item \textbf{API model generalisation}: CES transfers effectively to API-only models (GPT-4.1 family), achieving median AUROC 0.669 on API models versus 0.642 on open-weight models (Mann-Whitney $p = 0.060$, KS $D = 0.323$, $p = 0.031$).
    This is analyzed in Appendix \ref{app:exp_api}.
    
    \item \textbf{Empirical Validation of Error Bounds}: in Appendix \ref{app:exp_error_bounds}, we validate Theorem \ref{thm:ces_power}, confirming the theorem's prediction on Type I and Type II exponential error decay rates. 

    \item \textbf{Example Outputs}: we show examples outputs from our generations in Appendix \ref{app:example_outputs}.
    
\end{enumerate}
\newpage
\subsection{Main Experimental Grid}\label{app:exp_grid}

\begin{figure}[H]
    \centering
    \includegraphics[width=\textwidth]{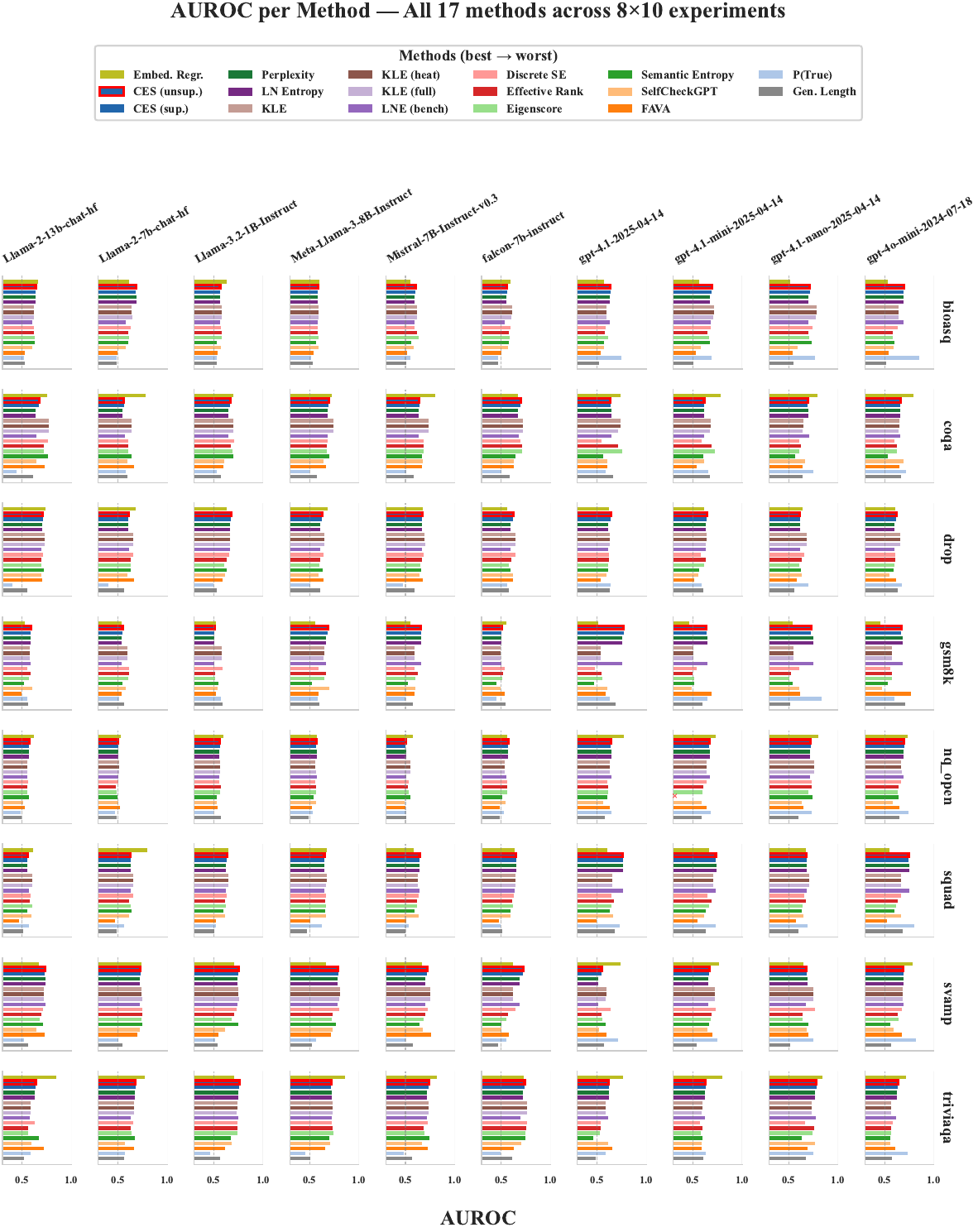}
    \caption{\textbf{Per-experiment AUROC for all 17 methods} across 10 models $\times$ 8 datasets.
    Each bar represents one method's AUROC; models are grouped by row and datasets by column.
    CES (unsupervised, starred) achieves a median AUROC of 0.653, comparable to Embedding Regression (0.665) and KLE variants (0.647--0.651).}
    \label{fig:full_matrix}
\end{figure}

\textbf{Setup.}
We evaluate 7 detection methods across our full $10 \times 8$ experimental grid: CES (supervised and unsupervised), KS 1-sample test (supervised and unsupervised), Perplexity, LN Entropy, and Generation Length.
Each experiment comprises 500 samples per model--dataset pair, with AUROC computed as the primary metric.

\textbf{Results.}
Tables \ref{tab:auroc_Llama213bcha}, 
\ref{tab:auroc_Llama27bchat}, 
\ref{tab:auroc_Llama321BIns}, 
\ref{tab:auroc_MetaLlama38B}, 
\ref{tab:auroc_Mistral7BIns}, 
\ref{tab:auroc_falcon7binst}, 
\ref{tab:auroc_gpt412025041}, 
\ref{tab:auroc_gpt41mini202}, 
\ref{tab:auroc_gpt41nano202}, 
\ref{tab:auroc_gpt4omini202} report the full AUROC matrix.
CES achieves a median AUROC of 0.653 (IQR: [0.617, 0.697]) across all 80 experiments.
The unsupervised variant (median 0.653, IQR: [0.617, 0.700]) is statistically indistinguishable from the supervised variant.
Both substantially outperform the KS 1-sample test (supervised median 0.555, unsupervised 0.526) and Generation Length (0.562).
The advantage over LN Entropy/Perplexity (both median 0.652) is consistent but small ($\Delta = +0.001$).
CES wins 43/80 experiments outright, compared to 18 for Perplexity, 11 for Generation Length, and 6 for the supervised variant.

\begin{table}[H]
\centering
\caption{AUROC $\pm$ std (rank) for \texttt{Llama-2-13b-chat-hf} across datasets.}
\label{tab:auroc_Llama213bcha}
\resizebox{\textwidth}{!}{%
\begin{tabular}{lcccccccc}
\toprule
Method & bioasq & coqa & drop & gsm8k & nq\_open & squad & svamp & triviaqa \\
\midrule
CES (unsup) & 0.636 $\pm$ 0.033 (2) & 0.674 $\pm$ 0.039 (8) & 0.709 $\pm$ 0.022 (7) & 0.590 $\pm$ 0.029 (2) & 0.572 $\pm$ 0.027 (4) & 0.556 $\pm$ 0.032 (8) & 0.734 $\pm$ 0.030 (3) & 0.636 $\pm$ 0.073 (4) \\
CES (sup) & 0.635 $\pm$ 0.034 (5) & 0.673 $\pm$ 0.041 (9) & 0.711 $\pm$ 0.023 (6) & 0.589 $\pm$ 0.027 (5) & 0.571 $\pm$ 0.026 (5) & 0.556 $\pm$ 0.031 (9) & 0.733 $\pm$ 0.029 (4) & 0.636 $\pm$ 0.074 (5) \\
SE & 0.630 $\pm$ 0.026 (6) & 0.758 $\pm$ 0.041 (4) & 0.717 $\pm$ 0.023 (4) & 0.518 $\pm$ 0.028 (12) & 0.566 $\pm$ 0.026 (6) & 0.553 $\pm$ 0.030 (10) & 0.713 $\pm$ 0.030 (9) & 0.668 $\pm$ 0.066 (3) \\
DSE & 0.619 $\pm$ 0.024 (10) & 0.760 $\pm$ 0.042 (3) & 0.713 $\pm$ 0.023 (5) & 0.555 $\pm$ 0.029 (9) & 0.558 $\pm$ 0.026 (7) & 0.585 $\pm$ 0.029 (6) & 0.714 $\pm$ 0.031 (8) & 0.624 $\pm$ 0.058 (8) \\
KLE & 0.621 $\pm$ 0.026 (8) & 0.775 $\pm$ 0.042 (1)$^*$ & 0.727 $\pm$ 0.022 (3) & 0.576 $\pm$ 0.028 (7) & 0.554 $\pm$ 0.028 (8) & 0.606 $\pm$ 0.029 (2) & 0.719 $\pm$ 0.029 (6) & 0.585 $\pm$ 0.064 (11) \\
KLE-Full & 0.620 $\pm$ 0.025 (9) & 0.771 $\pm$ 0.042 (2) & 0.730 $\pm$ 0.023 (2) & 0.580 $\pm$ 0.028 (6) & 0.552 $\pm$ 0.025 (9) & 0.606 $\pm$ 0.030 (3) & 0.717 $\pm$ 0.029 (7) & 0.588 $\pm$ 0.069 (10) \\
Eigenscore & 0.624 $\pm$ 0.025 (7) & 0.718 $\pm$ 0.044 (7) & 0.694 $\pm$ 0.025 (11) & 0.569 $\pm$ 0.028 (8) & 0.548 $\pm$ 0.026 (10) & 0.599 $\pm$ 0.029 (4) & 0.681 $\pm$ 0.031 (10) & 0.558 $\pm$ 0.074 (13) \\
SelfCheckGPT & 0.604 $\pm$ 0.025 (11) & 0.641 $\pm$ 0.045 (10) & 0.693 $\pm$ 0.023 (12) & 0.599 $\pm$ 0.027 (1)$^*$ & 0.508 $\pm$ 0.025 (12) & 0.592 $\pm$ 0.030 (5) & 0.645 $\pm$ 0.032 (12) & 0.594 $\pm$ 0.068 (9) \\
P(True) & 0.521 $\pm$ 0.026 (13) & 0.443 $\pm$ 0.048 (13) & 0.404 $\pm$ 0.025 (13) & 0.550 $\pm$ 0.028 (10) & 0.481 $\pm$ 0.026 (13) & 0.565 $\pm$ 0.028 (7) & 0.522 $\pm$ 0.033 (13) & 0.584 $\pm$ 0.077 (12) \\
FAVA & 0.529 $\pm$ 0.026 (12) & 0.731 $\pm$ 0.040 (6) & 0.706 $\pm$ 0.023 (8) & 0.493 $\pm$ 0.028 (13) & 0.526 $\pm$ 0.026 (11) & 0.465 $\pm$ 0.029 (13) & 0.730 $\pm$ 0.031 (5) & 0.719 $\pm$ 0.055 (2) \\
EmbRegress & 0.664 $\pm$ 0.024 (1)$^*$ & 0.755 $\pm$ 0.039 (5) & 0.738 $\pm$ 0.022 (1)$^*$ & 0.528 $\pm$ 0.029 (11) & 0.617 $\pm$ 0.027 (1)$^*$ & 0.608 $\pm$ 0.057 (1)$^*$ & 0.672 $\pm$ 0.033 (11) & 0.847 $\pm$ 0.042 (1)$^*$ \\
LN-Entropy & 0.635 $\pm$ 0.037 (3) & 0.639 $\pm$ 0.044 (11) & 0.703 $\pm$ 0.025 (9) & 0.590 $\pm$ 0.026 (3) & 0.573 $\pm$ 0.027 (2) & 0.549 $\pm$ 0.032 (11) & 0.736 $\pm$ 0.029 (1)$^*$ & 0.626 $\pm$ 0.076 (6) \\
Perplexity & 0.635 $\pm$ 0.032 (4) & 0.639 $\pm$ 0.044 (12) & 0.703 $\pm$ 0.024 (10) & 0.590 $\pm$ 0.027 (4) & 0.573 $\pm$ 0.026 (3) & 0.549 $\pm$ 0.032 (12) & 0.736 $\pm$ 0.030 (2) & 0.626 $\pm$ 0.073 (7) \\
\bottomrule
\end{tabular}}
\end{table}

\begin{table}[H]
\centering
\caption{AUROC $\pm$ std (rank) for \texttt{Llama-2-7b-chat-hf} across datasets.}
\label{tab:auroc_Llama27bchat}
\resizebox{\textwidth}{!}{%
\begin{tabular}{lcccccccc}
\toprule
Method & bioasq & coqa & drop & gsm8k & nq\_open & squad & svamp & triviaqa \\
\midrule
CES (unsup) & 0.683 $\pm$ 0.030 (3) & 0.562 $\pm$ 0.040 (11) & 0.608 $\pm$ 0.025 (8) & 0.546 $\pm$ 0.027 (6) & 0.499 $\pm$ 0.028 (10) & 0.629 $\pm$ 0.027 (7) & 0.730 $\pm$ 0.030 (7) & 0.679 $\pm$ 0.044 (2) \\
CES (sup) & 0.683 $\pm$ 0.030 (4) & 0.562 $\pm$ 0.042 (10) & 0.608 $\pm$ 0.026 (11) & 0.545 $\pm$ 0.026 (7) & 0.500 $\pm$ 0.027 (9) & 0.628 $\pm$ 0.027 (8) & 0.727 $\pm$ 0.028 (8) & 0.678 $\pm$ 0.042 (3) \\
SE & 0.601 $\pm$ 0.025 (10) & 0.637 $\pm$ 0.040 (5) & 0.628 $\pm$ 0.024 (7) & 0.543 $\pm$ 0.026 (8) & 0.498 $\pm$ 0.027 (11) & 0.641 $\pm$ 0.026 (5) & 0.751 $\pm$ 0.028 (1)$^*$ & 0.672 $\pm$ 0.044 (4) \\
DSE & 0.626 $\pm$ 0.026 (7) & 0.608 $\pm$ 0.036 (6) & 0.656 $\pm$ 0.025 (3) & 0.611 $\pm$ 0.025 (1)$^*$ & 0.505 $\pm$ 0.027 (5) & 0.653 $\pm$ 0.024 (4) & 0.745 $\pm$ 0.029 (2) & 0.653 $\pm$ 0.043 (10) \\
KLE & 0.641 $\pm$ 0.026 (6) & 0.642 $\pm$ 0.042 (3) & 0.655 $\pm$ 0.024 (5) & 0.597 $\pm$ 0.025 (3) & 0.508 $\pm$ 0.027 (3) & 0.658 $\pm$ 0.026 (2) & 0.741 $\pm$ 0.028 (4) & 0.661 $\pm$ 0.045 (8) \\
KLE-Full & 0.648 $\pm$ 0.024 (5) & 0.640 $\pm$ 0.042 (4) & 0.656 $\pm$ 0.025 (4) & 0.596 $\pm$ 0.025 (4) & 0.508 $\pm$ 0.028 (4) & 0.656 $\pm$ 0.025 (3) & 0.744 $\pm$ 0.029 (3) & 0.659 $\pm$ 0.046 (9) \\
Eigenscore & 0.612 $\pm$ 0.026 (9) & 0.606 $\pm$ 0.040 (7) & 0.628 $\pm$ 0.025 (6) & 0.603 $\pm$ 0.024 (2) & 0.483 $\pm$ 0.026 (12) & 0.632 $\pm$ 0.027 (6) & 0.737 $\pm$ 0.030 (6) & 0.634 $\pm$ 0.051 (11) \\
SelfCheckGPT & 0.582 $\pm$ 0.026 (11) & 0.593 $\pm$ 0.045 (8) & 0.592 $\pm$ 0.026 (12) & 0.577 $\pm$ 0.026 (5) & 0.502 $\pm$ 0.027 (8) & 0.616 $\pm$ 0.027 (11) & 0.725 $\pm$ 0.031 (9) & 0.573 $\pm$ 0.051 (13) \\
P(True) & 0.483 $\pm$ 0.026 (13) & 0.578 $\pm$ 0.042 (9) & 0.406 $\pm$ 0.025 (13) & 0.513 $\pm$ 0.026 (13) & 0.466 $\pm$ 0.026 (13) & 0.563 $\pm$ 0.028 (12) & 0.500 $\pm$ 0.034 (13) & 0.574 $\pm$ 0.048 (12) \\
FAVA & 0.495 $\pm$ 0.027 (12) & 0.661 $\pm$ 0.039 (2) & 0.664 $\pm$ 0.025 (2) & 0.536 $\pm$ 0.027 (12) & 0.519 $\pm$ 0.027 (2) & 0.468 $\pm$ 0.027 (13) & 0.701 $\pm$ 0.032 (12) & 0.662 $\pm$ 0.045 (7) \\
EmbRegress & 0.617 $\pm$ 0.026 (8) & 0.781 $\pm$ 0.032 (1)$^*$ & 0.677 $\pm$ 0.025 (1)$^*$ & 0.536 $\pm$ 0.026 (11) & 0.532 $\pm$ 0.028 (1)$^*$ & 0.799 $\pm$ 0.043 (1)$^*$ & 0.739 $\pm$ 0.031 (5) & 0.773 $\pm$ 0.040 (1)$^*$ \\
LN-Entropy & 0.686 $\pm$ 0.030 (1)$^*$ & 0.549 $\pm$ 0.042 (12) & 0.608 $\pm$ 0.026 (9) & 0.540 $\pm$ 0.025 (9) & 0.502 $\pm$ 0.030 (6) & 0.627 $\pm$ 0.027 (9) & 0.722 $\pm$ 0.030 (10) & 0.670 $\pm$ 0.042 (5) \\
Perplexity & 0.686 $\pm$ 0.031 (2) & 0.549 $\pm$ 0.042 (13) & 0.608 $\pm$ 0.027 (10) & 0.540 $\pm$ 0.024 (10) & 0.502 $\pm$ 0.027 (7) & 0.627 $\pm$ 0.027 (10) & 0.722 $\pm$ 0.028 (11) & 0.670 $\pm$ 0.042 (6) \\
\bottomrule
\end{tabular}}
\end{table}

\begin{table}[H]
\centering
\caption{AUROC $\pm$ std (rank) for \texttt{Llama-3.2-1B-Instruct} across datasets.}
\label{tab:auroc_Llama321BIns}
\resizebox{\textwidth}{!}{%
\begin{tabular}{lcccccccc}
\toprule
Method & bioasq & coqa & drop & gsm8k & nq\_open & squad & svamp & triviaqa \\
\midrule
CES (unsup) & 0.564 $\pm$ 0.027 (9) & 0.666 $\pm$ 0.028 (7) & 0.678 $\pm$ 0.026 (1)$^*$ & 0.508 $\pm$ 0.032 (10) & 0.561 $\pm$ 0.029 (5) & 0.633 $\pm$ 0.027 (4) & 0.756 $\pm$ 0.027 (1)$^*$ & 0.762 $\pm$ 0.025 (1)$^*$ \\
CES (sup) & 0.563 $\pm$ 0.026 (10) & 0.664 $\pm$ 0.030 (8) & 0.675 $\pm$ 0.025 (2) & 0.508 $\pm$ 0.032 (11) & 0.560 $\pm$ 0.028 (6) & 0.633 $\pm$ 0.026 (5) & 0.753 $\pm$ 0.027 (3) & 0.761 $\pm$ 0.027 (2) \\
SE & 0.529 $\pm$ 0.027 (12) & 0.696 $\pm$ 0.029 (4) & 0.634 $\pm$ 0.026 (8) & 0.544 $\pm$ 0.033 (5) & 0.527 $\pm$ 0.027 (12) & 0.595 $\pm$ 0.026 (11) & 0.750 $\pm$ 0.028 (5) & 0.675 $\pm$ 0.030 (11) \\
DSE & 0.572 $\pm$ 0.027 (5) & 0.707 $\pm$ 0.026 (1)$^*$ & 0.655 $\pm$ 0.026 (7) & 0.592 $\pm$ 0.031 (1)$^*$ & 0.559 $\pm$ 0.027 (7) & 0.627 $\pm$ 0.024 (6) & 0.731 $\pm$ 0.029 (8) & 0.738 $\pm$ 0.028 (6) \\
KLE & 0.580 $\pm$ 0.027 (3) & 0.696 $\pm$ 0.026 (3) & 0.664 $\pm$ 0.026 (3) & 0.580 $\pm$ 0.032 (3) & 0.567 $\pm$ 0.028 (3) & 0.650 $\pm$ 0.025 (2) & 0.753 $\pm$ 0.029 (4) & 0.737 $\pm$ 0.028 (7) \\
KLE-Full & 0.580 $\pm$ 0.027 (4) & 0.695 $\pm$ 0.026 (5) & 0.661 $\pm$ 0.026 (6) & 0.581 $\pm$ 0.032 (2) & 0.568 $\pm$ 0.028 (2) & 0.651 $\pm$ 0.025 (1)$^*$ & 0.754 $\pm$ 0.029 (2) & 0.736 $\pm$ 0.027 (8) \\
Eigenscore & 0.584 $\pm$ 0.026 (2) & 0.689 $\pm$ 0.029 (6) & 0.610 $\pm$ 0.027 (11) & 0.511 $\pm$ 0.030 (9) & 0.564 $\pm$ 0.028 (4) & 0.626 $\pm$ 0.025 (9) & 0.684 $\pm$ 0.030 (10) & 0.741 $\pm$ 0.027 (5) \\
SelfCheckGPT & 0.571 $\pm$ 0.027 (6) & 0.609 $\pm$ 0.029 (11) & 0.617 $\pm$ 0.026 (10) & 0.540 $\pm$ 0.032 (6) & 0.534 $\pm$ 0.027 (11) & 0.618 $\pm$ 0.024 (10) & 0.619 $\pm$ 0.032 (11) & 0.682 $\pm$ 0.026 (10) \\
P(True) & 0.527 $\pm$ 0.026 (13) & 0.527 $\pm$ 0.032 (13) & 0.493 $\pm$ 0.028 (13) & 0.575 $\pm$ 0.032 (4) & 0.498 $\pm$ 0.029 (13) & 0.526 $\pm$ 0.029 (12) & 0.513 $\pm$ 0.034 (13) & 0.459 $\pm$ 0.032 (13) \\
FAVA & 0.536 $\pm$ 0.027 (11) & 0.598 $\pm$ 0.029 (12) & 0.586 $\pm$ 0.027 (12) & 0.526 $\pm$ 0.030 (7) & 0.539 $\pm$ 0.028 (10) & 0.525 $\pm$ 0.027 (13) & 0.548 $\pm$ 0.034 (12) & 0.618 $\pm$ 0.031 (12) \\
EmbRegress & 0.628 $\pm$ 0.025 (1)$^*$ & 0.702 $\pm$ 0.027 (2) & 0.632 $\pm$ 0.027 (9) & 0.524 $\pm$ 0.031 (8) & 0.601 $\pm$ 0.026 (1)$^*$ & 0.647 $\pm$ 0.043 (3) & 0.707 $\pm$ 0.031 (9) & 0.711 $\pm$ 0.027 (9) \\
LN-Entropy & 0.564 $\pm$ 0.026 (7) & 0.644 $\pm$ 0.031 (9) & 0.662 $\pm$ 0.025 (4) & 0.501 $\pm$ 0.033 (12) & 0.554 $\pm$ 0.029 (8) & 0.627 $\pm$ 0.027 (7) & 0.741 $\pm$ 0.029 (6) & 0.748 $\pm$ 0.026 (3) \\
Perplexity & 0.564 $\pm$ 0.025 (8) & 0.644 $\pm$ 0.030 (10) & 0.662 $\pm$ 0.025 (5) & 0.501 $\pm$ 0.031 (13) & 0.554 $\pm$ 0.026 (9) & 0.627 $\pm$ 0.027 (8) & 0.741 $\pm$ 0.028 (7) & 0.748 $\pm$ 0.026 (4) \\
\bottomrule
\end{tabular}}
\end{table}

\begin{table}[H]
\centering
\caption{AUROC $\pm$ std (rank) for \texttt{Meta-Llama-3-8B-Instruct} across datasets.}
\label{tab:auroc_MetaLlama38B}
\resizebox{\textwidth}{!}{%
\begin{tabular}{lcccccccc}
\toprule
Method & bioasq & coqa & drop & gsm8k & nq\_open & squad & svamp & triviaqa \\
\midrule
CES (unsup) & 0.584 $\pm$ 0.028 (7) & 0.694 $\pm$ 0.043 (5) & 0.627 $\pm$ 0.026 (7) & 0.685 $\pm$ 0.033 (2) & 0.568 $\pm$ 0.026 (4) & 0.650 $\pm$ 0.029 (10) & 0.792 $\pm$ 0.028 (4) & 0.719 $\pm$ 0.049 (9) \\
CES (sup) & 0.584 $\pm$ 0.030 (6) & 0.693 $\pm$ 0.046 (6) & 0.627 $\pm$ 0.025 (8) & 0.684 $\pm$ 0.034 (3) & 0.567 $\pm$ 0.026 (5) & 0.650 $\pm$ 0.030 (11) & 0.790 $\pm$ 0.029 (5) & 0.719 $\pm$ 0.047 (8) \\
SE & 0.567 $\pm$ 0.030 (11) & 0.702 $\pm$ 0.050 (4) & 0.633 $\pm$ 0.026 (6) & 0.525 $\pm$ 0.039 (13) & 0.544 $\pm$ 0.026 (11) & 0.657 $\pm$ 0.027 (7) & 0.767 $\pm$ 0.030 (8) & 0.703 $\pm$ 0.052 (11) \\
DSE & 0.581 $\pm$ 0.029 (10) & 0.680 $\pm$ 0.045 (9) & 0.638 $\pm$ 0.025 (5) & 0.589 $\pm$ 0.039 (10) & 0.561 $\pm$ 0.027 (9) & 0.667 $\pm$ 0.028 (5) & 0.798 $\pm$ 0.027 (3) & 0.723 $\pm$ 0.048 (7) \\
KLE & 0.592 $\pm$ 0.029 (2) & 0.739 $\pm$ 0.043 (1)$^*$ & 0.652 $\pm$ 0.025 (2) & 0.651 $\pm$ 0.038 (6) & 0.557 $\pm$ 0.026 (10) & 0.672 $\pm$ 0.028 (2) & 0.806 $\pm$ 0.026 (1)$^*$ & 0.734 $\pm$ 0.047 (4) \\
KLE-Full & 0.591 $\pm$ 0.030 (3) & 0.739 $\pm$ 0.041 (2) & 0.650 $\pm$ 0.026 (3) & 0.645 $\pm$ 0.037 (8) & 0.562 $\pm$ 0.025 (8) & 0.670 $\pm$ 0.029 (4) & 0.800 $\pm$ 0.027 (2) & 0.734 $\pm$ 0.048 (3) \\
Eigenscore & 0.591 $\pm$ 0.027 (4) & 0.675 $\pm$ 0.041 (10) & 0.600 $\pm$ 0.027 (11) & 0.647 $\pm$ 0.034 (7) & 0.567 $\pm$ 0.026 (6) & 0.670 $\pm$ 0.029 (3) & 0.729 $\pm$ 0.030 (10) & 0.744 $\pm$ 0.047 (2) \\
SelfCheckGPT & 0.588 $\pm$ 0.027 (5) & 0.640 $\pm$ 0.040 (12) & 0.589 $\pm$ 0.027 (12) & 0.702 $\pm$ 0.034 (1)$^*$ & 0.563 $\pm$ 0.027 (7) & 0.663 $\pm$ 0.028 (6) & 0.732 $\pm$ 0.031 (9) & 0.709 $\pm$ 0.046 (10) \\
P(True) & 0.519 $\pm$ 0.028 (13) & 0.499 $\pm$ 0.048 (13) & 0.509 $\pm$ 0.027 (13) & 0.582 $\pm$ 0.037 (11) & 0.529 $\pm$ 0.026 (12) & 0.624 $\pm$ 0.029 (12) & 0.562 $\pm$ 0.035 (13) & 0.454 $\pm$ 0.048 (13) \\
FAVA & 0.537 $\pm$ 0.028 (12) & 0.667 $\pm$ 0.043 (11) & 0.641 $\pm$ 0.025 (4) & 0.594 $\pm$ 0.041 (9) & 0.523 $\pm$ 0.026 (13) & 0.504 $\pm$ 0.029 (13) & 0.721 $\pm$ 0.031 (11) & 0.659 $\pm$ 0.043 (12) \\
EmbRegress & 0.601 $\pm$ 0.029 (1)$^*$ & 0.729 $\pm$ 0.039 (3) & 0.687 $\pm$ 0.024 (1)$^*$ & 0.559 $\pm$ 0.038 (12) & 0.576 $\pm$ 0.027 (1)$^*$ & 0.678 $\pm$ 0.071 (1)$^*$ & 0.670 $\pm$ 0.032 (12) & 0.863 $\pm$ 0.031 (1)$^*$ \\
LN-Entropy & 0.582 $\pm$ 0.029 (8) & 0.686 $\pm$ 0.042 (7) & 0.606 $\pm$ 0.027 (9) & 0.668 $\pm$ 0.034 (4) & 0.574 $\pm$ 0.025 (2) & 0.652 $\pm$ 0.028 (8) & 0.781 $\pm$ 0.029 (6) & 0.727 $\pm$ 0.045 (5) \\
Perplexity & 0.582 $\pm$ 0.029 (9) & 0.686 $\pm$ 0.042 (8) & 0.606 $\pm$ 0.027 (10) & 0.668 $\pm$ 0.035 (5) & 0.574 $\pm$ 0.025 (3) & 0.652 $\pm$ 0.029 (9) & 0.781 $\pm$ 0.028 (7) & 0.727 $\pm$ 0.044 (6) \\
\bottomrule
\end{tabular}}
\end{table}

\begin{table}[H]
\centering
\caption{AUROC $\pm$ std (rank) for \texttt{Mistral-7B-Instruct-v0.3} across datasets.}
\label{tab:auroc_Mistral7BIns}
\resizebox{\textwidth}{!}{%
\begin{tabular}{lcccccccc}
\toprule
Method & bioasq & coqa & drop & gsm8k & nq\_open & squad & svamp & triviaqa \\
\midrule
CES (unsup) & 0.601 $\pm$ 0.029 (5) & 0.642 $\pm$ 0.053 (9) & 0.671 $\pm$ 0.024 (7) & 0.656 $\pm$ 0.033 (3) & 0.504 $\pm$ 0.026 (11) & 0.643 $\pm$ 0.029 (2) & 0.720 $\pm$ 0.030 (5) & 0.736 $\pm$ 0.051 (4) \\
CES (sup) & 0.601 $\pm$ 0.028 (4) & 0.641 $\pm$ 0.053 (10) & 0.672 $\pm$ 0.023 (6) & 0.656 $\pm$ 0.030 (4) & 0.504 $\pm$ 0.025 (10) & 0.643 $\pm$ 0.030 (1)$^*$ & 0.719 $\pm$ 0.029 (6) & 0.736 $\pm$ 0.053 (5) \\
SE & 0.558 $\pm$ 0.028 (10) & 0.665 $\pm$ 0.045 (8) & 0.661 $\pm$ 0.025 (10) & 0.522 $\pm$ 0.033 (12) & 0.548 $\pm$ 0.026 (2) & 0.589 $\pm$ 0.030 (10) & 0.641 $\pm$ 0.032 (12) & 0.745 $\pm$ 0.055 (2) \\
DSE & 0.594 $\pm$ 0.027 (6) & 0.688 $\pm$ 0.046 (4) & 0.683 $\pm$ 0.024 (3) & 0.593 $\pm$ 0.032 (9) & 0.534 $\pm$ 0.025 (5) & 0.634 $\pm$ 0.031 (6) & 0.731 $\pm$ 0.030 (4) & 0.724 $\pm$ 0.059 (8) \\
KLE & 0.617 $\pm$ 0.028 (2) & 0.739 $\pm$ 0.044 (2) & 0.696 $\pm$ 0.024 (2) & 0.595 $\pm$ 0.032 (7) & 0.547 $\pm$ 0.024 (3) & 0.623 $\pm$ 0.030 (8) & 0.750 $\pm$ 0.027 (3) & 0.733 $\pm$ 0.064 (6) \\
KLE-Full & 0.616 $\pm$ 0.029 (3) & 0.739 $\pm$ 0.045 (3) & 0.698 $\pm$ 0.023 (1)$^*$ & 0.595 $\pm$ 0.031 (8) & 0.547 $\pm$ 0.026 (4) & 0.625 $\pm$ 0.030 (7) & 0.751 $\pm$ 0.029 (2) & 0.739 $\pm$ 0.060 (3) \\
Eigenscore & 0.632 $\pm$ 0.028 (1)$^*$ & 0.682 $\pm$ 0.045 (5) & 0.654 $\pm$ 0.024 (11) & 0.602 $\pm$ 0.030 (5) & 0.531 $\pm$ 0.026 (6) & 0.618 $\pm$ 0.030 (9) & 0.686 $\pm$ 0.031 (9) & 0.691 $\pm$ 0.073 (11) \\
SelfCheckGPT & 0.582 $\pm$ 0.028 (9) & 0.676 $\pm$ 0.043 (6) & 0.645 $\pm$ 0.025 (12) & 0.600 $\pm$ 0.031 (6) & 0.493 $\pm$ 0.026 (13) & 0.638 $\pm$ 0.030 (5) & 0.674 $\pm$ 0.031 (10) & 0.664 $\pm$ 0.066 (12) \\
P(True) & 0.552 $\pm$ 0.029 (12) & 0.508 $\pm$ 0.045 (13) & 0.476 $\pm$ 0.026 (13) & 0.498 $\pm$ 0.034 (13) & 0.510 $\pm$ 0.027 (7) & 0.537 $\pm$ 0.031 (12) & 0.504 $\pm$ 0.033 (13) & 0.486 $\pm$ 0.059 (13) \\
FAVA & 0.519 $\pm$ 0.029 (13) & 0.665 $\pm$ 0.042 (7) & 0.679 $\pm$ 0.024 (4) & 0.591 $\pm$ 0.032 (10) & 0.502 $\pm$ 0.026 (12) & 0.510 $\pm$ 0.029 (13) & 0.761 $\pm$ 0.028 (1)$^*$ & 0.724 $\pm$ 0.056 (7) \\
EmbRegress & 0.554 $\pm$ 0.026 (11) & 0.803 $\pm$ 0.032 (1)$^*$ & 0.676 $\pm$ 0.024 (5) & 0.547 $\pm$ 0.034 (11) & 0.575 $\pm$ 0.025 (1)$^*$ & 0.582 $\pm$ 0.059 (11) & 0.667 $\pm$ 0.030 (11) & 0.817 $\pm$ 0.050 (1)$^*$ \\
LN-Entropy & 0.590 $\pm$ 0.029 (7) & 0.635 $\pm$ 0.053 (11) & 0.667 $\pm$ 0.025 (8) & 0.657 $\pm$ 0.029 (1)$^*$ & 0.504 $\pm$ 0.025 (8) & 0.640 $\pm$ 0.031 (3) & 0.704 $\pm$ 0.030 (7) & 0.722 $\pm$ 0.055 (9) \\
Perplexity & 0.590 $\pm$ 0.028 (8) & 0.635 $\pm$ 0.046 (12) & 0.667 $\pm$ 0.025 (9) & 0.657 $\pm$ 0.031 (2) & 0.504 $\pm$ 0.026 (9) & 0.640 $\pm$ 0.031 (4) & 0.704 $\pm$ 0.032 (8) & 0.722 $\pm$ 0.051 (10) \\
\bottomrule
\end{tabular}}
\end{table}

\begin{table}[H]
\centering
\caption{AUROC $\pm$ std (rank) for \texttt{falcon-7b-instruct} across datasets.}
\label{tab:auroc_falcon7binst}
\resizebox{\textwidth}{!}{%
\begin{tabular}{lcccccccc}
\toprule
Method & bioasq & coqa & drop & gsm8k & nq\_open & squad & svamp & triviaqa \\
\midrule
CES (unsup) & 0.558 $\pm$ 0.033 (8) & 0.700 $\pm$ 0.027 (4) & 0.621 $\pm$ 0.026 (5) & 0.505 $\pm$ 0.032 (5) & 0.573 $\pm$ 0.028 (1)$^*$ & 0.650 $\pm$ 0.025 (3) & 0.723 $\pm$ 0.037 (1)$^*$ & 0.737 $\pm$ 0.031 (6) \\
CES (sup) & 0.557 $\pm$ 0.033 (9) & 0.697 $\pm$ 0.029 (5) & 0.618 $\pm$ 0.027 (7) & 0.505 $\pm$ 0.030 (6) & 0.572 $\pm$ 0.027 (2) & 0.650 $\pm$ 0.026 (4) & 0.718 $\pm$ 0.037 (2) & 0.736 $\pm$ 0.032 (7) \\
SE & 0.577 $\pm$ 0.026 (6) & 0.640 $\pm$ 0.028 (10) & 0.596 $\pm$ 0.025 (10) & 0.454 $\pm$ 0.030 (12) & 0.508 $\pm$ 0.027 (12) & 0.593 $\pm$ 0.026 (10) & 0.503 $\pm$ 0.039 (12) & 0.745 $\pm$ 0.029 (4) \\
DSE & 0.594 $\pm$ 0.026 (4) & 0.696 $\pm$ 0.024 (6) & 0.644 $\pm$ 0.026 (2) & 0.533 $\pm$ 0.029 (3) & 0.561 $\pm$ 0.026 (5) & 0.628 $\pm$ 0.024 (8) & 0.644 $\pm$ 0.038 (5) & 0.759 $\pm$ 0.028 (2) \\
KLE & 0.612 $\pm$ 0.026 (1)$^*$ & 0.718 $\pm$ 0.025 (1)$^*$ & 0.646 $\pm$ 0.026 (1)$^*$ & 0.490 $\pm$ 0.029 (10) & 0.537 $\pm$ 0.026 (9) & 0.643 $\pm$ 0.025 (6) & 0.620 $\pm$ 0.037 (8) & 0.759 $\pm$ 0.029 (3) \\
KLE-Full & 0.607 $\pm$ 0.026 (2) & 0.717 $\pm$ 0.024 (2) & 0.644 $\pm$ 0.026 (3) & 0.490 $\pm$ 0.030 (11) & 0.534 $\pm$ 0.027 (10) & 0.644 $\pm$ 0.024 (5) & 0.623 $\pm$ 0.037 (6) & 0.760 $\pm$ 0.029 (1)$^*$ \\
Eigenscore & 0.590 $\pm$ 0.026 (5) & 0.712 $\pm$ 0.026 (3) & 0.576 $\pm$ 0.028 (11) & 0.509 $\pm$ 0.031 (4) & 0.558 $\pm$ 0.026 (7) & 0.619 $\pm$ 0.025 (9) & 0.554 $\pm$ 0.041 (10) & 0.742 $\pm$ 0.027 (5) \\
SelfCheckGPT & 0.570 $\pm$ 0.026 (7) & 0.630 $\pm$ 0.026 (11) & 0.621 $\pm$ 0.025 (4) & 0.491 $\pm$ 0.031 (9) & 0.542 $\pm$ 0.025 (8) & 0.592 $\pm$ 0.025 (11) & 0.498 $\pm$ 0.038 (13) & 0.701 $\pm$ 0.028 (11) \\
P(True) & 0.469 $\pm$ 0.026 (13) & 0.502 $\pm$ 0.030 (13) & 0.559 $\pm$ 0.025 (12) & 0.453 $\pm$ 0.031 (13) & 0.528 $\pm$ 0.028 (11) & 0.490 $\pm$ 0.027 (12) & 0.550 $\pm$ 0.043 (11) & 0.492 $\pm$ 0.034 (13) \\
FAVA & 0.505 $\pm$ 0.027 (12) & 0.627 $\pm$ 0.026 (12) & 0.620 $\pm$ 0.028 (6) & 0.539 $\pm$ 0.032 (2) & 0.483 $\pm$ 0.028 (13) & 0.473 $\pm$ 0.026 (13) & 0.575 $\pm$ 0.040 (9) & 0.628 $\pm$ 0.032 (12) \\
EmbRegress & 0.595 $\pm$ 0.026 (3) & 0.670 $\pm$ 0.027 (9) & 0.558 $\pm$ 0.029 (13) & 0.554 $\pm$ 0.030 (1)$^*$ & 0.561 $\pm$ 0.028 (6) & 0.638 $\pm$ 0.036 (7) & 0.622 $\pm$ 0.037 (7) & 0.725 $\pm$ 0.028 (8) \\
LN-Entropy & 0.556 $\pm$ 0.033 (10) & 0.672 $\pm$ 0.029 (7) & 0.598 $\pm$ 0.028 (8) & 0.502 $\pm$ 0.031 (7) & 0.566 $\pm$ 0.029 (3) & 0.656 $\pm$ 0.025 (1)$^*$ & 0.687 $\pm$ 0.041 (3) & 0.722 $\pm$ 0.032 (9) \\
Perplexity & 0.556 $\pm$ 0.032 (11) & 0.672 $\pm$ 0.028 (8) & 0.598 $\pm$ 0.027 (9) & 0.502 $\pm$ 0.030 (8) & 0.566 $\pm$ 0.028 (4) & 0.656 $\pm$ 0.025 (2) & 0.687 $\pm$ 0.041 (4) & 0.722 $\pm$ 0.033 (10) \\
\bottomrule
\end{tabular}}
\end{table}

\begin{table}[H]
\centering
\caption{AUROC $\pm$ std (rank) for \texttt{gpt-4.1-2025-04-14} across datasets.}
\label{tab:auroc_gpt412025041}
\resizebox{\textwidth}{!}{%
\begin{tabular}{lcccccccc}
\toprule
Method & bioasq & coqa & drop & gsm8k & nq\_open & squad & svamp & triviaqa \\
\midrule
CES (unsup) & 0.635 $\pm$ 0.034 (2) & 0.634 $\pm$ 0.074 (7) & 0.639 $\pm$ 0.029 (1)$^*$ & 0.769 $\pm$ 0.039 (1)$^*$ & 0.642 $\pm$ 0.026 (6) & 0.761 $\pm$ 0.021 (1)$^*$ & 0.545 $\pm$ 0.084 (9) & 0.618 $\pm$ 0.052 (3) \\
CES (sup) & 0.634 $\pm$ 0.032 (3) & 0.634 $\pm$ 0.076 (8) & 0.637 $\pm$ 0.028 (3) & 0.769 $\pm$ 0.040 (2) & 0.642 $\pm$ 0.026 (5) & 0.761 $\pm$ 0.022 (2) & 0.543 $\pm$ 0.084 (10) & 0.618 $\pm$ 0.049 (4) \\
SE & 0.567 $\pm$ 0.036 (11) & 0.565 $\pm$ 0.055 (12) & 0.614 $\pm$ 0.029 (10) & 0.469 $\pm$ 0.062 (13) & 0.608 $\pm$ 0.026 (11) & 0.630 $\pm$ 0.027 (11) & 0.590 $\pm$ 0.081 (7) & 0.459 $\pm$ 0.044 (13) \\
DSE & 0.591 $\pm$ 0.032 (9) & 0.543 $\pm$ 0.037 (13) & 0.619 $\pm$ 0.021 (7) & 0.479 $\pm$ 0.005 (12) & 0.600 $\pm$ 0.022 (12) & 0.646 $\pm$ 0.021 (9) & 0.634 $\pm$ 0.055 (3) & 0.535 $\pm$ 0.027 (11) \\
KLE & 0.596 $\pm$ 0.036 (7) & 0.741 $\pm$ 0.057 (2) & 0.631 $\pm$ 0.028 (5) & 0.536 $\pm$ 0.047 (10) & 0.632 $\pm$ 0.025 (7) & 0.652 $\pm$ 0.026 (8) & 0.599 $\pm$ 0.079 (4) & 0.591 $\pm$ 0.043 (8) \\
KLE-Full & 0.592 $\pm$ 0.036 (8) & 0.717 $\pm$ 0.055 (4) & 0.632 $\pm$ 0.029 (4) & 0.536 $\pm$ 0.050 (9) & 0.630 $\pm$ 0.025 (9) & 0.653 $\pm$ 0.026 (7) & 0.591 $\pm$ 0.077 (6) & 0.590 $\pm$ 0.041 (9) \\
Eigenscore & 0.610 $\pm$ 0.040 (6) & 0.759 $\pm$ 0.054 (1)$^*$ & 0.611 $\pm$ 0.030 (11) & 0.552 $\pm$ 0.058 (8) & 0.612 $\pm$ 0.026 (10) & 0.642 $\pm$ 0.029 (10) & 0.551 $\pm$ 0.074 (8) & 0.526 $\pm$ 0.050 (12) \\
SelfCheckGPT & 0.567 $\pm$ 0.035 (12) & 0.608 $\pm$ 0.058 (9) & 0.595 $\pm$ 0.027 (12) & 0.608 $\pm$ 0.037 (6) & 0.560 $\pm$ 0.026 (13) & 0.663 $\pm$ 0.025 (6) & 0.518 $\pm$ 0.078 (11) & 0.615 $\pm$ 0.044 (7) \\
P(True) & 0.750 $\pm$ 0.033 (1)$^*$ & 0.589 $\pm$ 0.062 (11) & 0.637 $\pm$ 0.029 (2) & 0.626 $\pm$ 0.076 (5) & 0.649 $\pm$ 0.025 (2) & 0.730 $\pm$ 0.025 (5) & 0.712 $\pm$ 0.065 (2) & 0.586 $\pm$ 0.053 (10) \\
FAVA & 0.538 $\pm$ 0.034 (13) & 0.607 $\pm$ 0.050 (10) & 0.536 $\pm$ 0.030 (13) & 0.589 $\pm$ 0.054 (7) & 0.630 $\pm$ 0.025 (8) & 0.496 $\pm$ 0.029 (13) & 0.597 $\pm$ 0.064 (5) & 0.657 $\pm$ 0.041 (2) \\
EmbRegress & 0.573 $\pm$ 0.028 (10) & 0.736 $\pm$ 0.050 (3) & 0.623 $\pm$ 0.028 (6) & 0.516 $\pm$ 0.050 (11) & 0.773 $\pm$ 0.021 (1)$^*$ & 0.601 $\pm$ 0.076 (12) & 0.738 $\pm$ 0.052 (1)$^*$ & 0.767 $\pm$ 0.041 (1)$^*$ \\
LN-Entropy & 0.626 $\pm$ 0.035 (4) & 0.646 $\pm$ 0.077 (5) & 0.616 $\pm$ 0.028 (8) & 0.756 $\pm$ 0.039 (3) & 0.644 $\pm$ 0.024 (3) & 0.760 $\pm$ 0.022 (3) & 0.515 $\pm$ 0.081 (12) & 0.617 $\pm$ 0.046 (5) \\
Perplexity & 0.626 $\pm$ 0.033 (5) & 0.646 $\pm$ 0.072 (6) & 0.616 $\pm$ 0.028 (9) & 0.756 $\pm$ 0.038 (4) & 0.644 $\pm$ 0.026 (4) & 0.760 $\pm$ 0.022 (4) & 0.515 $\pm$ 0.081 (13) & 0.617 $\pm$ 0.049 (6) \\
\bottomrule
\end{tabular}}
\end{table}

\begin{table}[H]
\centering
\caption{AUROC $\pm$ std (rank) for \texttt{gpt-4.1-mini-2025-04-14} across datasets.}
\label{tab:auroc_gpt41mini202}
\resizebox{\textwidth}{!}{%
\begin{tabular}{lcccccccc}
\toprule
Method & bioasq & coqa & drop & gsm8k & nq\_open & squad & svamp & triviaqa \\
\midrule
CES (unsup) & 0.691 $\pm$ 0.029 (3) & 0.614 $\pm$ 0.059 (8) & 0.642 $\pm$ 0.025 (1)$^*$ & 0.638 $\pm$ 0.053 (5) & 0.669 $\pm$ 0.024 (6) & 0.732 $\pm$ 0.025 (3) & 0.668 $\pm$ 0.073 (8) & 0.630 $\pm$ 0.042 (2) \\
CES (sup) & 0.691 $\pm$ 0.033 (4) & 0.613 $\pm$ 0.059 (9) & 0.642 $\pm$ 0.027 (2) & 0.638 $\pm$ 0.057 (6) & 0.669 $\pm$ 0.024 (7) & 0.732 $\pm$ 0.026 (4) & 0.668 $\pm$ 0.069 (9) & 0.629 $\pm$ 0.041 (3) \\
SE & 0.674 $\pm$ 0.035 (9) & 0.605 $\pm$ 0.045 (11) & 0.564 $\pm$ 0.028 (11) & 0.512 $\pm$ 0.048 (9) & 0.731 $\pm$ 0.023 (1)$^*$ & 0.628 $\pm$ 0.031 (11) & 0.661 $\pm$ 0.071 (10) & 0.600 $\pm$ 0.035 (7) \\
DSE & 0.685 $\pm$ 0.032 (6) & 0.592 $\pm$ 0.035 (12) & 0.582 $\pm$ 0.019 (10) & 0.537 $\pm$ 0.028 (7) & 0.635 $\pm$ 0.021 (11) & 0.645 $\pm$ 0.024 (10) & 0.734 $\pm$ 0.057 (3) & 0.571 $\pm$ 0.033 (13) \\
KLE & 0.714 $\pm$ 0.036 (1)$^*$ & 0.685 $\pm$ 0.052 (3) & 0.637 $\pm$ 0.025 (3) & 0.505 $\pm$ 0.040 (10) & 0.641 $\pm$ 0.023 (8) & 0.707 $\pm$ 0.028 (7) & 0.726 $\pm$ 0.065 (4) & 0.599 $\pm$ 0.044 (8) \\
KLE-Full & 0.708 $\pm$ 0.036 (2) & 0.671 $\pm$ 0.053 (4) & 0.635 $\pm$ 0.026 (4) & 0.504 $\pm$ 0.039 (11) & 0.640 $\pm$ 0.025 (10) & 0.707 $\pm$ 0.027 (6) & 0.725 $\pm$ 0.066 (5) & 0.590 $\pm$ 0.044 (9) \\
Eigenscore & 0.668 $\pm$ 0.039 (10) & 0.720 $\pm$ 0.051 (2) & 0.615 $\pm$ 0.029 (7) & 0.517 $\pm$ 0.043 (8) & 0.597 $\pm$ 0.025 (12) & 0.664 $\pm$ 0.030 (9) & 0.682 $\pm$ 0.071 (7) & 0.581 $\pm$ 0.044 (12) \\
SelfCheckGPT & 0.581 $\pm$ 0.032 (11) & 0.611 $\pm$ 0.055 (10) & 0.554 $\pm$ 0.028 (12) & 0.487 $\pm$ 0.046 (12) & 0.589 $\pm$ 0.026 (13) & 0.611 $\pm$ 0.025 (12) & 0.648 $\pm$ 0.065 (13) & 0.588 $\pm$ 0.042 (10) \\
P(True) & 0.686 $\pm$ 0.042 (5) & 0.660 $\pm$ 0.049 (5) & 0.592 $\pm$ 0.031 (9) & 0.650 $\pm$ 0.048 (4) & 0.679 $\pm$ 0.023 (3) & 0.730 $\pm$ 0.029 (5) & 0.748 $\pm$ 0.059 (2) & 0.627 $\pm$ 0.049 (4) \\
FAVA & 0.529 $\pm$ 0.033 (13) & 0.542 $\pm$ 0.050 (13) & 0.515 $\pm$ 0.030 (13) & 0.692 $\pm$ 0.043 (1)$^*$ & 0.640 $\pm$ 0.024 (9) & 0.544 $\pm$ 0.029 (13) & 0.695 $\pm$ 0.063 (6) & 0.587 $\pm$ 0.040 (11) \\
EmbRegress & 0.562 $\pm$ 0.030 (12) & 0.786 $\pm$ 0.039 (1)$^*$ & 0.612 $\pm$ 0.028 (8) & 0.461 $\pm$ 0.041 (13) & 0.730 $\pm$ 0.021 (2) & 0.665 $\pm$ 0.065 (8) & 0.762 $\pm$ 0.042 (1)$^*$ & 0.801 $\pm$ 0.033 (1)$^*$ \\
LN-Entropy & 0.677 $\pm$ 0.034 (7) & 0.618 $\pm$ 0.058 (6) & 0.628 $\pm$ 0.029 (5) & 0.652 $\pm$ 0.053 (2) & 0.677 $\pm$ 0.023 (4) & 0.737 $\pm$ 0.025 (1)$^*$ & 0.658 $\pm$ 0.074 (11) & 0.626 $\pm$ 0.040 (5) \\
Perplexity & 0.677 $\pm$ 0.033 (8) & 0.618 $\pm$ 0.063 (7) & 0.628 $\pm$ 0.027 (6) & 0.652 $\pm$ 0.049 (3) & 0.677 $\pm$ 0.024 (5) & 0.737 $\pm$ 0.026 (2) & 0.658 $\pm$ 0.069 (12) & 0.626 $\pm$ 0.043 (6) \\
\bottomrule
\end{tabular}}
\end{table}

\begin{table}[H]
\centering
\caption{AUROC $\pm$ std (rank) for \texttt{gpt-4.1-nano-2025-04-14} across datasets.}
\label{tab:auroc_gpt41nano202}
\resizebox{\textwidth}{!}{%
\begin{tabular}{lcccccccc}
\toprule
Method & bioasq & coqa & drop & gsm8k & nq\_open & squad & svamp & triviaqa \\
\midrule
CES (unsup) & 0.714 $\pm$ 0.029 (6) & 0.693 $\pm$ 0.046 (6) & 0.613 $\pm$ 0.025 (8) & 0.726 $\pm$ 0.036 (5) & 0.723 $\pm$ 0.021 (7) & 0.681 $\pm$ 0.025 (4) & 0.680 $\pm$ 0.041 (10) & 0.781 $\pm$ 0.032 (2) \\
CES (sup) & 0.713 $\pm$ 0.027 (7) & 0.694 $\pm$ 0.044 (5) & 0.613 $\pm$ 0.026 (9) & 0.727 $\pm$ 0.036 (4) & 0.722 $\pm$ 0.022 (8) & 0.681 $\pm$ 0.026 (5) & 0.680 $\pm$ 0.040 (11) & 0.781 $\pm$ 0.035 (3) \\
SE & 0.730 $\pm$ 0.029 (5) & 0.566 $\pm$ 0.044 (13) & 0.626 $\pm$ 0.025 (7) & 0.538 $\pm$ 0.041 (12) & 0.740 $\pm$ 0.023 (4) & 0.630 $\pm$ 0.028 (12) & 0.697 $\pm$ 0.043 (6) & 0.629 $\pm$ 0.039 (13) \\
DSE & 0.738 $\pm$ 0.028 (4) & 0.605 $\pm$ 0.035 (12) & 0.654 $\pm$ 0.024 (4) & 0.607 $\pm$ 0.030 (7) & 0.734 $\pm$ 0.021 (6) & 0.656 $\pm$ 0.026 (9) & 0.764 $\pm$ 0.035 (1)$^*$ & 0.670 $\pm$ 0.033 (12) \\
KLE & 0.785 $\pm$ 0.028 (1)$^*$ & 0.648 $\pm$ 0.047 (8) & 0.685 $\pm$ 0.025 (2) & 0.549 $\pm$ 0.039 (9) & 0.758 $\pm$ 0.021 (3) & 0.709 $\pm$ 0.027 (2) & 0.752 $\pm$ 0.039 (3) & 0.736 $\pm$ 0.035 (10) \\
KLE-Full & 0.777 $\pm$ 0.027 (2) & 0.637 $\pm$ 0.043 (10) & 0.679 $\pm$ 0.025 (3) & 0.546 $\pm$ 0.038 (10) & 0.758 $\pm$ 0.021 (2) & 0.710 $\pm$ 0.027 (1)$^*$ & 0.748 $\pm$ 0.040 (4) & 0.737 $\pm$ 0.035 (9) \\
Eigenscore & 0.707 $\pm$ 0.031 (8) & 0.611 $\pm$ 0.046 (11) & 0.609 $\pm$ 0.026 (12) & 0.503 $\pm$ 0.041 (13) & 0.699 $\pm$ 0.022 (11) & 0.641 $\pm$ 0.028 (11) & 0.640 $\pm$ 0.043 (13) & 0.745 $\pm$ 0.034 (8) \\
SelfCheckGPT & 0.592 $\pm$ 0.031 (11) & 0.663 $\pm$ 0.042 (7) & 0.633 $\pm$ 0.026 (6) & 0.605 $\pm$ 0.031 (8) & 0.636 $\pm$ 0.026 (13) & 0.653 $\pm$ 0.026 (10) & 0.686 $\pm$ 0.038 (9) & 0.769 $\pm$ 0.029 (6) \\
P(True) & 0.768 $\pm$ 0.030 (3) & 0.754 $\pm$ 0.042 (2) & 0.703 $\pm$ 0.026 (1)$^*$ & 0.834 $\pm$ 0.028 (1)$^*$ & 0.735 $\pm$ 0.021 (5) & 0.695 $\pm$ 0.028 (3) & 0.753 $\pm$ 0.040 (2) & 0.751 $\pm$ 0.036 (7) \\
FAVA & 0.540 $\pm$ 0.032 (12) & 0.643 $\pm$ 0.039 (9) & 0.583 $\pm$ 0.028 (13) & 0.615 $\pm$ 0.040 (6) & 0.647 $\pm$ 0.024 (12) & 0.570 $\pm$ 0.028 (13) & 0.703 $\pm$ 0.036 (5) & 0.690 $\pm$ 0.029 (11) \\
EmbRegress & 0.518 $\pm$ 0.028 (13) & 0.790 $\pm$ 0.030 (1)$^*$ & 0.637 $\pm$ 0.025 (5) & 0.544 $\pm$ 0.038 (11) & 0.801 $\pm$ 0.020 (1)$^*$ & 0.678 $\pm$ 0.076 (8) & 0.652 $\pm$ 0.039 (12) & 0.845 $\pm$ 0.026 (1)$^*$ \\
LN-Entropy & 0.696 $\pm$ 0.030 (9) & 0.699 $\pm$ 0.044 (3) & 0.612 $\pm$ 0.028 (10) & 0.749 $\pm$ 0.033 (2) & 0.720 $\pm$ 0.024 (9) & 0.681 $\pm$ 0.027 (6) & 0.689 $\pm$ 0.042 (7) & 0.770 $\pm$ 0.033 (4) \\
Perplexity & 0.696 $\pm$ 0.029 (10) & 0.699 $\pm$ 0.042 (4) & 0.612 $\pm$ 0.025 (11) & 0.749 $\pm$ 0.032 (3) & 0.720 $\pm$ 0.023 (10) & 0.681 $\pm$ 0.026 (7) & 0.689 $\pm$ 0.041 (8) & 0.770 $\pm$ 0.031 (5) \\
\bottomrule
\end{tabular}}
\end{table}

\begin{table}[H]
\centering
\caption{AUROC $\pm$ std (rank) for \texttt{gpt-4o-mini-2024-07-18} across datasets.}
\label{tab:auroc_gpt4omini202}
\resizebox{\textwidth}{!}{%
\begin{tabular}{lcccccccc}
\toprule
Method & bioasq & coqa & drop & gsm8k & nq\_open & squad & svamp & triviaqa \\
\midrule
CES (unsup) & 0.698 $\pm$ 0.027 (2) & 0.661 $\pm$ 0.063 (4) & 0.621 $\pm$ 0.031 (4) & 0.669 $\pm$ 0.040 (4) & 0.700 $\pm$ 0.023 (3) & 0.744 $\pm$ 0.023 (5) & 0.692 $\pm$ 0.056 (6) & 0.636 $\pm$ 0.046 (3) \\
CES (sup) & 0.697 $\pm$ 0.028 (3) & 0.661 $\pm$ 0.062 (5) & 0.620 $\pm$ 0.030 (5) & 0.669 $\pm$ 0.042 (5) & 0.700 $\pm$ 0.024 (4) & 0.744 $\pm$ 0.024 (4) & 0.692 $\pm$ 0.055 (5) & 0.635 $\pm$ 0.045 (4) \\
SE & 0.597 $\pm$ 0.033 (9) & 0.530 $\pm$ 0.062 (13) & 0.596 $\pm$ 0.028 (12) & 0.536 $\pm$ 0.037 (11) & 0.645 $\pm$ 0.025 (11) & 0.602 $\pm$ 0.029 (11) & 0.560 $\pm$ 0.067 (13) & 0.555 $\pm$ 0.043 (12) \\
DSE & 0.636 $\pm$ 0.031 (8) & 0.597 $\pm$ 0.040 (12) & 0.611 $\pm$ 0.022 (7) & 0.554 $\pm$ 0.021 (10) & 0.672 $\pm$ 0.022 (8) & 0.665 $\pm$ 0.023 (8) & 0.675 $\pm$ 0.045 (10) & 0.580 $\pm$ 0.032 (8) \\
KLE & 0.643 $\pm$ 0.034 (7) & 0.651 $\pm$ 0.054 (8) & 0.660 $\pm$ 0.026 (2) & 0.576 $\pm$ 0.029 (9) & 0.671 $\pm$ 0.024 (9) & 0.670 $\pm$ 0.027 (6) & 0.685 $\pm$ 0.053 (8) & 0.566 $\pm$ 0.046 (9) \\
KLE-Full & 0.644 $\pm$ 0.034 (6) & 0.646 $\pm$ 0.052 (10) & 0.656 $\pm$ 0.026 (3) & 0.576 $\pm$ 0.030 (8) & 0.672 $\pm$ 0.025 (7) & 0.664 $\pm$ 0.026 (9) & 0.685 $\pm$ 0.056 (7) & 0.564 $\pm$ 0.044 (11) \\
Eigenscore & 0.581 $\pm$ 0.035 (11) & 0.624 $\pm$ 0.063 (11) & 0.601 $\pm$ 0.030 (11) & 0.579 $\pm$ 0.036 (7) & 0.638 $\pm$ 0.025 (12) & 0.617 $\pm$ 0.029 (10) & 0.640 $\pm$ 0.050 (11) & 0.565 $\pm$ 0.048 (10) \\
SelfCheckGPT & 0.589 $\pm$ 0.031 (10) & 0.693 $\pm$ 0.054 (3) & 0.552 $\pm$ 0.027 (13) & 0.475 $\pm$ 0.030 (12) & 0.584 $\pm$ 0.027 (13) & 0.669 $\pm$ 0.025 (7) & 0.590 $\pm$ 0.051 (12) & 0.554 $\pm$ 0.040 (13) \\
P(True) & 0.850 $\pm$ 0.020 (1)$^*$ & 0.717 $\pm$ 0.055 (2) & 0.676 $\pm$ 0.030 (1)$^*$ & 0.602 $\pm$ 0.053 (6) & 0.748 $\pm$ 0.022 (1)$^*$ & 0.804 $\pm$ 0.021 (1)$^*$ & 0.815 $\pm$ 0.047 (1)$^*$ & 0.737 $\pm$ 0.044 (1)$^*$ \\
FAVA & 0.545 $\pm$ 0.032 (12) & 0.647 $\pm$ 0.048 (9) & 0.618 $\pm$ 0.029 (6) & 0.766 $\pm$ 0.035 (1)$^*$ & 0.652 $\pm$ 0.024 (10) & 0.521 $\pm$ 0.027 (13) & 0.678 $\pm$ 0.046 (9) & 0.609 $\pm$ 0.041 (7) \\
EmbRegress & 0.529 $\pm$ 0.028 (13) & 0.792 $\pm$ 0.041 (1)$^*$ & 0.606 $\pm$ 0.028 (8) & 0.456 $\pm$ 0.036 (13) & 0.733 $\pm$ 0.022 (2) & 0.553 $\pm$ 0.066 (12) & 0.784 $\pm$ 0.037 (2) & 0.720 $\pm$ 0.036 (2) \\
LN-Entropy & 0.690 $\pm$ 0.029 (4) & 0.659 $\pm$ 0.063 (6) & 0.601 $\pm$ 0.029 (9) & 0.682 $\pm$ 0.040 (2) & 0.696 $\pm$ 0.024 (5) & 0.748 $\pm$ 0.022 (2) & 0.694 $\pm$ 0.053 (3) & 0.622 $\pm$ 0.044 (5) \\
Perplexity & 0.690 $\pm$ 0.027 (5) & 0.659 $\pm$ 0.061 (7) & 0.601 $\pm$ 0.029 (10) & 0.682 $\pm$ 0.038 (3) & 0.696 $\pm$ 0.024 (6) & 0.748 $\pm$ 0.022 (3) & 0.694 $\pm$ 0.056 (4) & 0.622 $\pm$ 0.045 (6) \\
\bottomrule
\end{tabular}}
\end{table}





\subsection{Independence Assumption and Autocorrelation}\label{app:exp_independence}

\textbf{Setup.}
The KS test assumes independent observations.
Token entropies in autoregressive generation may exhibit serial dependence due to attention patterns and positional encoding effects.
We compute the lag-1 autocorrelation $\rho_1$ for all 18{,}705 generated sequences across all 80 experiments (10 models $\times$ 8 datasets $\times$ ${\sim}$500 samples per cell) and derive the effective sample size ratio $n_{\text{eff}}/n = (1-\rho_1)/(1+\rho_1)$.

\textbf{Results.}
The median lag-1 autocorrelation is $\rho_1 = 0.061$, with 2{,}947/18{,}705 sequences (15.8\%) exhibiting $|\rho_1| > 0.3$.
The fraction of sequences whose autocorrelation falls within the $\pm 1.96/\sqrt{n}$
white-noise bounds
is 80.5\% (15{,}055/18{,}705), indicating that approximately one-fifth of sequences exhibit statistically significant serial dependence.

The practical implication is modest: at $\rho_1 = 0.061$, the effective sample size ratio is $n_{\text{eff}}/n \approx 0.89$, reducing statistical power by only ${\sim}11\%$.
Compared to initial expectations, serial dependence is mild for the majority of sequences.
The KS test remains valid for Type-I error control (it is distribution-free), and exhibits only marginally reduced power for the ${\sim}16\%$ of sequences with $|\rho_1| > 0.3$.
Nevertheless, future work should investigate block-permutation or rank-based tests that explicitly account for serial dependence in the high-autocorrelation tail.

\begin{figure}[H]
    \centering
    \includegraphics[width=\textwidth]{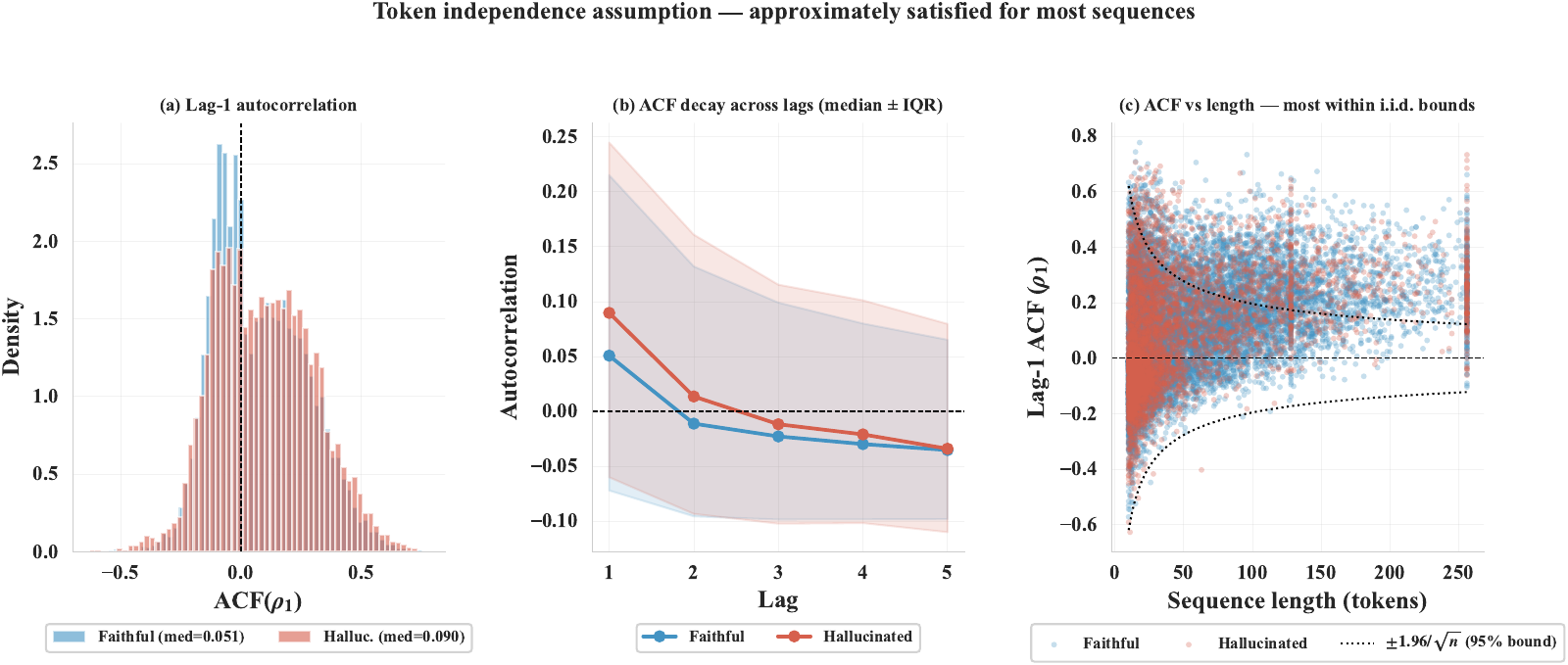}
    \caption{\textbf{Token entropy autocorrelation analysis} across 18{,}705 sequences.
    (a)~Distribution of lag-1 ACF; median $\rho_1 = 0.061$, well below the threshold for concern.
    (b)~Lag-1 vs.\ lag-2 autocorrelation scatter; second-order dependence is weaker (median $\rho_2 \approx 0.03$).
    (c)~Effective sample size ratio $n_\text{eff}/n$; most sequences retain $>$80\% effective samples.
    Only 15.8\% of sequences have $|\rho_1| > 0.3$, yielding substantially reduced effective sample sizes.}
    \label{fig:autocorrelation}
\end{figure}

\subsection{Statistical Power of KS test as a Function of Sample Size}\label{app:exp_power}

\textbf{Setup.}
The KS test's power depends on sample size, so we characterize how many tokens are needed to reliably detect the distributional shift.
For each of the 80 experiments, we subsample $n \in \{50, 100, 200, 500, 1000, 2000, 5000, 10000\}$ tokens from each class and repeat the KS test 5 times per subsample size, recording the proportion of rejections at $\alpha{=}0.05$.

\textbf{Results.}
Figure~\ref{fig:ks_power_matrix} displays the rejection rate as a function of sample size across all $10 \times 8 = 80$ cells.
Overall, 2{,}090/3{,}200 resamples (65\%) achieve significance.
As sample size increases from 50 to 10{,}000, rejection rates increase monotonically for the majority of experiments.
The power does not saturate for all experiments within 10{,}000 tokens, reflecting the genuinely small effect sizes (median KS-$D = 0.100$) and motivating the use of the full token sequence per generation rather than fixed-length subsamples.

Notably, the API models (GPT-4.1 family) show comparable power characteristics to open-weight models despite having no access to logits, the entropy signal is equally detectable when computed from token-level log-probabilities.

\begin{figure}[h]
    \centering
    \includegraphics[width=\textwidth]{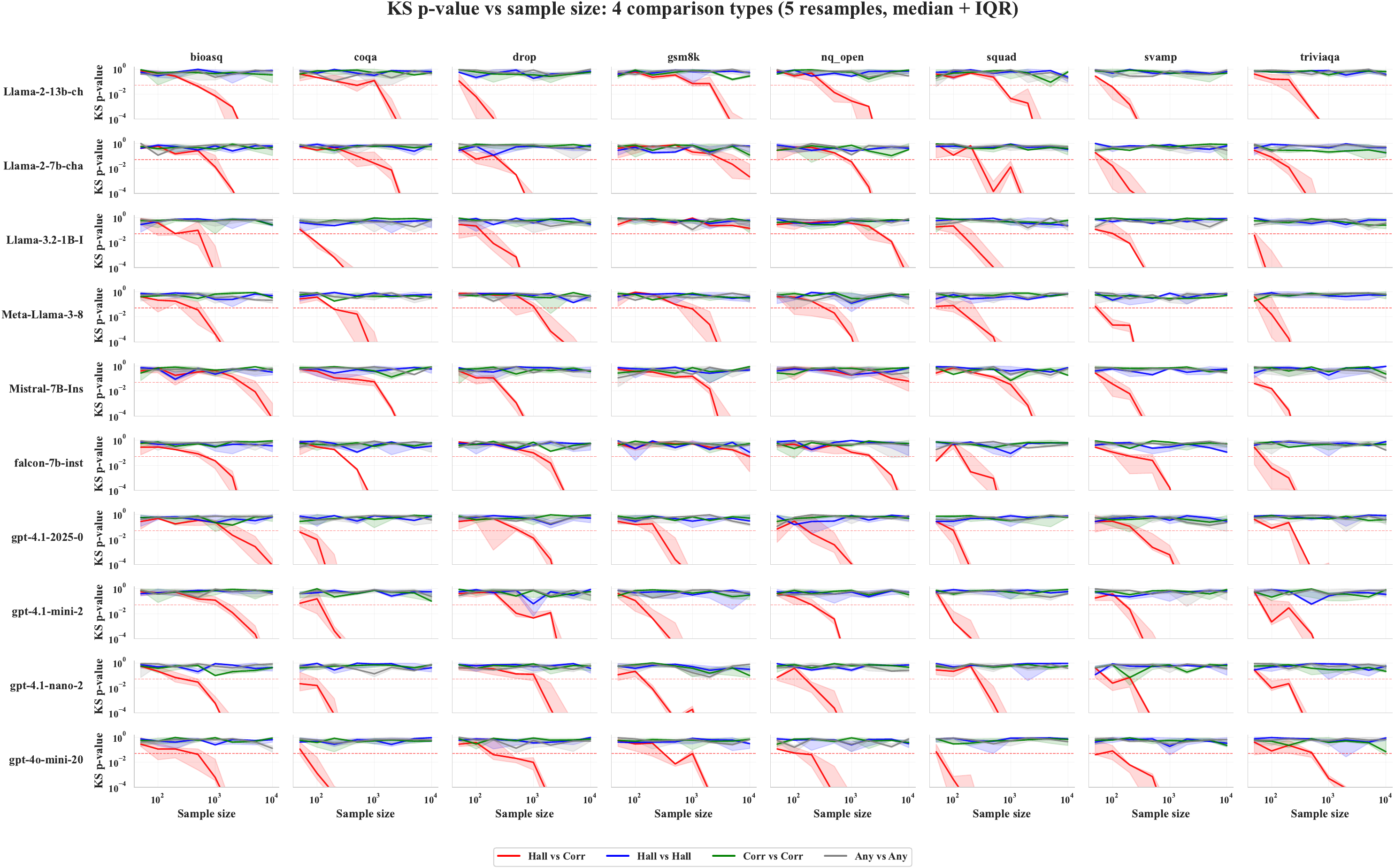}
    \caption{\textbf{KS test power vs.\ sample size} across all 80 experiments (10 models $\times$ 8 datasets).
    Each cell shows the $\log_{10}$ $p$-value as a function of subsample size for one model--dataset pair.
    Significance threshold ($\alpha{=}0.05$) is shown as a horizontal line.
    Power increases monotonically with sample size; 65\% of all 3{,}200 resamples achieve significance.}
    \label{fig:ks_power_matrix}
\end{figure}

\subsection{Per-Dataset ECDF Gallery}\label{app:ecdf_gallery}

Figure~\ref{fig:ecdf_gallery} shows the empirical CDFs of token entropy for faithful and hallucinated generations for all 8 datasets (pooled across all 10 models).
The visual offset between curves corresponds to the $d_{KS}$ distance; larger offsets indicate easier detection.
SVAMP and TriviaQA show the clearest visual separation (KS-$D > 0.13$), while GSM8K shows the smallest visible difference (KS-$D = 0.05$), consistent with its difficulty for entropy-based detection.

\begin{figure}[H]
    \centering
    \includegraphics[width=\textwidth]{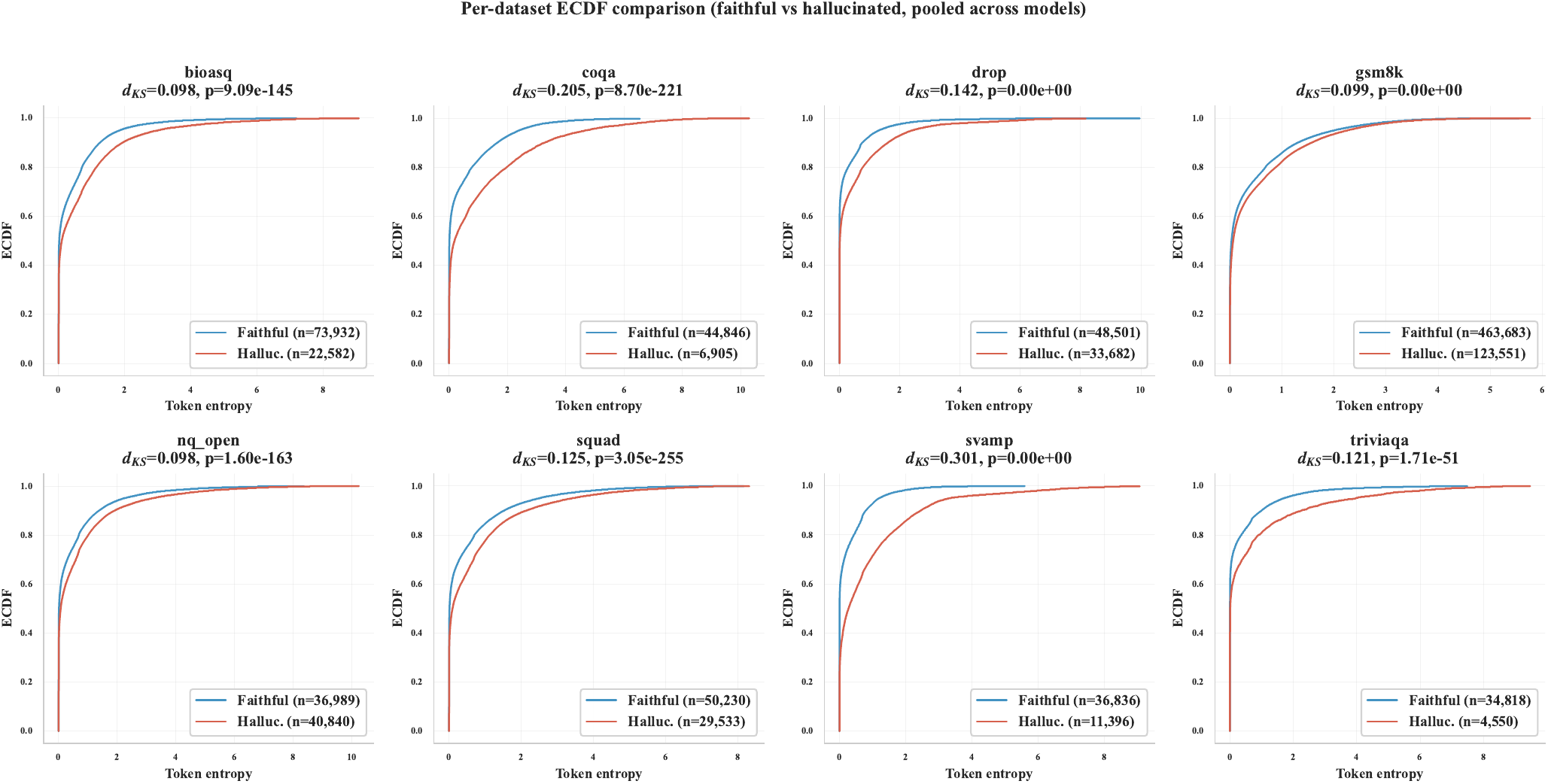}
    \caption{\textbf{Per-dataset ECDF gallery.}
    Faithful (blue) vs.\ hallucinated (red) entropy distributions for each of 8 datasets, pooled across all 10 models.
    $d_{KS}$ and $p$-values are annotated per panel.
    Visual separation directly corresponds to detection performance: datasets with clearly offset curves (SVAMP, TriviaQA) yield the highest AUROC.}
    \label{fig:ecdf_gallery}
\end{figure}

\subsection{Generation Length and Test Power}\label{app:exp_length}

\textbf{Setup.}
The KS test's power depends on sample size, so shorter generations may lack sufficient tokens for reliable discrimination.
We stratify all samples into length bins and compute per-stratum AUROC for CES and baselines within each experiment.
The correlation between the $d_{KS}$ effect size and CES AUROC is also characterised.

\textbf{Results.}
Figure~\ref{fig:length_stratification} shows that CES maintains a consistent advantage across all length strata, with AUROC remaining above 0.5 even for the shortest generations.
As expected, performance improves with generation length for all distributional methods.
The strong positive correlation between $d_{KS}$ and CES AUROC (Spearman $\rho = 0.658$, $p = 3.3 \times 10^{-11}$) confirms that detection performance is fundamentally limited by the strength of the distributional shift.
The rank correlation between dataset difficulty ordered by AUROC and by $d_{KS}$ is $\rho = 0.881$ ($p = 3.9 \times 10^{-3}$), indicating that the two orderings are highly concordant: datasets where the distributional signal is strongest are precisely those where detection is easiest.

\begin{figure}[h]
    \centering
    \includegraphics[width=0.75\textwidth]{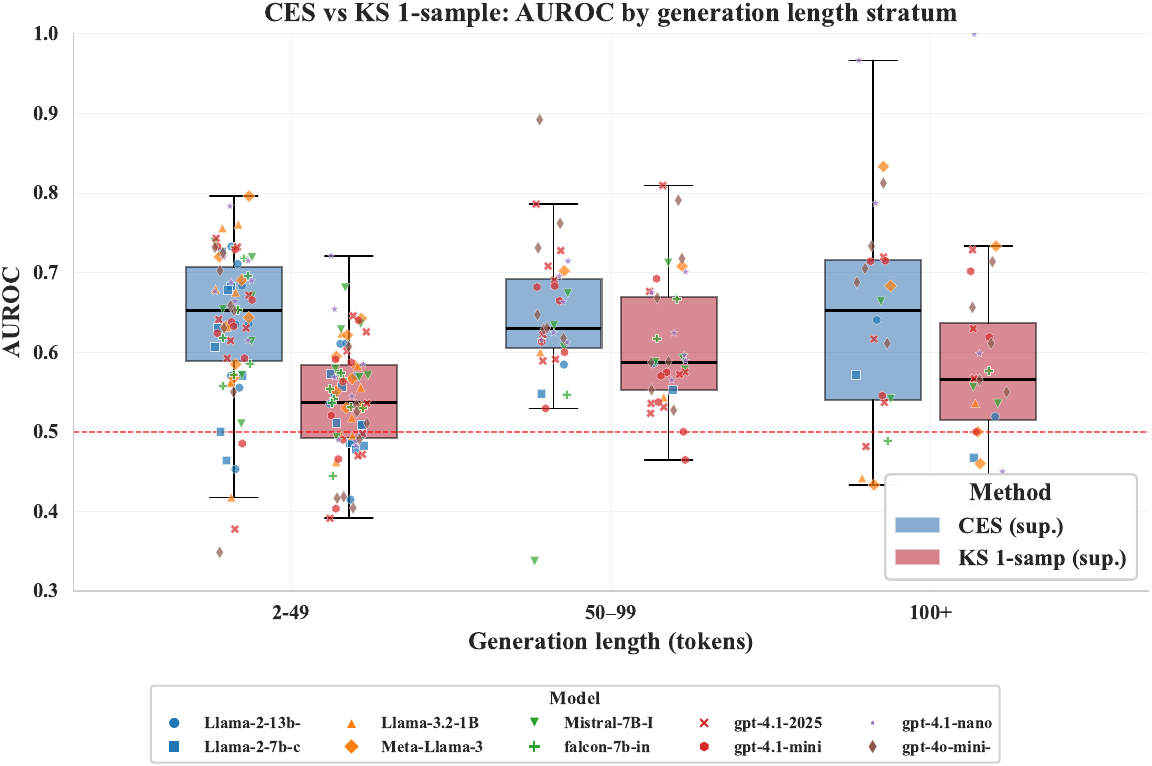}
    \caption{\textbf{AUROC by generation length stratum.}
    CES maintains its advantage at all lengths, with the distributional signal strengthening for longer generations.
    139 stratified rows across 3 length bins are shown.}
    \label{fig:length_stratification}
\end{figure}

\subsection{Combinatorial Statistic Ablation}\label{app:exp_combinatorial}
The choice of geometric mean of mean and maximum generation entropy appears to be arbitrary. 
While we designed CES to capture observed patterns in the data, we also run ablations to show that this is indeed the best choice out of all combinations.  
We run this across 44 variants of our main method (CES) finding that we perform the best (in terms of rank). 

\textbf{Setup.}
CES computes $\sqrt{F_0(\bar{h}) \cdot F_0(h_{\max})}$: the geometric mean of the ECDF-transformed mean and maximum entropy.
To verify this choice is optimal among summary-statistic combinations, we exhaustively evaluate 44 variants combining different entropy summaries (mean, median, max, $q_{25}$, $q_{75}$).
We rank all variants by average rank across 80 experiments and apply the Friedman test followed by Nemenyi post-hoc comparisons.

\textbf{Results.}
The chosen CES formula $\text{geom}(\text{mean}, \text{max})$ achieves the best average rank (5.66 out of 15 top methods selected for statistical comparison) with median AUROC of 0.6493.
The Friedman test strongly rejects exchangeability ($\chi^2_F = 90.95$, $p = 2.5 \times 10^{-13}$; Iman-Davenport $F = 6.98$, $p = 6.1 \times 10^{-14}$).

Pairwise Wilcoxon signed-rank tests with Holm--Bonferroni correction confirm that CES is \emph{significantly} better than all 14 compared alternatives (14/14 significant at $\alpha{=}0.05$).

\begin{figure}[h]
    \centering
    \includegraphics[width=\textwidth]{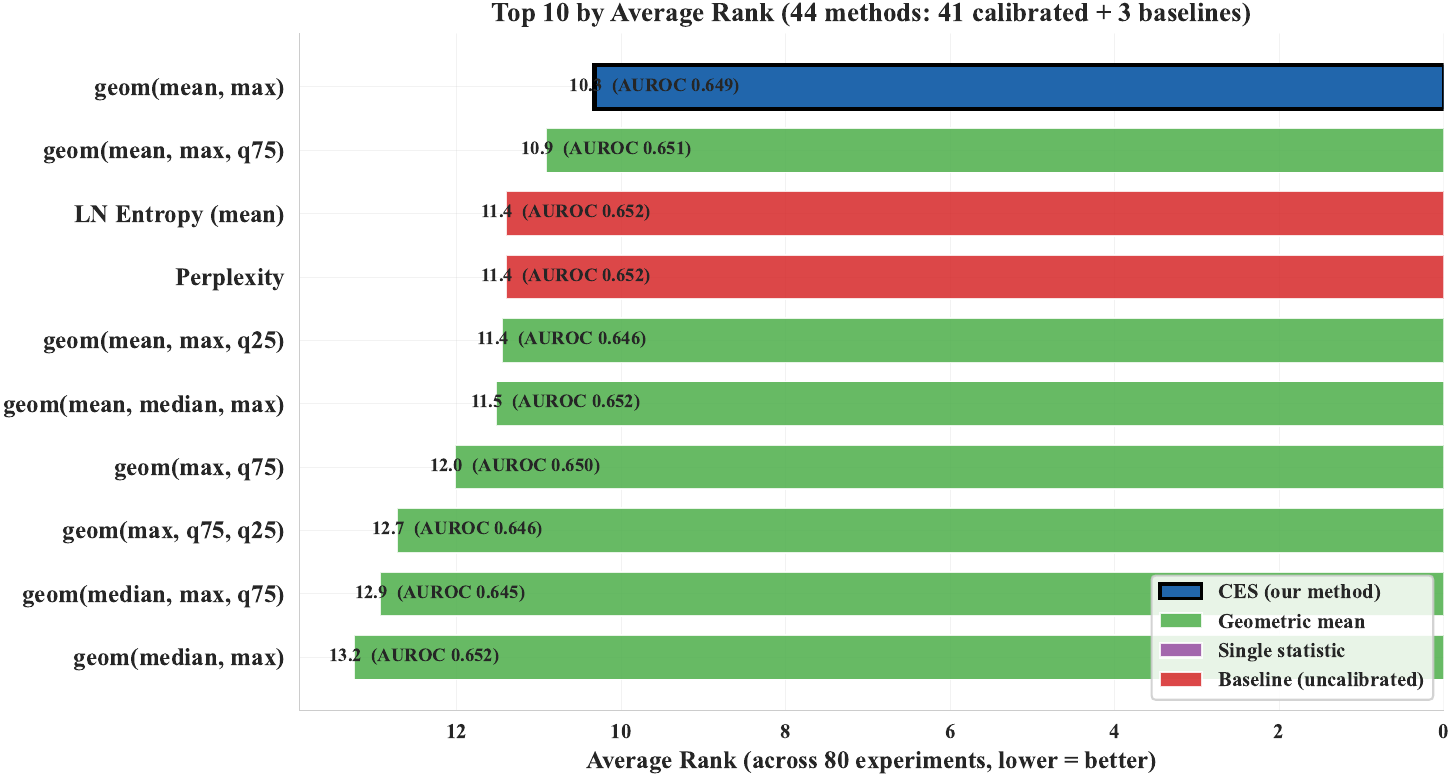}
    \caption{\textbf{Combinatorial statistic ranking.} (top 10 of 44 variants).
    Ranked by average rank across 80 experiments (lower = better).
    The CES formula $\text{geom}(\text{mean}, \text{max})$ achieves the best average rank (10.3), confirming that combining location (mean) and tail (max) entropy statistics via geometric aggregation optimally captures both distributional features.}
    \label{fig:combinatorial_ranking}
\end{figure}

\subsection{Calibration Contamination}\label{app:exp_contamination}

\textbf{Setup.}
The supervised CES variant uses a reference ECDF built from known faithful generations.
In practice, the calibration set may be contaminated with hallucinated samples.
We simulate contamination by randomly replacing $c\%$ of the calibration pool with hallucinated tokens, for $c \in \{0, 10, 20, 30, 40, 50\}$, across all 80 experiments.

\textbf{Results.}
CES is remarkably robust to calibration contamination (Figure~\ref{fig:contamination}).
At 0\% contamination, median AUROC is 0.6531; at 50\% contamination, it is 0.6533 ($\Delta = +0.0002$).
This near-zero degradation arises because the entropy distributions of faithful and hallucinated sequences have substantial overlap (median KS-$D = 0.100$): contaminating the reference with hallucinated tokens slightly broadens the ECDF but does not qualitatively alter its shape.

In contrast, the KS 1-sample test is more sensitive to contamination: supervised KS drops from 0.431 to 0.421 ($\Delta = -0.010$) at 50\% contamination, while unsupervised KS remains constant (by construction).

This result also explains why the \emph{unsupervised} CES variant (which uses a pooled reference from both classes) performs comparably to the supervised variant: the unsupervised ECDF is effectively a ``maximally contaminated'' reference, yet discrimination is preserved because the CES score captures the \emph{relative} position within the distribution rather than absolute values.

\begin{figure}[H]
    \centering
    \includegraphics[width=1\textwidth]{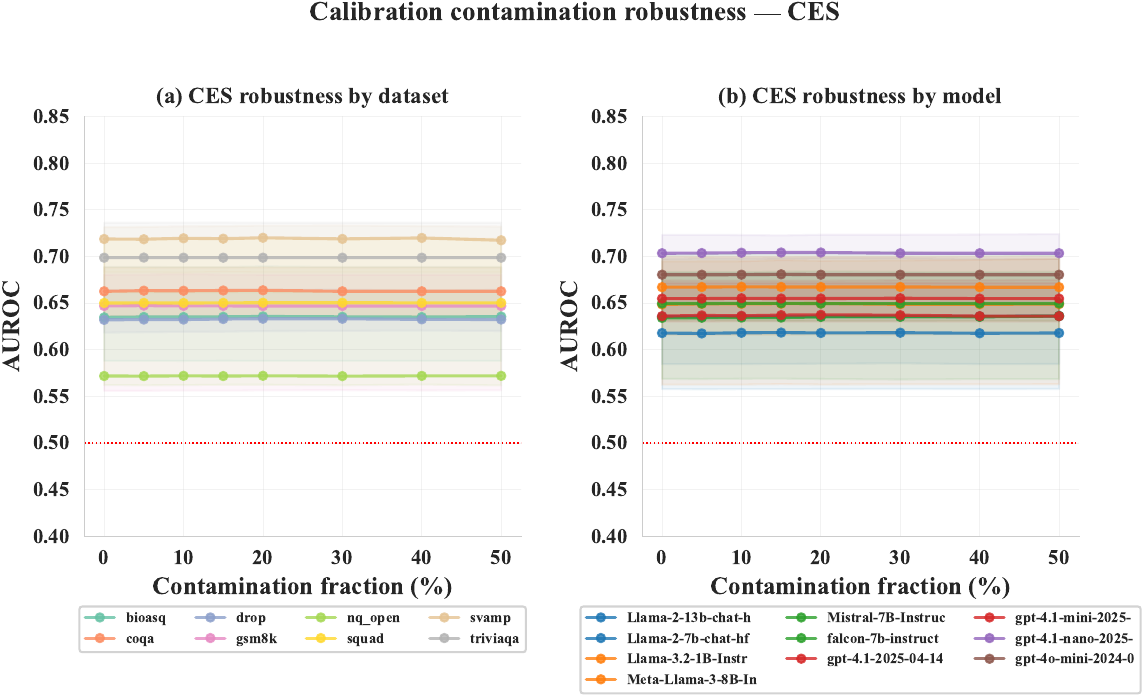}
    \caption{\textbf{CES is robust to calibration contamination.}
    AUROC as a function of the fraction of hallucinated tokens injected into the reference ECDF, across all 80 experiments.
    Even at 50\% contamination, CES performance is unchanged ($\Delta = +0.0002$), while the KS 1-sample test degrades moderately.}
    \label{fig:contamination}
\end{figure}

\subsection{Robustness to Noisy Judgments}\label{app:exp_noisy_judge}

\textbf{Setup.}
The preceding experiments assume access to a perfect binary oracle for hallucination labels.
In practice, labels may be noisy due to imperfect judges (e.g., GPT-based evaluation) or inherently ambiguous cases.
We simulate label noise by randomly flipping $p \in \{0.001, 0.01, 0.02, 0.05, 0.10, 0.20, 0.50\}$ fraction of the calibration labels (corrupting the reference ECDF) and evaluating on unchanged test labels.
Each noise level is repeated 30 times across all 80 experiments, and we report median AUROC across repetitions.

\textbf{Results.}
CES exhibits remarkable robustness to label noise (Table~\ref{tab:noisy_judge} and Figure~\ref{fig:noisy_judge}).
Even at $p = 0.50$ (50\% of calibration labels flipped), the median AUROC remains virtually unchanged from the clean baseline.
The unsupervised variant (which does not use labels at all) provides a natural performance floor that is already within $0.001$ of the supervised variant.
The gap between supervised and unsupervised CES converges to zero monotonically with noise level.

This result has direct practical implications: when using LLM-as-judge for hallucination labelling (which typically achieves 80--90\% accuracy), CES performance will not degrade meaningfully.
The method can be deployed with noisy supervision without any performance penalty.

\begin{table}[H]
\centering
\small
\caption{\textbf{CES is robust to label noise.} Median AUROC across 80 experiments at varying noise fractions (30 repetitions each). Both supervised and unsupervised variants are stable; the gap converges to zero under noise.}
\label{tab:noisy_judge}
\begin{tabular}{ccccc}
\toprule
$p_{\text{noise}}$ & AUROC (sup.) & AUROC (unsup.) & $\Delta$ (sup$-$unsup) & Hall.\ in ref. \\
\midrule
0.000 & 0.6531 & 0.6531 & $-$0.0004 & 0 \\
0.001 & 0.6531 & 0.6531 & $-$0.0004 & 0 \\
0.010 & 0.6531 & 0.6531 & $-$0.0004 & 1 \\
0.020 & 0.6531 & 0.6531 & $-$0.0004 & 2 \\
0.050 & 0.6531 & 0.6531 & $-$0.0004 & 6 \\
0.100 & 0.6532 & 0.6531 & $-$0.0004 & 13 \\
0.200 & 0.6531 & 0.6531 & $-$0.0003 & 26 \\
0.500 & 0.6530 & 0.6531 & $+$0.0000 & 64 \\
\bottomrule
\end{tabular}
\end{table}

\begin{figure}[H]
    \centering
    \includegraphics[width=\textwidth]{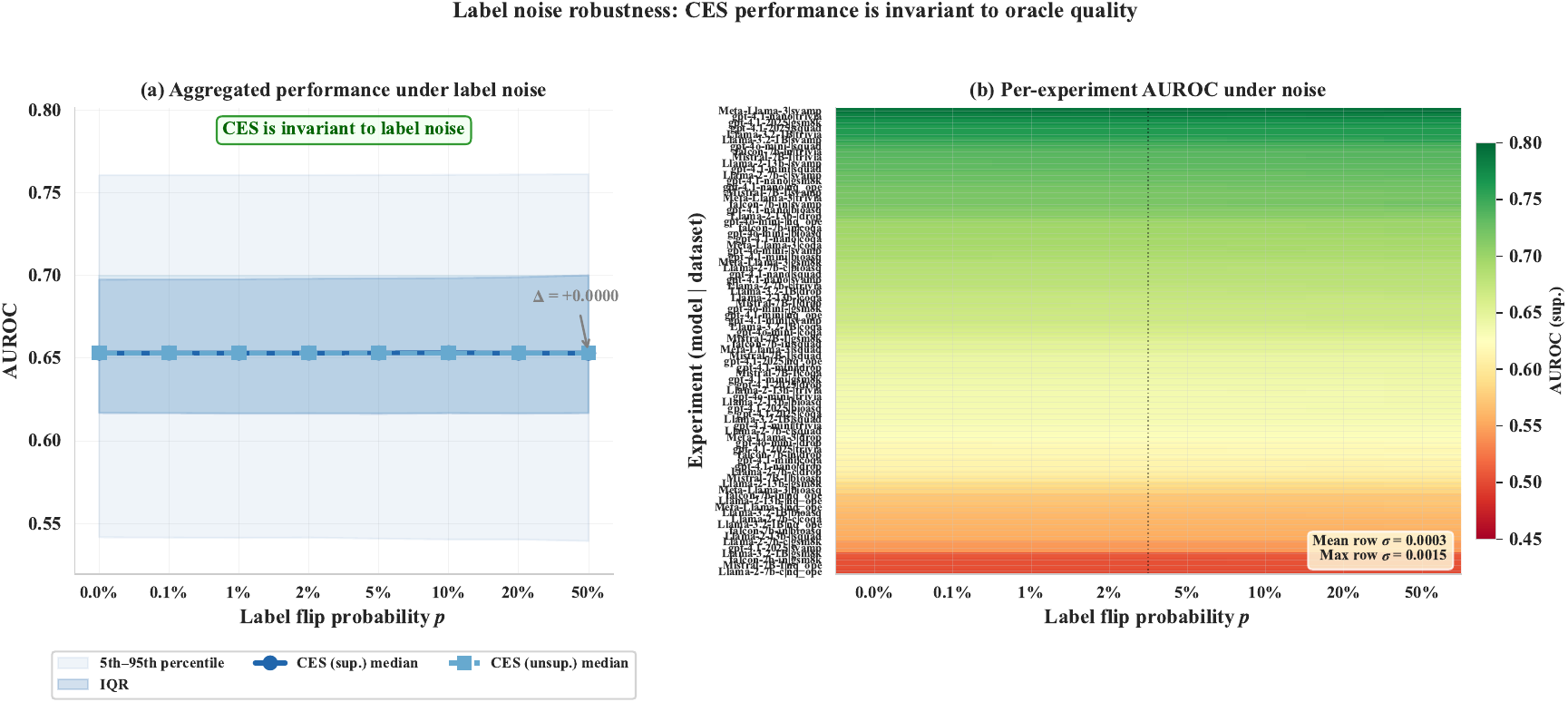}
    \caption{\textbf{Noisy judge experiment} across all 80 model--dataset cells.
    Each panel shows one experiment's AUROC as a function of label noise $p$.
    Performance is essentially flat across the entire noise range, confirming CES's insensitivity to label quality.}
    \label{fig:noisy_judge}
\end{figure}

\subsection{Critical Difference Analysis}\label{app:critical_difference_analysis}

\textbf{Setup.}
We compare CES against 16 external benchmark methods from the literature (Semantic Entropy, Discrete SE, KLE variants, Eigenscore, Effective Rank, SelfCheckGPT, Embedding Regression, FAVA, P(True), and LN Entropy) across all 80 experiments.
Formal statistical testing (Friedman + Nemenyi) is performed on the full 80-experiment grid with all 17 methods.

\textbf{Results (80 experiments, average rank).}
Across all 80 experiments (6 open-weight + 4 API models $\times$ 8 datasets), CES (unsupervised) achieves average rank 6.29, behind KLE (heat) (6.16) and KLE (6.21).
Embedding Regression achieves the most first-place finishes (32/80), followed by CES and P(True) (9/80 each) and KLE (8/80).
Overall, CES wins 854/1279 pairwise comparisons (66.8\%) and beats 12/16 methods at $>$50\% win rate.

\textbf{Formal test (80 experiments, 17 methods).}
The Friedman test strongly rejects method exchangeability ($\chi^2_F = 272.80$, $p \approx 0$; Iman-Davenport $F = 21.47$, $p \approx 0$).
The Nemenyi CD is 2.78 ($k = 17$).
CES (unsupervised) achieves average rank 6.29 and belongs to the top clique (Table~\ref{tab:benchmark_clique}).

\begin{table}[t]
\centering
\small
\caption{\textbf{Top statistical clique} (Nemenyi CD = 2.78, $\alpha = 0.05$). Methods within CD of the best rank (6.16) are statistically indistinguishable. Multi-sample methods ($\dagger$) require multiple forward passes per input.}
\label{tab:benchmark_clique}
\begin{tabular}{lcc}
\toprule
Method & Avg.\ Rank & Multi-sample? \\
\midrule
KLE (heat)$^\dagger$         & 6.16 & Yes \\
KLE$^\dagger$                & 6.16 & Yes \\
\textbf{CES (unsup.)}       & \textbf{6.29} & \textbf{No} \\
KLE (full)$^\dagger$         & 6.65 & Yes \\
Embed.\ Regr.$^\dagger$     & 6.92 & Yes \\
CES (sup.)                   & 7.03 & No \\
LN Entropy                   & 8.20 & No \\
Perplexity                   & 8.20 & No \\
Discrete SE$^\dagger$        & 8.61 & Yes \\
LNE (bench)                  & 8.90 & No \\
\bottomrule
\end{tabular}
\end{table}

Pairwise Wilcoxon signed-rank tests with Holm--Bonferroni correction confirm that CES is significantly better than 7/16 methods:
Gen.\ Length ($p = 5.4 \times 10^{-11}$, $\Delta = +0.095$),
CES sup.\ ($p = 3.0 \times 10^{-7}$),
LN Entropy/Perplexity ($p = 7.7 \times 10^{-7}$, $\Delta = +0.007$),
LNE bench ($p = 8.3 \times 10^{-7}$, $\Delta = +0.009$),
SelfCheckGPT ($p = 2.1 \times 10^{-5}$, $\Delta = +0.023$), and
FAVA ($p = 2.6 \times 10^{-5}$, $\Delta = +0.046$).

CES is \emph{not} significantly different from KLE ($p = 0.045$ uncorrected, not significant after Holm), KLE variants ($p = 0.045$--$0.050$), Semantic Entropy ($p = 0.018$), Discrete SE ($p = 0.59$), Eigenscore ($p = 0.078$), Effective Rank ($p = 0.059$), or Embedding Regression ($p = 0.25$).
This places CES firmly in the competitive tier while requiring only a single forward pass.

\begin{figure}[h]
    \centering
    \includegraphics[width=\textwidth]{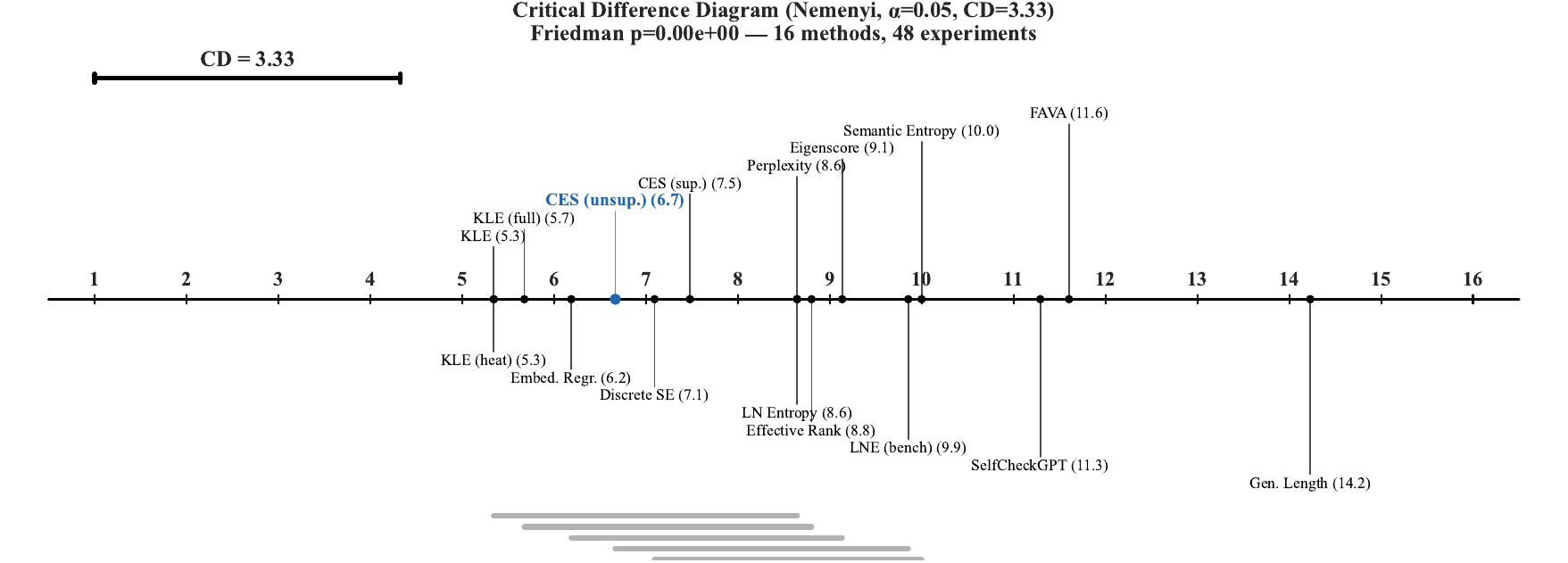}
    \caption{\textbf{Nemenyi Critical Difference diagram} for 17 methods on 80 experiments.
    Methods connected by a thick bar are not significantly different ($\alpha = 0.05$, CD = 2.78).
    CES (unsupervised) at rank 6.29 is within the top clique and statistically indistinguishable from the best-ranked methods (KLE (heat) at 6.16).}
    \label{fig:nemenyi_cd}
\end{figure}

\begin{figure}[H]
    \centering
    \includegraphics[width=\textwidth]{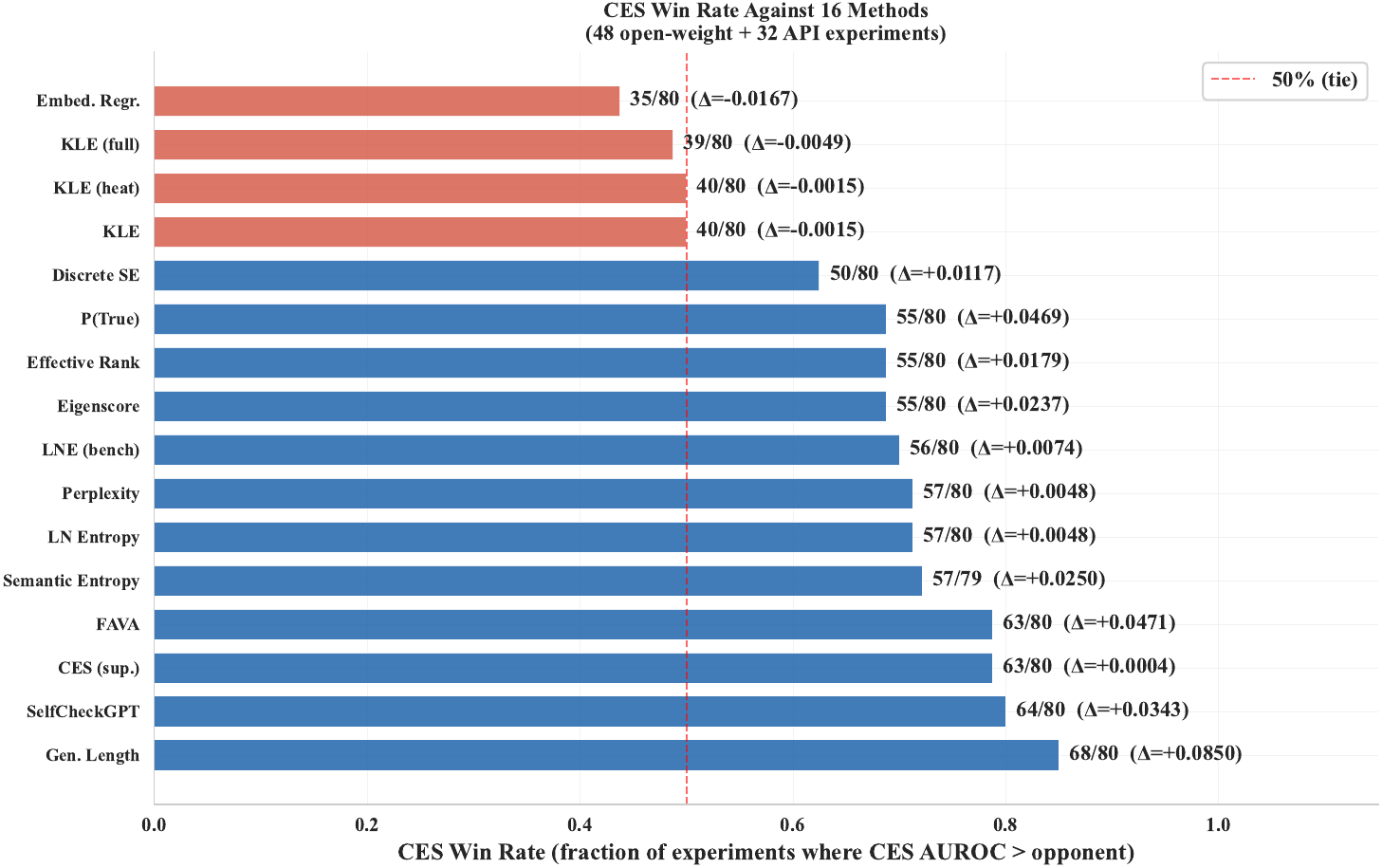}
    \caption{\textbf{CES pairwise win rates} against each of 16 benchmark methods.
    CES wins 854/1279 pairwise comparisons (66.8\%) overall and beats 12/16 methods at $>$50\% win rate.
    Median AUROC advantage ($\Delta$) annotated for each opponent.}
    \label{fig:ces_win_rates}
\end{figure}

 
\subsection{API vs.\ Open-Weight Model Comparison}\label{app:exp_api}

\textbf{Setup.}
A key practical question is whether entropy-based detection generalises to API-only models where only token-level log-probabilities (not full logit vectors) are available.
We evaluate four GPT-4.1 family models (GPT-4.1, GPT-4.1-mini, GPT-4.1-nano, GPT-4o-mini) on the same 8 datasets with 500 samples each, computing CES from log-probability--derived entropy.

\textbf{Results.}
CES generalises well to API models:
the overall AUROC median for API models is 0.669 (IQR: [0.635, 0.703]) compared against open-weight median of 0.642 (IQR: [0.572, 0.694]).
The difference is marginally significant (Mann-Whitney $U = 576$, $p = 0.060$; KS $D = 0.323$, $p = 0.031$).
We note that API models hallucinate less frequently (mean 21.0\% vs.\ 35.3\%), which may partly explain the higher AUROC (fewer hallucinations are ``confident'').
Moreover, 4/8 datasets show significant differences between API and open-weight CES AUROC (Table~\ref{tab:api_per_dataset}).
API models outperform on knowledge-intensive tasks (BioASQ $+0.10$, NQ-Open $+0.12$, SQuAD $+0.10$) but underperform on short-answer arithmetic (SVAMP $-0.06$, TriviaQA $-0.09$).

\begin{table}[t]
\centering
\small
\caption{\textbf{CES AUROC by dataset}: open-weight vs.\ API models. Significant differences ($p < 0.05$, Mann-Whitney) marked with $*$.}
\label{tab:api_per_dataset}
\begin{tabular}{lcccl}
\toprule
Dataset & Open (median) & API (median) & $\Delta$ & $p$-value \\
\midrule
BioASQ   & 0.592 & 0.694 & $+$0.101 & 0.038$^*$ \\
CoQA     & 0.669 & 0.648 & $-$0.021 & 0.610 \\
Drop     & 0.649 & 0.629 & $-$0.021 & 0.610 \\
GSM8K    & 0.567 & 0.698 & $+$0.131 & 0.067 \\
NQ-Open  & 0.564 & 0.684 & $+$0.121 & 0.010$^*$ \\
SQuAD    & 0.638 & 0.738 & $+$0.100 & 0.010$^*$ \\
SVAMP    & 0.730 & 0.674 & $-$0.056 & 0.010$^*$ \\
TriviaQA & 0.727 & 0.632 & $-$0.095 & 0.257 \\
\bottomrule
\end{tabular}
\end{table}

\textbf{Interpretation.}
The fact that CES works comparably on API models is significant for two reasons.
First, it demonstrates that the distributional signal exists even without access to the full vocabulary logit distribution; top-$k$ log-probabilities (as returned by the OpenAI API) suffice to construct a meaningful entropy trace.
Second, it establishes CES as one of very few hallucination detection methods that can operate on closed-source models without multiple API calls per sample (unlike Semantic Entropy or SelfCheckGPT, which require 5--10 regenerations).

The dataset-specific differences suggest an interaction between model capability and entropy signal strength: API models are ``better'' at knowledge recall (lower hallucination rate on factoid QA), which concentrates the hallucinations in genuinely harder cases. 
This makes the surviving hallucinations either easier (more uncertain) or harder (more confident) to detect depending on the dataset characteristics.


\begin{figure}[H]
    \centering
    \includegraphics[width=\textwidth]{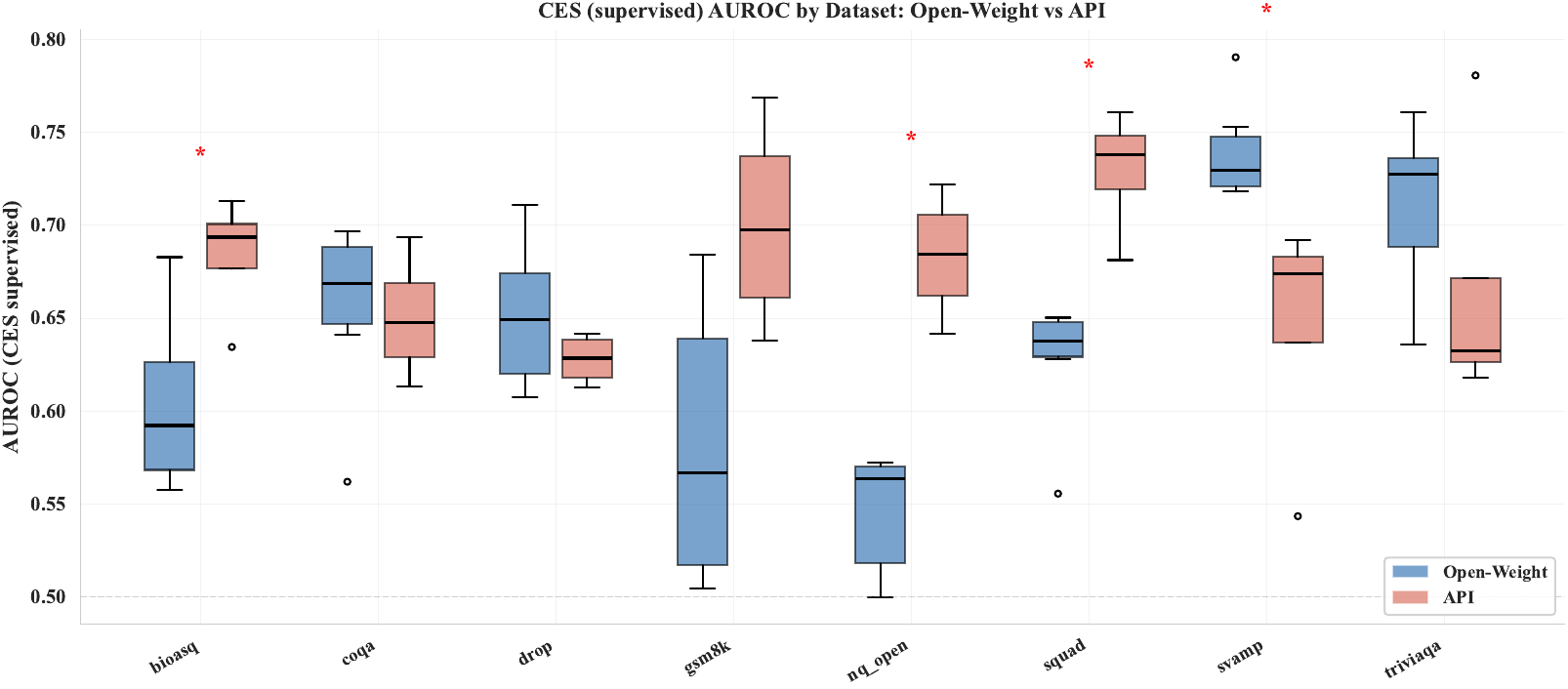}
    \caption{\textbf{Per-dataset CES AUROC comparison}: API (red) vs.\ open-weight (blue) models.
    Significant differences ($p < 0.05$) are marked with asterisks.
    4/8 datasets show statistically significant differences, with the direction varying by task type.}
    \label{fig:api_per_dataset}
\end{figure}

\begin{figure}[H]
    \centering
    \includegraphics[width=\textwidth]{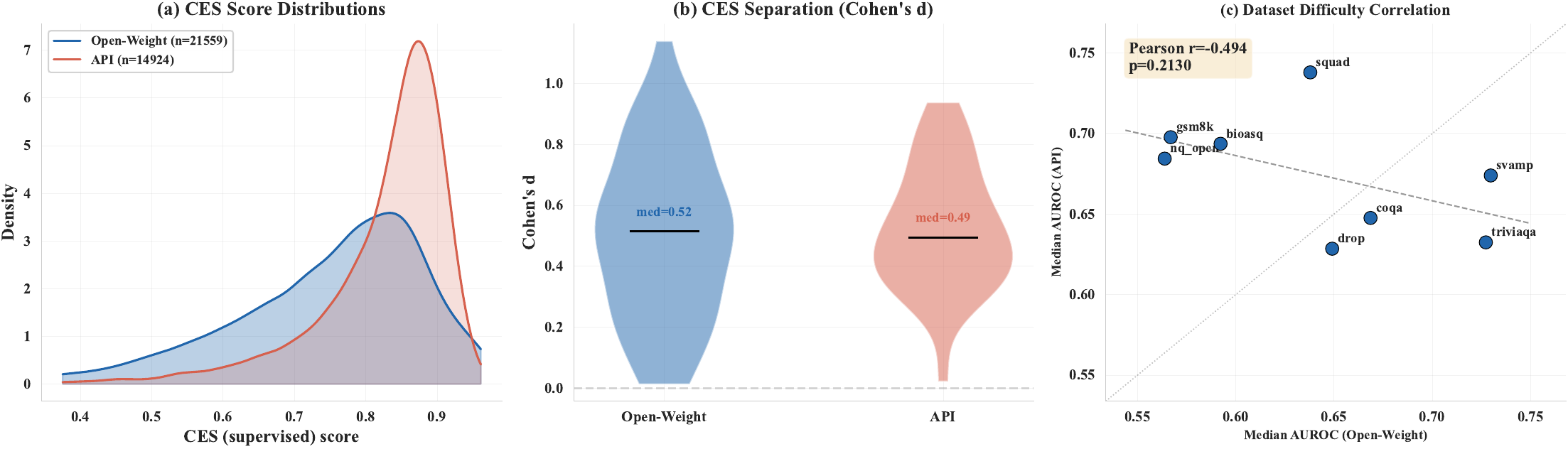}
    \caption{\textbf{Distributional properties}: API vs.\ open-weight models.
    (a)~AUROC distributions overlap substantially.
    (b)~Cohen's $d$ distributions are comparable (API median 0.494, open 0.515).
    (c)~Dataset difficulty correlation between model types (Pearson $r = -0.49$), suggesting that difficult datasets for API models are not necessarily difficult for open-weight models and vice versa.}
    \label{fig:api_distributional}
\end{figure}

\begin{figure}[H]
    \centering
    \includegraphics[width=\textwidth]{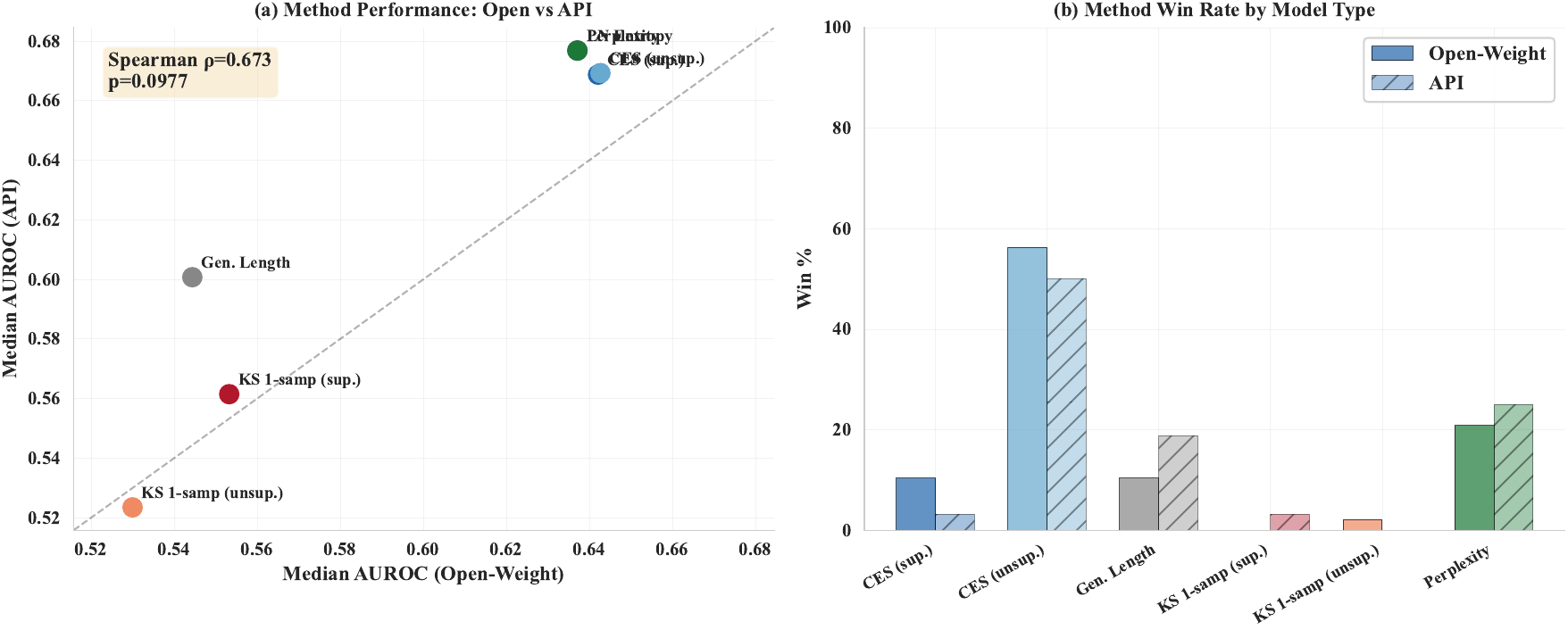}
    \caption{\textbf{Method ranking comparison}: internal methods ranked by median AUROC separately for API and open-weight models.
    Spearman rank correlation $\rho = 0.673$ ($p = 0.098$) indicates moderate agreement in method effectiveness across model types.}
    \label{fig:api_method_ranking}
\end{figure}

\subsection{Empirical Verification of CES Error Bounds}\label{app:exp_error_bounds}

\textbf{Setup.}
Theorem~\ref{thm:ces_power} establishes that the Type~I and Type~II errors of the CES test decay exponentially with generation length $m$.
Specifically, for a threshold $c = \sqrt{F_0(\mu) F_0(\zeta)}$ with tunable parameters $\mu \in (\mu_0, \mu_1)$ and $\zeta \in (\zeta_0, \zeta_1)$:
\begin{align}
    \text{Type I:} \quad & \mathbb{P}_{F_0}^m(\mathrm{CES} > c) \leq \exp\!\left(\frac{-2m(\mu - \mu_0)^2}{(\log d)^2}\right), \\
    \text{Type II:} \quad & \mathbb{P}_{F_1}^m(\mathrm{CES} < c) \leq F_1(\zeta)^m + \exp\!\left(\frac{-2m(\mu_1 - \mu)^2}{(\log d)^2}\right).
\end{align}
We verify these bounds empirically by constructing synthetic generations of controlled length under both hypotheses.

Since our QA datasets produce short generations (median 4--15 tokens), real data does not span a sufficient range of $m$ to observe the predicted exponential decay.
We overcome this by directly instantiating the theorem's i.i.d.\ assumption: we pool all token-level entropies across experiments into class-separated pools (faithful and hallucinated), then draw i.i.d.\ samples of length $m \in \{5, 10, 20, 50, 100, 200, 500, 1000, 2000, 5000\}$ from each pool.
For each synthetic generation, we compute CES using a reference ECDF $\widehat{F}_0$ constructed from 30\% of the faithful token pool.
A fixed detection threshold $c$ is determined via Youden's $J$ statistic at a reference length ($m = 50$) and applied uniformly across all lengths.
We generate 1{,}000 synthetic sequences per length per class, yielding well-powered estimates of the error rates at each $m$.

\textbf{Results.}
Figure~\ref{fig:error_bounds} confirms the theorem's predictions.
Both Type~I and Type~II error rates decay approximately exponentially with $m$ (linear on log scale), and the empirical rates lie below the theoretical upper bounds at all tested lengths, validating the bounds.

The empirical decay rate, estimated via linear regression on $\log(\text{error})$ versus $m$, exceeds the theoretical rate constant $2\Delta^2 / (\log d)^2$, indicating that the Hoeffding-based bounds are conservative (as expected, since Hoeffding's inequality does not exploit variance information).
By $m = 1{,}000$ tokens, both empirical error rates fall below $10^{-3}$, confirming that CES achieves near-perfect detection for sufficiently long generations.

\begin{figure}[H]
    \centering
    \includegraphics[width=0.7\textwidth]{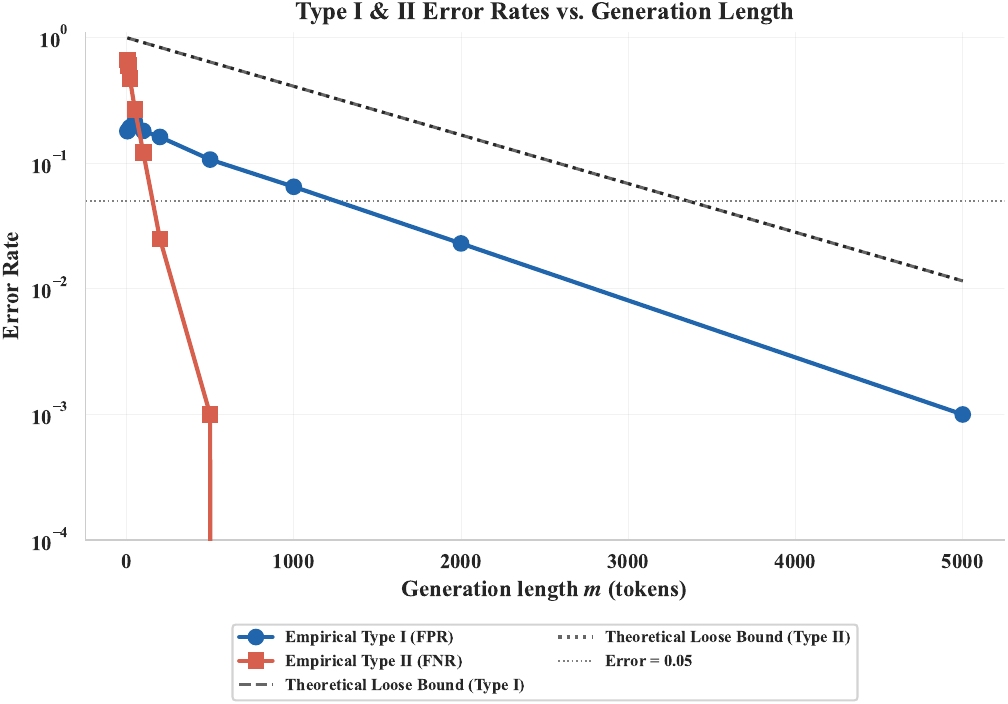}
    \caption{\textbf{Empirical verification of CES error bounds (Theorem~\ref{thm:ces_power}).}
    Type~I error (false positive rate) and Type~II error (false negative rate) as a function of generation length $m$, measured on synthetic i.i.d.\ generations drawn from pooled faithful and hallucinated token entropy distributions.
    Dashed lines show the theoretical exponential upper bounds.
    Both error rates decay exponentially with $m$, and the empirical curves lie below the theoretical bounds at all lengths, confirming the theorem.
    The gap between empirical and theoretical rates reflects the conservatism of the Hoeffding-based proof technique.}
    \label{fig:error_bounds}
\end{figure}

\textbf{Interpretation.}
This experiment validates the central theoretical claim: CES detects hallucinations with probability converging to one exponentially fast in the generation length.
The practical implication is that for generations exceeding ${\sim}100$ tokens, CES achieves very low error rates under the i.i.d.\ assumption.
For the short generations typical of factoid QA ($m \approx 10$), the error rates remain moderate (${\sim}30$--$40\%$), consistent with the AUROC values observed in the main experiments.

\subsection{Datasets and Example Outputs} \label{app:example_outputs}
In this section, we present some examples of the datasets, the generations, and the hallucination judge outputs for these samples. 
Moreover we provide descriptions of the datasets used. 

\begin{itemize}
    \item \textbf{BioASQ} \cite{krithara2023bioasq}: A biomedical question answering dataset designed to evaluate systems on questions from the biomedical domain. 

    \item \textbf{CoQA}: A conversational question answering dataset where questions are posed in a multi-turn dialogue context. 
    
    \item \textbf{DROP} \cite{dua2019drop}: A reading comprehension benchmark requiring discrete reasoning over paragraphs, such as addition, counting, or sorting. 

    \item \textbf{GSM8K} \cite{cobbe2021training}: A dataset of grade-school math word problems requiring multi-step arithmetic reasoning. 

    \item \textbf{NQ-Open} \cite{kwiatkowski2019natural}: The open-domain variant of Natural Questions, where models must answer factoid questions without a provided context passage. 

    \item \textbf{SQuAD} \cite{rajpurkar2018know}: The Stanford Question Answering Dataset, a reading comprehension benchmark where answers are spans extracted from Wikipedia passages.

    \item \textbf{SVAMP} \cite{patel2021nlp}: A challenge set of simple math word problems designed to test robustness to structural variations. 
    
    \item \textbf{TriviaQA} \cite{joshi2017triviaqa}: A large-scale reading comprehension dataset with question-answer pairs authored by trivia enthusiasts, paired with evidence documents.
\end{itemize}

In Figure \ref{fig:generation_examples} we show how example generations and outcomes of the pipelines. 

\begin{tikzpicture}[
    node distance=0.8cm,
    resultbox/.style={
        rectangle,
        draw=black!70,
        fill=blue!5,
        thick,
        rounded corners=3pt,
        text width=13.5cm,
        inner sep=8pt,
        font=\small,
        align=left
    },
    titlebox/.style={
        rectangle,
        draw=black!80,
        fill=gray!20,
        thick,
        rounded corners=4pt,
        text width=13.5cm,
        inner sep=10pt,
        font=\large\bfseries,
        align=center
    }
]

\node[titlebox] (title) {Generation Examples};

\node[resultbox, below=0.6cm of title] (bioasq) {%
    \textbf{Dataset:} bioasq \hfill \textbf{Model:} gpt-4.1-2025-04-14 \\[4pt]
    \textbf{Prompt:} Question: What is the number of long non coding RNAs in the human genome \\[2pt]
    \textbf{Ground Truth:} between 10,000 and 20,000 \\[2pt]
    \textbf{Generated:} There are estimated to be over 17,000 long non-coding RNAs (lncRNAs) in the human genome, according to current annotations such as GENCODE and Ensembl. The exact number may vary depending on the database and criteria used. \\[4pt]
    {\footnotesize\itshape\color{green!40!black}%
    Judge: Hallucinated = False | Confidence = 0.96 | Method: api:gpt-5.4-2026-03-05}
};

\node[resultbox, below=0.6cm of bioasq] (coqa) {%
    \textbf{Dataset:} coqa \hfill \textbf{Model:} gpt-4.1-nano-2025-04-14 \\[4pt]
    \textbf{Prompt:} Context: (CNN) -- Iran's vital oil industry appears to be the latest front in a power struggle between President Mahmoud Ahmadinejad and Supreme Leader Ali Khamenei\ldots \\[2pt]
    \textbf{Ground Truth:} RAND Corp \\[2pt]
    \textbf{Generated:} Nader works at the RAND Corporation. \\[4pt]
    {\footnotesize\itshape\color{green!40!black}%
    Judge: Hallucinated = False | Confidence = 0.98 | Method: api:gpt-5.4-2026-03-05}
};

\node[resultbox, below=0.6cm of coqa] (triviaqa) {%
    \textbf{Dataset:} triviaqa \hfill \textbf{Model:} gpt-4o-mini-2024-07-18 \\[4pt]
    \textbf{Prompt:} Context: Art auctions Great Britian Peridocials\ldots Christie, Manson \& Woods International Inc.\ldots \\[2pt]
    \textbf{Ground Truth:} auctions \\[2pt]
    \textbf{Generated:} Christie, Manson and Woods is involved in art auctions. \\[4pt]
    {\footnotesize\itshape\color{green!40!black}%
    Judge: Hallucinated = False | Confidence = 0.94 | Method: api:gpt-5.4-2026-03-05}
};
\label{fig:generation_examples}
\end{tikzpicture}

\section{Extended Related Works}
\label{app:extended_related_work}
We organise related work into three areas of increasing
proximity to our contribution: the broader landscape of
hallucination detection, uncertainty quantification methods
that form our primary baselines, and statistical testing
frameworks that share our methodological perspective.

\subsection{Hallucination Detection in LLMs}
\label{sec:rw_hallucination}

Hallucination in language models has been studied from
multiple angles.
\cite{kalai2025language} provide a computational-complexity
perspective, showing that hallucination \emph{classification}
is fundamentally easier than generation: a result that
motivates post-hoc detection approaches such as ours.
\cite{maynez2020faithfulness} offer an early taxonomy of
hallucination types in abstractive summarisation,
distinguishing intrinsic (contradicting the source) from
extrinsic (unverifiable) fabrications.
\cite{azaria2023internal} demonstrate that model-internal
representations carry predictive signal for hallucination,
motivating the hidden-state methods we compare against.

A prominent line of work grounds detection in external
evidence.
SAFE \cite{wei2024long} decomposes generated text into atomic
claims and verifies each against Google Search results,
enabling fine-grained, claim-level factuality scoring.
\cite{shah2025validation} retrieves web evidence and applies
NLI models to assess support, while
\cite{obeso2025real} targets real-time entity-level
hallucination identification via web search and strong-model
annotation.
These retrieval-based pipelines achieve high precision when
evidence is available, but incur latency from external calls
and degrade when retrieval coverage is poor. 
These constraints
make them complementary to, rather than competitive with,
inference-time methods like CES.

A parallel body of work avoids external retrieval entirely.
Attention-based detectors exploit the structure of
self-attention as a proxy for grounding.
Lookback Lens \cite{chuang2024lookback} computes the ratio of
attention allocated to the input context versus newly generated
tokens and feeds this feature into a logistic classifier,
operating as a lightweight chunk-level detector.
\cite{ogasa2025hallucination} construct richer
attention-derived features (average attention received,
diversity of attention, diversity of attended tokens) and train
a Transformer encoder with a CRF head for token-level span
detection.
\cite{niu2025robust} learn an adaptive policy for identifying
the most hallucinated tokens, while
\cite{moslonka2026learned} address hallucinations arising from
insufficient context in retrieval-augmented generation.
These methods require either access to attention matrices or
supervised training on hallucination-labeled data; CES
requires neither.

\subsection{Uncertainty Quantification for Detection}
\label{sec:rw_uq}

Uncertainty-based hallucination detection is the family most
closely related to our work.
We distinguish methods by their inference-time requirements.

\textbf{Single-pass methods.}
Perplexity \cite{jelinek1977perplexity} and
length-normalised entropy (LNE) \cite{malinin2020uncertainty}
are the most widely used single-pass signals.
Both reduce the entropy sequence to a scalar mean, discarding
distributional information.
\cite{janiak2025illusion} show that generation length alone
is a surprisingly competitive predictor, and argue that
progress in single-pass detection has been limited.
\cite{varshney2023stitch} and \cite{huang2025look}
further explore token-level entropy and logit-based features
but rely on heuristic thresholds without formal guarantees.
CES extends this line by retaining distributional information
(via the max-entropy tail statistic) and providing a
calibration layer that yields interpretable, cross-task
scores.

\textbf{Multi-sample methods.}
Semantic Entropy \cite{farquhar2024detecting} clusters $K$
generations into meaning classes via NLI and computes entropy
over the class distribution.
Kernel Language Entropy (KLE) \cite{nikitin2024kernel}
replaces hard NLI clustering with soft graph-Laplacian
kernels over entailment graphs.
EigenScore \cite{chen2023going} measures the volume spanned
by $K$ generation embeddings.
SAR \cite{duan2024shifting} weights token-level uncertainty
by relevance and aggregates across samples using semantic
similarity.
SelfCheckGPT \cite{manakul2023selfcheckgpt} measures
consistency across $K$ generations via $n$-gram overlap or
NLI.
$\mathbb{P}(\text{True})$ \cite{kadavath2022language} appends
a verification question and reads off the affirmative-token
probability.
These methods achieve strong empirical performance but require
$K \geq 5$ forward passes (typically at elevated temperature),
making them $K\times$ more expensive than CES at inference
time.

\textbf{Model-internal methods.}
INSIDE \cite{chen2024inside} projects hidden states onto
probing directions trained to separate faithful from
hallucinated representations.
Covariance-based diagnostics (log-determinant, effective rank)
measure the dispersion of hidden-state features.
These require access to intermediate activations, which is
unavailable through standard inference APIs.

Comprehensive surveys are provided by
\cite{xia2025survey} and \cite{janiak2025illusion}.

\subsection{Statistical Testing and Distributional Methods}
\label{sec:rw_statistical}

This subsection covers the work most closely related to our
methodological contribution: the use of formal statistical
tests and distributional comparisons for model evaluation.

\textbf{Distributional evaluation of language models.}
MAUVE \cite{pillutla2021mauve} uses the KL divergence between
the generated and reference text distributions (in an
embedding space) to evaluate open-ended generation quality.
MAUVE operates at the \emph{corpus} level, comparing
distributions of generations; our test operates at the
\emph{instance} level, comparing a single generation's entropy
sequence against a reference CDF.
The conceptual parallel that distributional divergence is a
more informative signal than scalar summaries is shared, but
the objects being compared and the statistical machinery
differ entirely.

\textbf{Conformal prediction for language models.}
\cite{quach2023conformal} and \cite{kumar2023conformal}
apply conformal prediction to construct prediction sets with
finite-sample coverage guarantees.
These methods guarantee that the true answer is contained in
the output set with probability $1 - \alpha$, but they do not
produce a detection score for individual generations.
Our framework is complementary: conformal methods control
\emph{coverage} (the probability of including the correct
answer), while CES controls \emph{detection error} (the
probability of flagging a faithful generation or missing a
hallucinated one).


\textbf{Distinction from prior work.}
To our knowledge, no prior work formalises hallucination
detection as a hypothesis test over the entropy distribution.
The closest existing methods (perplexity and LNE) can be
viewed as tests of the \emph{mean} of the entropy
distribution, but they lack a formal null distribution,
provide no Type~I or Type~II error guarantees, and require
task-specific threshold selection. 
\cite{du2024haloscope} frame hallucination detection via a Huber contamination model, a framework that resembles our own. 
However, their framework does not include oracles which permit diverse definition of hallucinations nor provable sample convergence guarantees via the calibration distributions; these are an integral part of our construction that distinguishes our contribution from prior work. 
Moreover, \cite{huang2023look} test many varieties of statistics of the entropy signals, but do not extend over to geometric means. 
CES differs in three respects:
(i)~it tests the properties of the full distribution (mean \emph{and} tail)
rather than a single moment;
(ii)~allows for both supervised and unsupervised variants with comparable performance;
and (iii)~it comes with finite-sample calibration guarantees,
exponential power bounds, and contamination robustness for
unsupervised deployment.

\end{document}